\documentclass{article}
\usepackage[a4paper, margin=1in]{geometry}
\usepackage{graphicx} % Required for inserting images
\usepackage{multirow}%
\usepackage{amsmath,amssymb,amsfonts}%
\usepackage{amsthm}%
\usepackage{mathrsfs}%
\usepackage[title]{appendix}%
\usepackage{xcolor}%
\usepackage{textcomp}%
\usepackage{manyfoot}%
\usepackage{booktabs}%
\usepackage{algorithm}%
\usepackage{algorithmicx}%
\usepackage{algpseudocode}%
\usepackage{listings}%

\usepackage{caption}
\usepackage{subcaption}
\usepackage{adjustbox}
\usepackage{authblk}
\usepackage{hyperref}
\usepackage[dvipsnames]{xcolor}
\hypersetup{colorlinks=true, allcolors=NavyBlue}

\newcommand\subsubsubsection{\@startsection{subsubsubsection}{4}{\z@}%
                                     {-3.25ex\@plus -1ex \@minus -.2ex}%
                                     {1.5ex \@plus .2ex}%
                                     {\normalfont\normalsize\bfseries}}
\makeatother

\theoremstyle{thmstyleone}%
\newtheorem{theorem}{Theorem}%  meant for continuous numbers
\newtheorem{lemma}[theorem]{Lemma}% 

\theoremstyle{thmstyletwo}%

\theoremstyle{thmstylethree}%

\makeatletter % added <<<<<<<<<<<<<<<<<<<<<<
\def\@acknow{}%
\long\def\EarlyAcknow#1 \par{%
\def\@acknow{\abstractfont\abstracthead*{Acknowledgments}% or use \subabstracthead <<<<
#1\par}}%

\def\printabstract{\ifx\@acknow\empty\else\@acknow\fi\par%
    \ifx\@abstract\empty\else\@abstract\fi\par}
\makeatother

\renewcommand{\pmb}[1]{\boldsymbol{#1}}
\renewcommand{\P}{\pmb{P}}
\newcommand{\B}{\pmb{B}}
\newcommand{\T}{\pmb{\theta}}
\renewcommand{\b}{\pmb{\beta}}
\renewcommand{\L}{\pmb{L}}
\newcommand{\I}{\pmb{I}}
\newcommand{\Y}{\pmb{Y}}

\DeclareMathOperator*{\argmin}{argmin}
\DeclareMathOperator{\rank}{rank}

\title{Deflation-Free Sparse Optimal Scoring}

\author[1]{{Sharmin} {Afroz}}
\author[2]{{Brendan} {Ames}}
\affil[1]{{Department of Computational $\&$ Informational Sciences}, {Stillman College}, {{Tuscaloosa}, {AL}, {USA}}, \url{safroz@stillman.edu}}
\affil[2]{{School of Mathematical Sciences}, {University of Southampton}, { {Southampton}, {UK}}, \url{b.ames@soton.ac.uk}}

\begin{document}
%+++++++++++++++++++++++++++++++++++++++++++++++++++++++++++++++
%+++++++++++++++++++++++++++++++++++++++++++++++++++++++++++++++
%+++++++++++++++++++++++++++++++++++++++++++++++++++++++++++++++
%Title
\title{Deflation-Free Sparse Optimal Scoring}
\maketitle
%+++++++++++++++++++++++++++++++++++++++++++++++++++++++++++++++
%+++++++++++++++++++++++++++++++++++++++++++++++++++++++++++++++
\begin{abstract}
\emph{Sparse Optimal Scoring (SOS)} reformulates linear discriminant analysis to enable feature selection through elastic net regularization, making it well-suited for high-dimensional settings where the number of features exceeds observations. Most existing SOS methods use deflation-based strategies that compute discriminant vectors sequentially, which can propagate errors and produce suboptimal solutions. We propose a novel approach that estimates all discriminant vectors simultaneously under an explicit global orthogonality constraint, which we call \emph{Deflation-Free Sparse Optimal Scoring (DFSOS)}. DFSOS combines Bregman iteration with orthogonality-constrained optimization, decomposing the problem into tractable subproblems for scoring vectors, discriminant vectors, and orthogonality enforcement. We establish convergence to stationary points of the augmented Lagrangian under mild conditions. Extensive experiments using synthetic data and real-world time series data demonstrate that DFSOS achieves classification accuracy comparable to or better than existing deflation-based methods. These results indicate that deflation-free approaches offer a robust and effective framework for sparse discriminant analysis in high-dimensional problems.
\end{abstract}

%+++++++++++++++++++++++++++++++++++++++++++++++++++++++++++++++

%+++++++++++++++++++++++++++++++++++++++++++++++++++++++++++++++
%+++++++++++++++++++++++++++++++++++++++++++++++++++++++++++++++
%+++++++++++++++++++++++++++++++++++++++++++++++++++++++++++++++
%+++++++++++++++++++++++++++++++++++++++++++++++++++++++++++++++
\section{Introduction}
%+++++++++++++++++++++++++++++++++++++++++++++++++++++++++++++++
%+++++++++++++++++++++++++++++++++++++++++++++++++++++++++++++++
%+++++++++++++++++++++++++++++++++++++++++++++++++++++++++++++++
%+++++++++++++++++++++++++++++++++++++++++++++++++++++++++++++++

% \textbf{Review LDA}\\
Linear Discriminant Analysis (LDA) is a classical supervised learning technique widely used for classification, dimension reduction, and data visualization. Given a dataset with $n$ observations, $p$ features, and $K$ classes, LDA seeks a low-dimensional linear subspace in which class separation is maximized. Under the multivariate Gaussian model, LDA assumes that observations from class $i$ follow a normal distribution $N(\pmb{\mu}_i, \pmb{\Sigma})$ with a common covariance matrix across classes. Classification is performed by maximizing posterior class probabilities, which leads to linear decision boundaries. This Bayesian formulation yields discriminant functions of the form
\begin{equation*}
    \delta_i(x) = x^T\pmb{\Sigma}^{-1}\pmb{\mu}_i - \frac{1}{2}\mu_i^T\pmb{\Sigma}^{-1}\pmb{\mu}_i +\text{log }\pi_i.
\end{equation*}
where $\pi_{i}$ denotes the prior probability of class $i$.
We will focus on an equivalent framework for LDA based on \emph{optimal scoring} \cite{hastie1994flexible, hastie1995penalized}. Here, LDA is reformulated as a linear regression problem with numerical scores assigned to class labels. Under appropriate constraints, this optimal scoring approach is known to coincide with the Bayesian formulation (see \cite[Section 4.3]{hastie2009elements} and \cite{mai2013note}).

%%%%%%%%%%%%%%%%%%%%%%%%%%%%%%%%%%%%%%%%%%%%%%%%%%%%%%%%%%
Fisher proposed an alternative formula of LDA based on maximizing the Rayleigh quotient of the between-class and within-class scatter matrices of the training data. This reduces LDA to a generalized eigenvalue problem.
Unfortunately, when the number of features $p$ is larger than the number of observations $n$, the sample covariance matrix and sample within-class scatter matrices are singular; in this case, the change of basis/variables needed to perform LDA is impossible to compute. This shortcoming of LDA inspired \emph{penalized} or \emph{sparse discriminant analysis} (SDA) approaches which introduce regularization, particularly sparsity-inducing penalties, into the discriminant analysis framework so that only a subset of informative features is used for classification  \cite{hastie1995penalized, zou2005regularization, shao2011sparse, fan2012road, mai2012direct, guo2007regularized, cai2011direct, witten2011penalized, clemmensen2011sparse}.

% A major advance in this direction was proposed by Clemmensen et al. \cite{clemmensen2011sparse}, who incorporated elastic net regularization into the optimal scoring formulation of LDA. The resulting Sparse Optimal Scoring (SOS) problem takes the form
% \begin{equation}\label{eq: Sparse Discriminant Analysis problem}
% \begin{split}
% \underset{\pmb{\beta}_i, \pmb{\theta}_i}{\text{min}}\ &\|\pmb{Y}\pmb{\theta}_i - \pmb{X}\pmb{\beta}_i\|^2 + \gamma \pmb{\beta}_i^T \pmb{\Omega} \pmb{\beta}_i + \lambda \|\pmb{\beta}_i\|_1\\
% \text{s.t.}\ &\frac{1}{n}\pmb{\theta}_i^T \pmb{Y}^T \pmb{Y}\pmb{\theta}_i = 1\\
% &\pmb{\theta}_i^T\pmb{Y}^T\pmb{Y}\pmb{\theta}_{\ell} = 0 \ \ \ \ \forall \ell < i,\\
% \end{split}
% \end{equation}
% where $\pmb{\Omega}$ is the positive definite matrix, and $\lambda$ and $\gamma$ are nonnegative tuning parameters.

% SDA is typically solved using block coordinate descent, alternating between updates of the scoring vectors $\pmb{\theta}_{i}$ and discriminant vectors $\pmb{\beta}_{i}$. Given $\pmb{\beta}_{i}$, the $\pmb{\theta}_{i}$ subproblem admits a closed-form solution, while the $\pmb{\beta}_{i}$ subproblem reduces to a convex elastic net regression problem that can be solved using proximal gradient or ADMM-based methods

% \textbf{Motivation}\\

Many existing sparse discriminant analysis methods find multiple discriminant directions using a deflationary process. After each discriminant direction is computed, the next one is forced to be orthogonal to the previous ones. While this step-by-step process is relatively easy to implement, it has important limitations. Error in the early directions can carry over and affect the later ones, which can hurt both interpretability and classification performance. In addition, this sequential procedure makes parallel computation difficult and hides the overall structure of the optimization problem.

These limitations motivate the development of  deflation-free methods. Instead of computing discriminant vectors one at a time, these methods estimate all directions at once while directly enforcing orthogonality between them. This method maintains sparsity while avoiding the numerical and statistical problems caused by sequential deflation. 
% This concept underlies the deflation-free sparse optimal scoring framework presented later.

%+++++++++++++++++++++++++++++++++++++++++++++++++++++++++++++++
%+++++++++++++++++++++++++++++++++++++++++++++++++++++++++++++++
\subsection{Contribution}
We propose a novel formulation for sparse discriminant analysis that jointly estimates multiple discriminant vectors under a global orthogonality constraint, eliminating the need for sequential deflation and avoiding error propagation inherent in deflation-based approaches. This approach is based on a modification of the sparse optimal scoring approach considered in \cite{clemmensen2011sparse, atkins2023proximal}. Specifically, we reformulate the deflation-based sparse optimal scoring problem using a global orthogonality constraint. We then propose and analyse a block-coordinate descent method for solution of this orthogonality-constrained optimization problem. We establish conditions for convergence of this heuristic and observe via empirical trials that it is competitive in terms of classification performance and computational complexity.

%+++++++++++++++++++++++++++++++++++++++++++++++++++++++++++++++
%+++++++++++++++++++++++++++++++++++++++++++++++++++++++++++++++

%+++++++++++++++++++++++++++++++++++++++++++++++++++++++++++++++
%+++++++++++++++++++++++++++++++++++++++++++++++++++++++++++++++
\subsection{Outline}
In Section 2, we review sparse optimal scoring and existing deflation-based block coordinate descent approaches, including their optimization properties and limitations.
In Section 3, we introduce the proposed deflation-free sparse optimal scoring formulation and present the DFSOS algorithm, including detailed derivations of each subproblem, implementation details, and convergence analysis.
In Section 4, we evaluate the empirical performance of DFSOS on synthetic Gaussian data and real time-series classification benchmarks, comparing accuracy, sparsity, and computational efficiency with existing methods.
Finally, Section 5 concludes the paper with a discussion of implications, limitations, and directions for future research.
%+++++++++++++++++++++++++++++++++++++++++++++++++++++++++++++++
%+++++++++++++++++++++++++++++++++++++++++++++++++++++++++++++++

%+++++++++++++++++++++++++++++++++++++++++++++++++++++++++++++++
%+++++++++++++++++++++++++++++++++++++++++++++++++++++++++++++++
%+++++++++++++++++++++++++++++++++++++++++++++++++++++++++++++++
%+++++++++++++++++++++++++++++++++++++++++++++++++++++++++++++++
\section{Sparse Optimal Scoring}
\subsection{The Optimal Scoring Problem}
%+++++++++++++++++++++++++++++++++++++++++++++++++++++++++++++++
%+++++++++++++++++++++++++++++++++++++++++++++++++++++++++++++++
%+++++++++++++++++++++++++++++++++++++++++++++++++++++++++++++++
%+++++++++++++++++++++++++++++++++++++++++++++++++++++++++++++++
% \td{Define optimal scoring}\\
As mentioned previously, the \textit{optimal scoring problem} seeks to recast the classification problem as a regression problem by using a sequence of scorings to turn the categorical class-labels into quantitative variables. Here, we formally discuss the specialization of this framework to linear discriminant analysis.

Let $\pmb{X} \in \mathbb{R}^{n \times p}$ denote a data matrix with rows encoding $n$ observations of $p$-dimensional data vectors and let $\pmb{Y}$ be an $n \times K$ indicator matrix for the $K$ classes. It is defined as $Y_{ij} = 1$ if the $i$th observation belongs to the $j$th class, $Y_{ij} = 0$ otherwise.  Let $\pmb{\theta}_i \in \mathbb{R}^K$ denote the $i$th scoring vector and $\pmb{\beta}_i \in \mathbb{R}^p$ be the $i$th discriminant vector.
The optimal scoring problem seeks to jointly optimize the scoring-discriminant vector $(\pmb{\theta}_i, \pmb{\beta}_i)$ so that the sum of squared differences between class labels encoded by $\pmb{\theta}_i$ and the linear regression model encoded by $\pmb{\beta}_i$ is minimized:
\begin{equation}\label{general Optimal Scoring Problem}
\begin{array}{cl}
\displaystyle{\min_{\pmb{\beta}_i, \pmb{\theta}_i}}\ &\|\pmb{Y}\pmb{\theta}_i - \pmb{X}\pmb{\beta}_i\|^2\\
\text{s.t.} &\frac{1}{n}\pmb{\theta}_i^T \pmb{Y}^T \pmb{Y}\pmb{\theta}_i = 1\\
&\pmb{\theta}_i^T\pmb{Y}^T\pmb{Y}\pmb{\theta}_{\ell} = 0 \ \ \ \ \forall \ell < i.\\
\end{array}
\end{equation}
We assume that the scoring vector $\pmb{\theta}_i$ is normalized and orthogonal to
the previously computed $i -1$ scoring vectors to make the problem well-posed. This process is inherently deflationary as we must compute scoring-discriminant vector pairs one by one.

%+++++++++++++++++++++++++++++++++++++++++++++++++++++++++++++++
%+++++++++++++++++++++++++++++++++++++++++++++++++++++++++++++++
% SOS problem.
LDA has been observed to yield accurate classifiers when applied to data with a small number of predictor variables, particularly for Gaussian data when each class has a shared population covariance matrix. However, it can overfit the data in problems with large numbers of highly correlated variables. Conversely, when the class boundaries are nonlinear it can underfit the data \cite{hastie1995penalized}. Moreover, when the number of features $p$ is much larger than the number of observations $n$ i.e., $p > n$, the sample covariance matrix $\pmb{\Sigma}$, proportional to $\pmb{X}^T \pmb{X}$, does not have full rank.
To address these issues, Clemmensen et al.~\cite{clemmensen2011sparse} introduce regularization in the form of an elastic net penalty function~\cite{zou2005regularization}.
This yields the \emph{sparse} or \emph{penalized optimal scoring problem (SOS)}:
\begin{equation}\label{eq: Sparse Optimal Scoring Criterion problem}
\begin{array}{cl}
\displaystyle{\min_{\pmb{\beta}_i, \pmb{\theta}_i}} &\|\pmb{Y}\pmb{\theta}_i - \pmb{X}\pmb{\beta}_i\|^2 + \gamma \pmb{\beta}_i^T \pmb{\Omega} \pmb{\beta}_i + \lambda \|\pmb{\beta}_i\|_1\\
\text{s.t.}\ &\frac{1}{n}\pmb{\theta}_i^T \pmb{Y}^T \pmb{Y}\pmb{\theta}_i = 1\\
&\pmb{\theta}_i^T\pmb{Y}^T\pmb{Y}\pmb{\theta}_{\ell} = 0 \ \ \ \ \forall \ell < i.\\
\end{array}
\end{equation}
It utilizes both the lasso penalty ($\ell_1$-norm), to induce sparsity, and the ridge penalty ($\ell_2$-norm), to encourage selection of clusters of correlated variables \cite{zou2005regularization}. More details on the use of the naive elastic net penalty can also be found in \cite[Section 18.4]{hastie2009elements}. Note that the generalized elastic net penalty differs from the naive elastic net only in the use of the $\ell_2$-norm with respect to a different basis.

%++++++++++++++++++++++++++++++++++++++++++++++++++++++++++++++++++
%++++++++++++++++++++++++++++++++++++++++++++++++++++++++++++++++++
%++++++++++++++++++++++++++++++++++++++++++++++++++++++++++++++++++
% Deflationary approach.
\subsection{A Block Coordinate Descent Approach for Sparse Optimal Scoring}
\label{sec: BCD for SOS}
%++++++++++++++++++++++++++++++++++++++++++++++++++++++++++++++++++
%++++++++++++++++++++++++++++++++++++++++++++++++++++++++++++++++++
%++++++++++++++++++++++++++++++++++++++++++++++++++++++++++++++++++
Clemmensen et al.~\cite{clemmensen2011sparse} propose a heuristic for solving~\eqref{general Optimal Scoring Problem} using block coordinate descent with respect to $\pmb{\theta}_i$ and $\pmb{\beta}_i$.
%++++++++++++++++++++++++++++++++++++++++++++++++++++++++++++++++++
% Theta subproblem.
%++++++++++++++++++++++++++++++++++++++++++++++++++++++++++++++++++
To update $\pmb{\theta}_i$, we fix $\pmb{\beta}_i$ and solve the problem
\begin{equation}\label{theta subproblem}
\begin{array}{cl}
\min_{\pmb{\theta}_i} &\|\pmb{Y}\pmb{\theta}_i - \pmb{X}\pmb{\beta}_i\|^2 \\
\text{s.t.}\ &\frac{1}{n}\pmb{\theta}_i^T \pmb{Y}^T\pmb{Y} \pmb{\theta}_i = 1\\
&\pmb{\theta}_i^T\ \pmb{Y}^T\pmb{Y} \pmb{\theta}_{\ell} = 0 \ \ \ \ \forall \ell < i.\\
\end{array}
\end{equation}
Although this is a non-convex quadratic program in $\pmb\theta_i$,~\eqref{theta subproblem} admits a closed-form solution, given in the following theorem.

\begin{theorem}\label{Thm: optimal theta solution}
The problem (\ref{theta subproblem}) has optimal solution 
\begin{equation}
\pmb{\theta}_i^{\text{new}} = s (\pmb{I} - \pmb{Q}_i \pmb{Q}_i^T \pmb{D}) \pmb{D}^{-1} \pmb{Y}^T \pmb{X} \pmb{\beta}_i
\end{equation}
where $\pmb{D}=\frac{1}{n}\pmb{Y}^T\pmb{Y}$, $\pmb{Q}_i$ is the $K \times i$ matrix with columns consisting of the previous $i-1$ scoring vectors $\pmb{\theta}_1, \pmb{\theta}_2,...,\pmb{\theta}_{i-1}$ and the all-ones vector $\textbf{1} \in \mathbb{R}^K$, and $s$ is a proportionality constant ensuring that $\pmb{\theta}_i^T\pmb{D}\pmb{\theta}_i=1$. In particular, $\pmb{\theta}_i$ is given by
\begin{equation}
\textbf{w} = (\pmb{I} - \pmb{Q}_i \pmb{Q}_i^T \pmb{D}) \pmb{D}^{-1} \pmb{Y}^T \pmb{X}	\pmb{\beta}_i, \ \ \ \pmb{\theta}_i = \frac{\pmb{w}}{\sqrt{\pmb{w}^T \pmb{D} \pmb{w}}}.
\end{equation}
\end{theorem}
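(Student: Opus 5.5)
The plan is to reduce the problem to maximizing a linear functional over a unit sphere in a weighted inner product, and then apply Cauchy--Schwarz. Throughout I use the fact that $\pmb{D}=\frac{1}{n}\pmb{Y}^T\pmb{Y}$ is diagonal, with entries $n_k/n$ equal to the class proportions. Hence $\pmb{D}$ is positive definite, provided every class is nonempty, and $\langle \pmb{u},\pmb{v}\rangle_{\pmb{D}} := \pmb{u}^T\pmb{D}\pmb{v}$ is an inner product on $\mathbb{R}^K$.

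\textbf{Step 1: reduce to a linear objective.} Expand the objective:
\[
\|\pmb{Y}\pmb{\theta}_i-\pmb{X}\pmb{\beta}_i\|^2 = n\,\pmb{\theta}_i^T\pmb{D}\pmb{\theta}_i - 2\,\pmb{\theta}_i^T\pmb{Y}^T\pmb{X}\pmb{\beta}_i + \|\pmb{X}\pmb{\beta}_i\|^2 .
\]
On the feasible set the first term equals $n$ and the last term does not depend on $\pmb{\theta}_i$. So (\ref{theta subproblem}) is equivalent to maximizing $\pmb{c}^T\pmb{\theta}_i$, where $\pmb{c}=\pmb{Y}^T\pmb{X}\pmb{\beta}_i$. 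The feasible set consists of the $\pmb{D}$-unit vectors that are $\pmb{D}$-orthogonal to the columns of $\pmb{Q}_i$.

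Here I adopt the convention implicit in the statement: the trivial score $\textbf{1}$ is included among the constraints, i.e.\ $\pmb{\theta}_i^T\pmb{D}\textbf{1}=0$. The columns of $\pmb{Q}_i$ are then $\pmb{D}$-orthonormal. Indeed, $\textbf{1}^T\pmb{D}\textbf{1}=\sum_k n_k/n=1$, and the earlier $\pmb{\theta}_\ell$ are $\pmb{D}$-normalized, mutually $\pmb{D}$-orthogonal, and $\pmb{D}$-orthogonal to $\textbf{1}$ by the same construction. In short, $\pmb{Q}_i^T\pmb{D}\pmb{Q}_i=\pmb{I}$.

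\textbf{Step 2: identify the projector.} Set $\pmb{P}=\pmb{I}-\pmb{Q}_i\pmb{Q}_i^T\pmb{D}$. Using $\pmb{Q}_i^T\pmb{D}\pmb{Q}_i=\pmb{I}$, I will check three properties:
\begin{itemize}
\item $\pmb{P}^2=\pmb{P}$;
\item $\pmb{P}$ is self-adjoint in the $\pmb{D}$ inner product, since $\pmb{D}\pmb{P}=\pmb{P}^T\pmb{D}$;
\item $\pmb{P}\pmb{\theta}=\pmb{\theta}$ exactly when $\pmb{Q}_i^T\pmb{D}\pmb{\theta}=0$.
\end{itemize}
Together these show that $\pmb{P}$ is the $\pmb{D}$-orthogonal projector onto the feasible subspace.

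\textbf{Step 3: apply Cauchy--Schwarz.} Write $\pmb{c}^T\pmb{\theta}_i=\langle \pmb{D}^{-1}\pmb{c},\pmb{\theta}_i\rangle_{\pmb{D}}$. For feasible $\pmb{\theta}_i$ we have $\pmb{\theta}_i=\pmb{P}\pmb{\theta}_i$, so self-adjointness gives
\[
\pmb{c}^T\pmb{\theta}_i=\langle \pmb{P}\pmb{D}^{-1}\pmb{c},\pmb{\theta}_i\rangle_{\pmb{D}} = \langle \pmb{w},\pmb{\theta}_i\rangle_{\pmb{D}} \le \|\pmb{w}\|_{\pmb{D}}\,\|\pmb{\theta}_i\|_{\pmb{D}}=\sqrt{\pmb{w}^T\pmb{D}\pmb{w}},
\]
where $\pmb{w}=\pmb{P}\pmb{D}^{-1}\pmb{Y}^T\pmb{X}\pmb{\beta}_i$. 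Equality holds if and only if $\pmb{\theta}_i$ is a positive multiple of $\pmb{w}$. The vector $\pmb{w}/\sqrt{\pmb{w}^T\pmb{D}\pmb{w}}$ is feasible, because $\pmb{w}$ lies in the range of $\pmb{P}$ and is $\pmb{D}$-normalized. It attains the bound, which proves the formula with $s=1/\sqrt{\pmb{w}^T\pmb{D}\pmb{w}}>0$.

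\textbf{Main obstacle.} The main difficulty is bookkeeping rather than analysis, and it has two parts.

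\emph{The role of $\textbf{1}$.} I must justify that $\textbf{1}$ enters $\pmb{Q}_i$ and that $\pmb{Q}_i$ is $\pmb{D}$-orthonormal. Otherwise $\pmb{Q}_i\pmb{Q}_i^T\pmb{D}$ is not a projector, and the formula would need $(\pmb{Q}_i^T\pmb{D}\pmb{Q}_i)^{-1}$ inserted. I would state explicitly that the trivial constraint $\pmb{\theta}_i^T\pmb{D}\textbf{1}=0$ is part of the problem, following Clemmensen et al.

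\emph{The degenerate case $\pmb{w}=0$.} If $\pmb{w}=0$, the objective is constant on the feasible set, so every feasible $\pmb{\theta}_i$ is optimal and the normalization is undefined. I would note that the feasible set is nonempty whenever $i<K$, and treat this case as excluded, or resolved by any feasible choice.
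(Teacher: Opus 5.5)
Your proof is correct and complete. The paper states this theorem without proof, as background imported from Clemmensen et al.\ and Atkins et al., so there is no in-paper argument to compare against; your write-up supplies one.

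Compare it with the Lagrange-multiplier derivation one would usually give: impose stationarity of $\|\pmb{Y}\pmb{\theta}-\pmb{X}\pmb{\beta}_i\|^2+\psi(\pmb{\theta}^T\pmb{D}\pmb{\theta}-1)+\pmb{\nu}^T\pmb{Q}_i^T\pmb{D}\pmb{\theta}$, then eliminate $\pmb{\nu}$ using $\pmb{Q}_i^T\pmb{D}\pmb{Q}_i=\pmb{I}$. Your reduction to a linear functional followed by Cauchy--Schwarz in the $\pmb{D}$-inner product gives more. It certifies \emph{global} optimality of a nonconvex problem directly, gives uniqueness when $\pmb{w}\neq\pmb{0}$, and fixes the sign $s>0$. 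The KKT route only produces the two candidates $\pm\pmb{w}/\sqrt{\pmb{w}^T\pmb{D}\pmb{w}}$ and then needs a separate comparison of objective values to pick the minimizer.

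Your first bookkeeping point is a genuine defect of the statement, not merely a convention. Problem \eqref{theta subproblem} as written has no constraint $\pmb{\theta}_i^T\pmb{D}\pmb{1}=0$, and without it the formula can be wrong. For $i=1$ the true minimizer is $\pmb{D}^{-1}\pmb{c}/\sqrt{\pmb{c}^T\pmb{D}^{-1}\pmb{c}}$. This is not $\pmb{D}$-orthogonal to $\pmb{1}$ when $\pmb{1}^T\pmb{c}\neq 0$, so it differs from the formula whenever $\pmb{1}^T\pmb{c}\neq 0$ and $\pmb{w}\neq\pmb{0}$.

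Besides adding the constraint explicitly, there is a second way to make the theorem true exactly as stated: assume, as is standard, that $\pmb{X}$ has centred columns. Then $\pmb{1}^T\pmb{c}=\pmb{1}_n^T\pmb{X}\pmb{\beta}_i=0$, so $\pmb{D}^{-1}\pmb{c}$ is already $\pmb{D}$-orthogonal to $\pmb{1}$. By induction the earlier $\pmb{\theta}_\ell$ are too, so projecting out $\pmb{1}$ leaves $\pmb{w}$ unchanged. Your Cauchy--Schwarz argument, run with only the previous scoring vectors as columns of $\pmb{Q}_i$, then shows that the same $\pmb{\theta}_i$ solves \eqref{theta subproblem} without the extra constraint.
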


%++++++++++++++++++++++++++++++++++++++++++++++++++++++++++++++++++
% Beta update.
%++++++++++++++++++++++++++++++++++++++++++++++++++++++++++++++++++
After updating $\pmb{\theta}_i$, we fix $\pmb{\theta}_i$ and solve the $\pmb{\beta}$-subproblem
\begin{equation}\label{beta subproblem}
\pmb{\beta}_i^{\text{new}} = \argmin_{\pmb{\beta}} \|\pmb{Y}\pmb{\theta}_i - \pmb{X}\pmb{\beta}\|^2 + \gamma \pmb{\beta}^T \pmb{\Omega} \pmb{\beta} + \lambda \|\pmb{\beta}\|_1.
\end{equation}
This is a convex optimization problem in $\pmb{\beta}$ and can be solved efficiently using iterative methods. Clemmensen et al.~propose the use of \emph{least angle regression (LARS)} \cite{efron2004least, zou2005regularization, rosset2007piecewise} to solve~\eqref{beta subproblem}. Alternately, Atkins et al.~\cite{atkins2023proximal} consider solution of~\eqref{beta subproblem} using the \emph{accelerated proximal gradient method} and the \emph{alternating direction of method of multipliers}.

In either case, we solve~\eqref{beta subproblem} and then use this new value of $\pmb{\beta}$ to update $\pmb{\theta}$ and so on.
We repeat this process until the iterates $(\pmb{\theta}_i, \pmb{\beta}_i)$ converge.
%++++++++++++++++++++++++++++++++++++++++++++++++++++++++++++++++++
% Classification and convergence.
%++++++++++++++++++++++++++++++++++++++++++++++++++++++++++++++++++
After finding the optimal scoring-discriminant vector pairs, classification is performed using nearest centroid classification on the projection of the data onto the space spanned by the discriminant vectors, given by $[\pmb{X} \pmb{\beta}_1 \dots \pmb{X} \pmb{\beta}_q]$ where $q < K$ ~\cite{clemmensen2011sparse}.

This iterative approach for updating $\pmb{\theta}$ and $\pmb{\beta}$ is summarized in Algorithm \ref{SDA Alg}. The following theorem establishes that the sequence of function values for SOS given by Algorithm \ref{SDA Alg} is convergent; see \cite[Theorem 2.4]{atkins2023proximal}.

\begin{theorem}
Suppose that the sequence of iterates $\{(\pmb{\theta}^t,\pmb{\beta}^t)\}_{t=0}^\infty$ is generated by Algorithm \ref{SDA Alg}. Then the sequence of objective function values $\{F(\pmb{\theta}^t,\pmb{\beta}^t)\}_{t=0}^\infty$ defined by $F(\pmb{\theta},\pmb{\beta})  := \|\pmb{Y}\pmb{\theta} - \pmb{X}\pmb{\beta}\|^2 + \gamma\pmb{\beta}^T\pmb{\Omega\beta} + \lambda\|\pmb{\beta}\|_1$ is convergent.
\end{theorem}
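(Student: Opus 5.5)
The plan is to show that the objective values form a monotonically nonincreasing sequence that is bounded below, and then invoke the monotone convergence theorem. Lower boundedness is immediate. Each term of $F$ is nonnegative: the squared residual, the $\ell_1$ penalty, and the ridge term $\gamma\pmb{\beta}^T\pmb{\Omega}\pmb{\beta}$ (because $\pmb{\Omega}$ is positive (semi)definite and $\gamma\ge 0$). Hence $F(\pmb{\theta}^t,\pmb{\beta}^t)\ge 0$ for all $t$.

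For monotonicity, I would split one iteration of Algorithm~\ref{SDA Alg} into its two block updates and compare values along
\[
F(\pmb{\theta}^t,\pmb{\beta}^t)\;\ge\; F(\pmb{\theta}^{t+1},\pmb{\beta}^t)\;\ge\; F(\pmb{\theta}^{t+1},\pmb{\beta}^{t+1}).
\]

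\textbf{First inequality.} With $\pmb{\beta}^t$ fixed, the penalty terms do not depend on $\pmb{\theta}$. So the $\pmb{\theta}$-update minimizes exactly $F(\cdot,\pmb{\beta}^t)$ over the feasible set of~\eqref{theta subproblem}. By Theorem~\ref{Thm: optimal theta solution}, $\pmb{\theta}^{t+1}$ is a global minimizer of that problem. I also need $\pmb{\theta}^t$ to be feasible for the same problem. This holds for $t\ge 1$ because $\pmb{\theta}^t$ was produced by the same constrained update, and the previous vectors $\pmb{\theta}_1,\dots,\pmb{\theta}_{i-1}$ are fixed throughout the inner loop. I would assume the initial $\pmb{\theta}^0$ is chosen feasible, as in the algorithm. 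Feasibility of $\pmb{\theta}^t$ then gives the first inequality.

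\textbf{Second inequality.} With $\pmb{\theta}^{t+1}$ fixed, $\pmb{\beta}^{t+1}$ solves the convex problem~\eqref{beta subproblem}, whose objective is exactly $F(\pmb{\theta}^{t+1},\cdot)$. Any minimizer therefore has value at most that of $\pmb{\beta}^t$.

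The main subtlety is that, in practice, \eqref{beta subproblem} is solved only approximately, by LARS, proximal gradient, or ADMM. For the proof I would assume the $\pmb{\beta}$-step returns an exact minimizer, or at least a point not worse than $\pmb{\beta}^t$. One way to guarantee this is to warm-start a descent method such as proximal gradient at $\pmb{\beta}^t$. Proximal gradient with a suitable step size is monotone in the objective, so even an inexact solve yields $F(\pmb{\theta}^{t+1},\pmb{\beta}^{t+1})\le F(\pmb{\theta}^{t+1},\pmb{\beta}^t)$. A secondary point is the degenerate case $\pmb{w}=0$ in Theorem~\ref{Thm: optimal theta solution}. In that case every feasible $\pmb{\theta}$ gives the same value of $F(\cdot,\pmb{\beta}^t)$, so keeping $\pmb{\theta}^t$ preserves monotonicity. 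Combining the two inequalities, $\{F(\pmb{\theta}^t,\pmb{\beta}^t)\}$ is nonincreasing and bounded below by $0$, and hence convergent.
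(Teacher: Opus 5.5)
Your proposal is correct and follows essentially the same argument as the result the paper cites without reproducing, \cite[Theorem~2.4]{atkins2023proximal}: exact blockwise minimization gives $F(\pmb{\theta}^{t},\pmb{\beta}^{t})\ge F(\pmb{\theta}^{t+1},\pmb{\beta}^{t})\ge F(\pmb{\theta}^{t+1},\pmb{\beta}^{t+1})$, and $F\ge 0$ then gives convergence by the monotone convergence theorem. Your extra remarks are appropriate refinements of hypotheses the cited statement leaves implicit: the feasibility of $\pmb{\theta}^t$ for the next $\pmb{\theta}$-subproblem, the degenerate case $\pmb{w}=\pmb{0}$, and the need for an exact (or monotone, warm-started) $\pmb{\beta}$-solve.
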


It can also be established that every convergent subsequence of $\{(\pmb{\theta}^t,\pmb{\beta}^t)\}_{t=1}^\infty$ converges to a stationary point of (\ref{eq: Sparse Optimal Scoring Criterion problem}), see \cite[Theorem 2.5]{atkins2023proximal}.
\begin{theorem}
Let $\{(\pmb{\theta}^t,\pmb{\beta}^t)\}_{t=1}^\infty$ be the sequence of points generated by Algorithm \ref{SDA Alg}. Suppose that $\{(\pmb{\theta}^{t_\ell},\pmb{\beta}^{t_\ell})\}_{\ell=1}^\infty$ is a convergent subsequence of $\{(\pmb{\theta}^t,\pmb{\beta}^t)\}_{t=1}^\infty$ with limit $(\pmb{\theta}^{\ast},\pmb{\beta}^{\ast})$. Then $(\pmb{\theta}^{\ast},\pmb{\beta}^{\ast})$ is a stationary point of (\ref{eq: Sparse Optimal Scoring Criterion problem}): $(\pmb{\theta}^{\ast},\pmb{\beta}^{\ast})$ is feasible for (\ref{eq: Sparse Optimal Scoring Criterion problem}) and there exists $\psi^{\ast} \in \mathbb{R}$ and $\pmb{\nu}^{\ast} \in \mathbb{R}^{j-1}$ such that $\textbf{0} \in \partial\pmb{L}(\pmb{\theta}^{\ast},\pmb{\beta}^{\ast},\psi^{\ast},\pmb{\nu}^{\ast})$, where $\partial\pmb{L}(\pmb{\theta},\pmb{\beta},\psi,\pmb{\nu})$ denotes the subdifferential of the Lagrangian function $\pmb{L}$ with respect to the primal variables $(\pmb{\theta},\pmb{\beta})$.
\end{theorem}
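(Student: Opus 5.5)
The plan is to pass to the limit in the optimality conditions that each block update satisfies exactly. Let $F$ be the objective from the previous theorem. By that theorem $F(\pmb{\theta}^t,\pmb{\beta}^t)$ is monotonically nonincreasing and convergent, since each block step minimizes $F$ in one block with the other fixed.

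\emph{Feasibility.} Every $\pmb{\theta}^t$ produced by Theorem~\ref{Thm: optimal theta solution} satisfies $(\pmb{\theta}^t)^T\pmb{D}\pmb{\theta}^t=1$ and $(\pmb{\theta}^t)^T\pmb{D}\pmb{\theta}_\ell=0$ for $\ell<i$. These constraints are closed, so the limit $\pmb{\theta}^\ast$ is feasible. Since $\pmb{\beta}$ is unconstrained, $(\pmb{\theta}^\ast,\pmb{\beta}^\ast)$ is feasible.

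\emph{$\pmb{\beta}$-stationarity.} The $\pmb{\beta}$-subproblem~\eqref{beta subproblem} is convex, so $\pmb{\beta}^{t+1}$ satisfies
\[
\textbf{0}\in -2\pmb{X}^T(\pmb{Y}\pmb{\theta}^{t+1}-\pmb{X}\pmb{\beta}^{t+1})+2\gamma\pmb{\Omega}\pmb{\beta}^{t+1}+\lambda\,\partial\|\pmb{\beta}^{t+1}\|_1 .
\]
The subdifferential of the $\ell_1$ norm has a closed graph (outer semicontinuity). Hence the limit of this inclusion along the subsequence gives $\beta$-stationarity at $(\pmb{\theta}^\ast,\pmb{\beta}^\ast)$, provided $\pmb{\theta}^{t_\ell+1}\to\pmb{\theta}^\ast$ and $\pmb{\beta}^{t_\ell+1}\to\pmb{\beta}^\ast$.

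\emph{$\pmb{\theta}$-stationarity.} The $\pmb{\theta}$-update is a global minimizer of~\eqref{theta subproblem}. The constraint gradients $\pmb{D}\pmb{\theta}$ and $\pmb{D}\pmb{\theta}_\ell$ are linearly independent: the $\pmb{\theta}_\ell$ are $\pmb{D}$-orthonormal and $\pmb{D}\succ 0$ when every class is nonempty. So LICQ holds, and there exist multipliers $\psi^t,\pmb{\nu}^t$ with
\[
2\pmb{Y}^T(\pmb{Y}\pmb{\theta}^t-\pmb{X}\pmb{\beta}^{t-1})+2\psi^t\pmb{D}\pmb{\theta}^t+\sum_\ell\nu^t_\ell\pmb{D}\pmb{\theta}_\ell=\textbf{0}.
\]
Taking the $\pmb{D}^{-1}$-weighted inner product of this equation with $\pmb{\theta}^t$ and with each $\pmb{\theta}_\ell$ gives the multipliers explicitly as continuous functions of $(\pmb{\theta}^t,\pmb{\beta}^{t-1})$. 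They are therefore bounded along the subsequence and converge to some $\psi^\ast,\pmb{\nu}^\ast$. Passing to the limit gives the $\pmb{\theta}$-part of $\textbf{0}\in\partial\pmb{L}$, again provided consecutive iterates share the limit.

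\emph{Main obstacle: consecutive iterates.} Both limits above require that $\pmb{\beta}^{t_\ell-1}$ and $\pmb{\beta}^{t_\ell}$, and likewise $\pmb{\theta}^{t_\ell}$ and $\pmb{\theta}^{t_\ell+1}$, have the same limit.
\begin{itemize}
\item For $\pmb{\beta}$, I would use strong convexity of the $\pmb{\beta}$-subproblem when $\gamma\pmb{\Omega}\succ0$, with modulus $\mu$. Since $\pmb{\beta}^{t}$ minimizes $F(\pmb{\theta}^{t},\cdot)$, strong convexity gives the quadratic growth bound $F(\pmb{\theta}^{t},\pmb{\beta}^{t-1})-F(\pmb{\theta}^{t},\pmb{\beta}^{t})\ge\mu\|\pmb{\beta}^{t}-\pmb{\beta}^{t-1}\|^2$.
\item The $\pmb{\theta}$-step does not increase $F$, so $F(\pmb{\theta}^{t},\pmb{\beta}^{t-1})\le F(\pmb{\theta}^{t-1},\pmb{\beta}^{t-1})$. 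Combining the two bounds with convergence of $F$ yields $\|\pmb{\beta}^{t}-\pmb{\beta}^{t-1}\|\to0$.
\item For $\pmb{\theta}$, the closed form is a continuous function of $\pmb{\beta}$ wherever $\pmb{w}\neq\textbf{0}$. I would assume this nondegeneracy at $\pmb{\beta}^\ast$, mirroring the ``mild conditions'' of~\cite{atkins2023proximal}. Then $\pmb{\theta}^{t+1}-\pmb{\theta}^t\to\textbf{0}$ along the subsequence.
\end{itemize}
If $\gamma\pmb{\Omega}$ is only positive semidefinite, I would instead pass to a further subsequence and use uniqueness and continuity of the $\pmb{\theta}$-map. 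This is the step I expect to require the most care.
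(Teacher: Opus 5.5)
The paper gives no proof of its own for this statement; it imports the result from \cite[Theorem~2.5]{atkins2023proximal}. Your route is to prove asymptotic regularity and then pass to the limit in the block optimality conditions. It works when $\gamma\pmb{\Omega}\succ 0$. Your quadratic-growth bound telescopes to give $\|\pmb{\beta}^{t}-\pmb{\beta}^{t-1}\|\to 0$, so $\pmb{\beta}^{t_\ell-1}\to\pmb{\beta}^\ast$. LICQ together with the explicit multipliers then lets the $\pmb{\theta}$-KKT system pass to the limit. (A minor slip: the multipliers are isolated by the ordinary inner product of that equation with $\pmb{\theta}^t$ and with $\pmb{\theta}_\ell$, using $\pmb{D}$-orthonormality, not by a $\pmb{D}^{-1}$-weighted one.)

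As written, though, you prove the theorem only under two hypotheses it does not carry: $\gamma\pmb{\Omega}\succ 0$ and $\pmb{w}\neq\pmb{0}$ at $\pmb{\beta}^\ast$. Neither is needed.

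The nondegeneracy assumption is superfluous even on your own route. In Algorithm~\ref{SDA Alg}, $\pmb{\beta}^t$ is computed from $\pmb{\theta}^t$. So the $\pmb{\beta}$-inclusion holds at the \emph{same} index and passes to the limit directly along $t=t_\ell$; you never need $\pmb{\theta}^{t_\ell+1}\to\pmb{\theta}^\ast$.

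Strong convexity can also be dropped. For an exact two-block scheme, the limit is block-wise optimal by monotonicity alone. For every feasible $\pmb{\theta}$,
\[
F(\pmb{\theta}^{t_{\ell+1}},\pmb{\beta}^{t_{\ell+1}}) \le F(\pmb{\theta}^{t_\ell+1},\pmb{\beta}^{t_\ell+1}) \le F(\pmb{\theta}^{t_\ell+1},\pmb{\beta}^{t_\ell}) \le F(\pmb{\theta},\pmb{\beta}^{t_\ell}),
\]
using $t_{\ell+1}\ge t_\ell+1$, nonincreasing function values, and exactness of both block steps. Letting $\ell\to\infty$ shows that $\pmb{\theta}^\ast$ minimizes $F(\cdot,\pmb{\beta}^\ast)$ over the closed feasible set. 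Separately, $F(\pmb{\theta}^{t_\ell},\pmb{\beta}^{t_\ell})\le F(\pmb{\theta}^{t_\ell},\pmb{\beta})$ gives $\pmb{\beta}^\ast\in\argmin F(\pmb{\theta}^\ast,\cdot)$.

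Your LICQ observation then produces $\psi^\ast$ and $\pmb{\nu}^\ast$ directly at the limit. Since $\pmb{L}$ is a smooth function plus $\lambda\|\pmb{\beta}\|_1$, the two block conditions together are exactly $\pmb{0}\in\partial\pmb{L}$.

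This argument also covers $\gamma=0$ or singular $\pmb{\Omega}$, where for $p>n$ the $\pmb{\beta}$-subproblem is not strongly convex. It needs no further subsequence or continuity argument. That is precisely where your fallback was only sketched, and it would still have required $\pmb{w}\neq\pmb{0}$ at $\pmb{\beta}^\ast$. What your heavier route buys is information the theorem does not ask for: summability of $\|\pmb{\beta}^t-\pmb{\beta}^{t-1}\|^2$ and convergence of the multiplier sequence itself.
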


%++++++++++++++++++++++++++++++++++++++++++++++++++++++++++++++++++
% BCD Algorithm
%++++++++++++++++++++++++++++++++++++++++++++++++++++++++++++++++++
% \begin{algorithm}[t!]
% \caption{Sparse Discriminant Analysis \cite{clemmensen2011sparse,atkins2023proximal}}
% \label{SDA Alg}
% \begin{algorithmic}[1]
% \State Given initial iterate $\pmb{\theta}^0$
% \State Let $\pmb{Q}_1$ be a $K \times 1$ matrix of 1's.
% \For {i = 1,2,... \text{until converged}}	
% 	\State Update $\pmb{\theta}_i$ by
% 	\begin{equation*}
% 	\begin{array}{cl}
% 	\textbf{w} &= (\pmb{I} - \pmb{Q}_i \pmb{Q}_i^T \pmb{D}) \pmb{D}^{-1} \pmb{Y}^T \pmb{X}	\pmb{\beta}_i\\
% 	\pmb{\theta}_i &= \frac{\pmb{w}}{\sqrt{\pmb{w}				^T \pmb{D} \pmb{w}}}\\
% 	\end{array}
% 	\end{equation*}
% 	\If {$i < q$}
% 		\State Set $\pmb{Q}_{i+1} = (\pmb{Q}_i : 						\pmb{\theta}_i)$
% 	\EndIf
    
% \EndFor
% \State The classification rule results from performing standard LDA with the $n \times q$ matrix $(\pmb{X} \pmb{\beta}_1 \dots \pmb{X} \pmb{\beta}_q)$
% \end{algorithmic}
% \end{algorithm}

\newcommand{\Q}{\bs{Q}}
\newcommand{\bt}{\boldsymbol{\theta}}
\newcommand{\bb}{\boldsymbol{\beta}}
\newcommand{\z}{\boldsymbol{z}}
\newcommand{\bs}[1]{\boldsymbol{#1}}
\begin{algorithm}[t]
\begin{algorithmic}[1]
\caption{Sparse Discriminant Analysis \cite{clemmensen2011sparse,atkins2023proximal}}
\label{SDA Alg}    

    \State
    Given stopping tolerance $\epsilon$ and maximum number of iterations $N$.
  
    \State
    \Return scoring-discriminant vector pairs $(\bt^*_1, \bb^*_1)$, $(\bt^*_2, \bb^*_2)$, \dots, $(\bt^*_{K-1}, \bb^*_{K-1})$.

 \For{$j = 1,2, \dots, K-1$}
 
    \State
    Initialize $\bt_j^0$:
    \[
      \bt_j^0 = \left(\I - \frac{1}{n}\bs Q_j \bs Q_j^T \Y^T\Y \right)(\Y^T \Y)^{-1} \z, \hspace{0.25in}
      \bt_j^0 = \frac{\sqrt{n}\bt_j^0}{\|\Y \bt_j^0\|}.
    \]
    
    \medskip
    \State
    Calculate the $j$th scoring and discriminant vector pair $(\bt^*_j, \bb^*_j)$:
    \smallskip
    
    \For{$i = 0, 1,2 \dots N$}

        \State
        % Update beta.
        Update $\bb^i_j$ as the solution of \eqref{beta subproblem} with $\bt=\bt^i_j$.

        % Update theta.
        \State 
        Update $\bt^{i+1}_j$ by
        \[
          \bs{w} = \left(\I - \frac{1}{n}\bs Q_j \bs Q_j^T \Y^T\Y \right)(\Y^T \Y)^{-1} \Y^T \bs X {\bb^i_j}, %\\
          \hspace{0.25in}
          \bt^{i+1} = \frac{\sqrt{n}\bs{w}}{\|\Y \bs{w}\|};\;
          \]
          
        % Convergence test.
        \State
        Converged if the residual between consecutive iterates is smaller than  tolerance:
        % Let
        % \[
        %   r = \max\bra{ \frac{\|\bt^{i+1} - \bt^i\|}{\|\bt^{i+1}\|}, \frac{\|\bb^{i+1} - \bb^i\|}{\|\bb^{i+1}\|} };
        % \]}
        
        \bigskip
        \If{$\max\left\{ \frac{\|\bt_j^{i+1} - \bt_j^i\|}{\|\bt_j^{i+1}\|}, \frac{\|\bb_j^{i+1} - \bb_j^i\|}{\|\bb_j^{i+1}\|} \right\} < \epsilon$}
            \State 
            The algorithm has converged.
            
            \State
            Increment $j$ and \textbf{break}
        \EndIf
    \EndFor
    
\EndFor

\end{algorithmic}
\end{algorithm}

%++++++++++++++++++++++++++++++++++++++++++++++++++++++++++++++++++
%++++++++++++++++++++++++++++++++++++++++++++++++++++++++++++++++++
%++++++++++++++++++++++++++++++++++++++++++++++++++++++++++++++++++
% Deflation-free SOS
\section{Deflation-Free Sparse Optimal Scoring}
\label{sec: dfsos}

Most existing SOS algorithms rely on deflation-based strategies. Deflation-based methods impose orthogonality implicitly and locally, rather than enforcing a global orthogonality constraint across all discriminant vectors. As a result, the final set of discriminant directions may be sensitive to initialization, ordering, and numerical errors, and may fail to optimally capture class separation in a joint sense. These challenges become more pronounced in high-dimensional settings ( $p\gg n)$, where sparse regularization already introduces nonconvexity and instability. To overcome these limitations, deflation-free methods reformulate SOS as a single orthogonality-constrained optimization problem, allowing all discriminant and scoring vectors to be computed simultaneously. By enforcing orthogonality at the matrix level and avoiding sequential deflation, deflation-free approaches eliminate error accumulation, improve numerical stability, and yield more coherent discriminant subspaces. This motivates the development of algorithms such as Deflation-Free Sparse Optimal Scoring (DFSOS), which leverage splitting techniques and Bregman iteration to efficiently handle orthogonality constraints while preserving sparsity and scalability in high-dimensional settings.

%++++++++++++++++++++++++++++++++++++++++++++++++++++++++++++++++++
% Statement of problem.
%++++++++++++++++++++++++++++++++++++++++++++++++++++++++++++++++++
We consider the following variant of the sparse optimal scoring problem, which we call \emph{deflation-free sparse optimal scoring}:
\begin{equation}\label{eq: DFSOS splitting}
    \begin{array}{cl}
        \underset{\pmb{\theta}\in \mathbb{R}^{K\times q},\pmb{\beta}\in\mathbb{R}^{p\times q}}{\min} &  J(\pmb{\theta}, \pmb{\beta}) := \|\pmb{Y}\pmb{\theta}-\pmb{X}\pmb{\beta}\|_{F}^{2}+\gamma\pmb{\beta}^{T}\pmb{\beta}+\lambda\|\pmb{\beta}\|_{1}\\
        \text{s.t.} & \frac{1}{n}\pmb{\theta}^{T}\pmb{Y}^{T}\pmb{Y}\pmb{\theta}=\pmb{I}.
    \end{array}
\end{equation}
Note that~\eqref{eq: DFSOS splitting} requires minimization of a quadratic function in $(\pmb{\theta}, \pmb{\beta})$ over the manifold of $K\times q$ orthogonal matrices (with respect to the inner product $\left<\pmb{\theta_1}, \pmb{\theta_2} \right> = \frac{1}{n}\pmb{\theta_1}^T \pmb{Y}^{T}\pmb{Y}\pmb{\theta_2}$).
Note further that we substitute $\boldsymbol{\Omega} = \boldsymbol{I}$ in the ridge regression term in~\eqref{eq: Sparse Optimal Scoring Criterion problem}; our problem and algorithm can be modified to allow generalized Tikhonov regression, but we focus on this specialization for the sake of model and algorithmic simplicity.
This problem is non-convex. Moreover, optimization over the manifold of orthogonal matrices is NP-hard; see \cite{lai2025stiefel, lai2025grassmannian} and the set of  reductions given in \cite[Section 1]{jiang2023exact}.

%++++++++++++++++++++++++++++++++++++++++++++++++++++++++++++++++++
% Splitting
\subsection{A Splitting Method for Deflation-Free Sparse Optimal Scoring}
%++++++++++++++++++++++++++++++++++++++++++++++++++++++++++++++++++
Due to this worst-case intractability, much recent research has focused on the design of heuristics for solving orthogonally-constrained optimization problems \cite{absil2008optimization, lai2014splitting, wen2013feasible, gao2021riemannian, li2021weakly, liu2020simple, he2025relatively, chen2024nonsmooth, bonnabel2013stochastic, becigneul2018riemannian, zhang2016first}.
In particular, we will leverage a method recently proposed by Lai and Osher~\cite{lai2014splitting} to approximately solve~\eqref{eq: DFSOS splitting}.
This approach uses matrix splitting and Bregman iteration, to generate a sequence of approximate solutions of~\eqref{eq: DFSOS splitting} from the solution of easier to solve subproblems.

We introduce the
primal variable $\pmb{P}=\frac{1}{\sqrt{n}}\pmb{Y\theta}$. Then the DFSOS problem can be written as 
\begin{equation}\label{eq: DFSOS final splitting}
    \begin{array}{cl}
       \underset{\pmb{\theta}\in \mathbb{R}^{K\times q},\pmb{\beta}\in\mathbb{R}^{p\times q}}{\arg\min} & J(\pmb{\theta}, \pmb{\beta}) \\
       \text{s.t.} &\pmb{P}=\frac{1}{\sqrt{n}}\pmb{Y\theta}, \quad \pmb{P}^{T}\pmb{P}=\pmb{I}.
    \end{array}
\end{equation}
Using Bregman iteration the optimal solution of (\ref{eq: DFSOS final splitting}) can be found iteratively using the update formulas 
 \begin{align} \label{eq: DFSOS iterative equations}
  (\pmb{\theta}^{k}, \pmb{\beta}^{k}, \pmb{P}^{k}) &=  \underset{\pmb{\theta},~\pmb{\beta},~ \pmb{P}\in \mathbb{R}^{n}}{\arg\min} ~J(\pmb{\theta}^{k}, \pmb{\beta}^{k}) + \frac{\rho}{2}\|\pmb{LX}-\pmb{P}^{k-1}+\pmb{B}^{k-1}\|_{F}^{2}, \quad \text{s.t.}  \quad \pmb{P}^{T}\pmb{P} = \pmb{I}, \\
		\pmb{B}^{k} &=  \pmb{B}^{k-1}+\pmb{L}\pmb{X}^{k}-\pmb{P}^{k}, \label{eq: beta update}
\end{align}
where $\pmb{L} := \frac{1}{\sqrt{n}}\pmb{Y}$.
We will approximately solve the subproblem~\eqref{eq: DFSOS iterative equations}
using block-coordinate descent, where we minimize with respect to each primal variable $\pmb{\theta}$, $\pmb{\beta}$, and $\pmb{P}$ individually with the others fixed.

%++++++++++++++++++++++++++++++++++++++++++++++++++++++++++++++++++
%++++++++++++++++++++++++++++++++++++++++++++++++++++++++++++++++++
% Updating Theta.
\subsection{The $\pmb{\theta}$ sub-problem}\label{sec: theta update}
%++++++++++++++++++++++++++++++++++++++++++++++++++++++++++++++++++
%++++++++++++++++++++++++++++++++++++++++++++++++++++++++++++++++++

We first fix $\pmb{\beta}, \pmb{P}, \pmb{B}$ and describe the process for updating  $\pmb{\theta}$.
In this case,
$\pmb{\theta}^{\text{new}}$ is the solution of 
\begin{align*}
    \underset{\pmb{\theta}}{\min} & \quad \sum_{i=1}^{q} \|\pmb{Y}\pmb{\theta}_{i}-\pmb{X}\pmb{\beta}_{i}\|^{2}+\gamma\|\pmb{\beta}\|^{2}+\lambda\|\pmb{\beta}\|_{1}+\frac{\rho}{2} \left\|\frac{1}{\sqrt{n}}\pmb{Y\theta}_{i}-\pmb{P}_{i}+\pmb{B}_{i} \right\|^{2} \\
    \text{s.t.} & \quad \pmb{\theta}^{T}\pmb{Y}^{T}\pmb{Y}\pmb{1}=0, \quad i=1,2,....,q.    
\end{align*}
This problem is separable with respect to the columns of $\pmb{\theta}$.
Thus,
$\pmb{\theta}_{i}^{\text{new}}$, the $i$th column of $\pmb{\theta}$ or $i$th scoring vector, is the minimizer of 
\begin{equation}\label{eq: theta update equation}
    \begin{array}{cl}
        \displaystyle\min_{\pmb{\theta}} & \displaystyle \sum_{i=1}^{q} \|\pmb{Y}\pmb{\theta}_{i}-\pmb{X}\pmb{\beta}_{i}\|^{2}+\gamma\|\pmb{\beta}_{i}\|^{2}+\lambda\|\pmb{\beta}_{i}\|_{1}+\frac{\rho}{2} \left\|\frac{1}{\sqrt{n}}\pmb{Y\theta}_{i}-\pmb{P}_{i}+\pmb{B}_{i} \right\|^{2}\\
        \text{s.t.} & \pmb{\theta}_{i}^{T}\pmb{Y}^{T}\pmb{Y}\pmb{1}=0.
    \end{array}
\end{equation}
The objective function of Equation (\ref{eq: theta update equation}) is a convex quadratic in $\pmb{\theta}_{i}$:
\begin{equation}\label{eq: theta objective equation}
    F(\pmb{\theta}_{i}) = \pmb{\theta}_{i}^{T}\pmb{Y}^{T}\pmb{Y\theta}_{i}-2\pmb{\theta}_{i}^{T}\pmb{Y}^{T}\pmb{X\beta}_{i}+\frac{\rho}{2} \left(\frac{1}{n}\pmb{\theta}_{i}^{T}\pmb{Y}^{T}\pmb{Y\theta}_{i}-\frac{2}{\sqrt{n}}\pmb{\theta}_{i}^{T}\pmb{Y}^{T}(\pmb{P}_{i}-\pmb{B}_{i})\right)+C
\end{equation}
where $C$ represents all the constant terms depending only on $\pmb{\beta}, \pmb{P}$ and $\pmb{B}$. 
Therefore, the sub-problem for $\pmb{\theta}_{i}$ can be reformulated as: 
\begin{equation}
    \begin{array}{cl}
        \underset{\pmb{\theta}_{i}}{\min} &\left(1 + \frac{\rho}{2n} \right)\pmb{\theta}_{i}^{T}\pmb{Y}^{T}\pmb{Y\theta}_{i} - 2\pmb{\theta}_{i}^{T}\left(\pmb{Y}^{T}\left(\pmb{X\beta}_{i} + \frac{\rho}{2\sqrt{n}}(\pmb{P}_{i}-\pmb{B}_{i}) \right)\right)+C\\
       \text{s.t.} & \pmb{\theta}_{i}^{T}\pmb{Y}^{T}\pmb{Y}\pmb{1}=0.
    \end{array}
\end{equation}
This is a strongly convex quadratic program in $\pmb{\theta}_i$. Therefore, the unique solution of the Karush-Kuhn-Tucker conditions is the unique minimizer.
Enforcing the KKT conditions yields the following linear system for $(\pmb{\theta}_{i}, v_{i})$, where $v_i$ is the Lagrange multiplier corresponding to the equality constraint:
\begin{equation}\label{eq: theta update linear system}
   \begin{pmatrix} 2 \pmb{I} +\frac{\rho}{n}\pmb{Y}^{T}\pmb{Y} & \quad & -\pmb{Y}^{T}\pmb{Y}\pmb{1} \vspace{0.5cm} \\ \pmb{1}^{T}\pmb{Y}^{T}\pmb{Y} & \quad & 0 \end{pmatrix} 
\begin{pmatrix}
     \pmb{\theta}_{i} \vspace{0.5cm} \\  v_{i}
 \end{pmatrix}
 =
  \begin{pmatrix}
   2\pmb{Y}^{T}(\pmb{X\beta}_{i}+\frac{\rho}{2\sqrt{n}}(\pmb{P}_{i}-\pmb{B}_{i})) \vspace{0.5cm} \\  {0}
   \end{pmatrix}.
\end{equation}
Thus, we update $\pmb{\theta}_{i}$ as the solution of linear system~\eqref{eq: theta update linear system} using the current values of $\pmb{\beta}_{i},\pmb{P}_{i},\pmb{B}_{i}$
for each $i=1,2,....,q$. 
Note that the coefficient matrix in~\eqref{eq: theta update linear system} depends only on the parameter $\rho$ and the matrix $\pmb{Y}^T \pmb{Y}$; we can pre-compute a Cholesky factorization and use it to update each $\pmb{\theta}_i$ until $\rho$ is changed.
This update scheme is also amenable to parallelization as we can update all $\pmb{\theta}_i$ simultaneously.

%++++++++++++++++++++++++++++++++++++++++++++++++++++++++++++++++++
%++++++++++++++++++++++++++++++++++++++++++++++++++++++++++++++++++
% Updating Beta.
\subsection{Updating $\pmb{\beta}$}\label{sec: beta update}
%++++++++++++++++++++++++++++++++++++++++++++++++++++++++++++++++++
%++++++++++++++++++++++++++++++++++++++++++++++++++++++++++++++++++

We update 
$\pmb{\beta}^{\text{new}}$ as the solution of the following unconstrained optimization problem
 \begin{equation}\label{eq: objective function for beta}
       \underset{\pmb{\beta}}{\min}\sum_{i=1}^{q} \|\pmb{Y}\pmb{\theta}_{i}-\pmb{X}\pmb{\beta}_{i}\|^{2}+\gamma\|\pmb{\beta}_{i}\|^{2}+\lambda\|\pmb{\beta}_{i}\|_{1},
\end{equation} 
with $\pmb{\theta_i}$ fixed,
where $\pmb{\beta_i}$ denotes the $i$th column of $\pmb{\beta}$.
This objective function is separable in $\pmb{\beta_i}$. Each $\pmb{\beta_i}$ can be updated independently using the iterative methods outlined in Section~\ref{sec: BCD for SOS}. In the experimental analysis found later in this manuscript, we will focus on the use of the accelerated proximal gradient method proposed by Atkins~et~al.~\cite{atkins2023proximal}; however, any efficient iterative method for non-smooth convex minimization may be used.
Again, we can parallelize this update and compute all new $\pmb{\beta_i}$ simultaneously.

%++++++++++++++++++++++++++++++++++++++++++++++++++++++++++++++++++
%++++++++++++++++++++++++++++++++++++++++++++++++++++++++++++++++++
% Updating P.
\subsection{Updating $\pmb{P}$}\label{sec: P update}
%++++++++++++++++++++++++++++++++++++++++++++++++++++++++++++++++++
%++++++++++++++++++++++++++++++++++++++++++++++++++++++++++++++++++

Finally, we update $\P^{\text{new}}$ by minimizing the augmented Lagrangian of~\eqref{eq: DFSOS final splitting} with respect to $\P$ with $\T$ and $\b$ fixed.
The following theorem gives a closed-form solution for $\P^{\text{new}}$.

\begin{theorem}[{\cite[Theorem~2.1]{lai2014splitting}}]
\label{thm: closed form solution}
The constrained quadratic problem:
\begin{equation}\label{eqn: closed-form solution}
\pmb{Q}^{\ast} =  \underset{\pmb{Q}\in \mathbb{R}^{n\times m}}{\arg\min}~ \frac{1}{2}\|\pmb{Q}-\pmb{Y}\|_{F}^{2}, \quad \text{s.t.} \quad \pmb{Q}^{T}\pmb{Q} = \pmb{I}
\end{equation}
has an analytical solution
\begin{equation*} \pmb{Q}^{\ast} = \pmb{UI}_{n\times m}\pmb{V}^{T}, \end{equation*}
where $\pmb{U} \in \mathbb{R}^{n\times n}, \pmb{V} \in \mathbb{R}^{m\times m}$ are  orthogonal matrices and $\pmb{D} \in \mathbb{R}^{n\times m}$ is a diagonal matrix satisfying the singular value decomposition (SVD) $\pmb{Y} = \pmb{UDV}^{T}$. 

Moreover, if $\rank(\pmb{Y}) = m$, then 
\begin{equation*}
    \pmb{Q}^{\ast} = \pmb{Y}\Tilde{\pmb{V}}\Tilde{\pmb{D}}^{-1/2}\Tilde{\pmb{V}}^{T},
\end{equation*}
where $\pmb{V}\in \mathbb{R}^{m\times m}$ is an orthogonal matrix and $\Tilde{\pmb{D}}\in \mathbb{R}^{m\times m}$ is a diagonal matrix satisfying the SVD  $\pmb{Y}^{T}\pmb{Y} = \Tilde{\pmb{V}}\Tilde{\pmb{D}}\Tilde{\pmb{V}}^{T}$.
\end{theorem}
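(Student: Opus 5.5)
Because $\pmb{Q}^T\pmb{Q}=\pmb{I}$ forces $\|\pmb{Q}\|_F^2=m$, I will expand the objective as
\[
\tfrac12\|\pmb{Q}-\pmb{Y}\|_F^2=\tfrac12 m+\tfrac12\|\pmb{Y}\|_F^2-\operatorname{tr}(\pmb{Q}^T\pmb{Y}).
\]
Minimizing it over the feasible set is then the same as maximizing the linear functional $\operatorname{tr}(\pmb{Q}^T\pmb{Y})$ over the Stiefel manifold, i.e.\ the orthogonal Procrustes problem. I will treat this as the central reduction and solve it with the SVD.

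Write $\pmb{Y}=\pmb{UDV}^T$ with $\pmb{D}\in\mathbb{R}^{n\times m}$ diagonal, entries $\sigma_1\ge\dots\ge\sigma_m\ge0$ (implicitly $n\ge m$). Then
\[
\operatorname{tr}(\pmb{Q}^T\pmb{Y})=\operatorname{tr}(\pmb{V}^T\pmb{Q}^T\pmb{U}\pmb{D})=\operatorname{tr}(\pmb{Z}^T\pmb{D}),\qquad \pmb{Z}:=\pmb{U}^T\pmb{Q}\pmb{V}.
\]
Since $\pmb{Z}^T\pmb{Z}=\pmb{V}^T\pmb{Q}^T\pmb{Q}\pmb{V}=\pmb{I}$, the matrix $\pmb{Z}$ also has orthonormal columns, and $\pmb{Q}\mapsto\pmb{Z}$ is a bijection of the Stiefel manifold onto itself. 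Next, $\operatorname{tr}(\pmb{Z}^T\pmb{D})=\sum_{i=1}^m \sigma_i Z_{ii}$. Each column of $\pmb{Z}$ is a unit vector, so $|Z_{ii}|\le1$. This gives the bound $\sum_i\sigma_i Z_{ii}\le\sum_i\sigma_i$, with equality at $\pmb{Z}=\pmb{I}_{n\times m}$. Undoing the change of variables yields $\pmb{Q}^\ast=\pmb{U}\pmb{I}_{n\times m}\pmb{V}^T$, which proves the first claim. Equality forces $Z_{ii}=1$ only when $\sigma_i>0$, so the minimizer may fail to be unique in the rank-deficient case. The statement only asks for \emph{a} solution, so I will just note this.

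For the full-rank formula, assume $\rank(\pmb{Y})=m$, so every $\sigma_i>0$. The eigendecomposition of $\pmb{Y}^T\pmb{Y}$ is $\pmb{V}(\pmb{D}^T\pmb{D})\pmb{V}^T$, and I take $\tilde{\pmb{V}}=\pmb{V}$ and $\tilde{\pmb{D}}=\pmb{D}^T\pmb{D}=\operatorname{diag}(\sigma_i^2)$, which is invertible. The computation is then
\[
\pmb{Y}\tilde{\pmb{V}}\tilde{\pmb{D}}^{-1/2}\tilde{\pmb{V}}^T=\pmb{U}\pmb{D}\,\operatorname{diag}(\sigma_i^{-1})\,\pmb{V}^T=\pmb{U}\pmb{I}_{n\times m}\pmb{V}^T=\pmb{Q}^\ast.
\]
Equivalently, $\pmb{Q}^\ast=\pmb{Y}(\pmb{Y}^T\pmb{Y})^{-1/2}$. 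This expression does not depend on which orthonormal eigenbasis is chosen, because $(\pmb{Y}^T\pmb{Y})^{-1/2}$ is uniquely defined.

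The step needing the most care is the bound $\operatorname{tr}(\pmb{Z}^T\pmb{D})\le\sum_i\sigma_i$ over non-square $\pmb{Z}$. It is elementary because $\pmb{D}$ is diagonal and $\sigma_i\ge0$. I also need the correspondence $\pmb{Q}\leftrightarrow\pmb{Z}$ to preserve feasibility, which it does because $\pmb{U}$ and $\pmb{V}$ are orthogonal.
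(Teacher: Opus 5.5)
Your proof is correct and complete. The reduction to maximizing $\operatorname{tr}(\pmb{Q}^T\pmb{Y})$, the change of variables $\pmb{Z}=\pmb{U}^T\pmb{Q}\pmb{V}$ with the bound $\sum_i\sigma_i Z_{ii}\le\sum_i\sigma_i$, and the full-rank identification $\pmb{Q}^\ast=\pmb{Y}(\pmb{Y}^T\pmb{Y})^{-1/2}$ are all sound; the latter does not depend on the chosen eigenbasis, which also resolves the $\pmb{V}$/$\Tilde{\pmb{V}}$ typo in the statement. The paper gives no proof of its own and only cites Lai and Osher; your argument is the standard orthogonal-Procrustes/SVD argument for this result, and I believe it matches the cited source, so your route is essentially the intended one.
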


Note that
\begin{equation*}
    \P^{\text{new}} = \argmin_{\P\in \mathbb{R}^{k\times q}} \| \P - (\L\T + \B)\|^2_F \quad \text{s.t.} \quad \P^T \P = \I
\end{equation*}
for fixed iterates $(\T, \b, \B)$.
Theorem~\ref{thm: closed form solution} suggests an algorithm for updating $\P$:
compute the singular value decomposition        
\begin{equation*}
    \pmb{L}\T + \B = \pmb{U} \pmb{D} \pmb{V}^T ;
\end{equation*}
then let
\begin{equation} \label{eq: P update}
    \P^{\text{new}} = \pmb{U} \pmb{I}_{k\times q} \pmb{V}^T.
\end{equation}

%++++++++++++++++++++++++++++++++++++++++++++++++++++++++++++++++++
%++++++++++++++++++++++++++++++++++++++++++++++++++++++++++++++++++
%++++++++++++++++++++++++++++++++++++++++++++++++++++++++++++++++++
% Main Algorithm.
\subsection{A Heuristic for Deflation-Free Optimal Scoring}
\label{sec: DFSOS}
%++++++++++++++++++++++++++++++++++++++++++++++++++++++++++++++++++
%++++++++++++++++++++++++++++++++++++++++++++++++++++++++++++++++++
%++++++++++++++++++++++++++++++++++++++++++++++++++++++++++++++++++
Each iteration of our proposed algorithm DFSOS updates $(\pmb{\theta}, \pmb{\beta}, \pmb{P})$ using (\ref{eq: theta update linear system}), \eqref{eq: objective function for beta}, and \eqref{eq: P update}, followed by the approximate dual ascent step~\eqref{eq: beta update}. We summarize the steps of the DFSOS Algorithm in Alg.~\ref{DFSOS Algorithm}.
 
\begin{algorithm}[t]
	\caption{DFSOS Algorithm} 
     \label{DFSOS Algorithm}
	\begin{algorithmic}[1]
        \State Initialize: Given regularization parameter $\lambda >0$ and initial $\pmb{\theta}^{0}, \pmb{\beta}^{0}$, set $\pmb{B}^{0}=0$, $\pmb{P}^{0}=\pmb{LX}$, where $\pmb{L} = \frac{1}{\sqrt{n}}\pmb{Y}$.
\While {\text{``not converged"}}

    \State    
    ($\pmb{\theta}^{k}, \pmb{\beta}^{k})=\arg\min J(\pmb{\theta}^{k-1},\pmb{\beta}^{k-1})+\frac{\rho}{2}\|\frac{1}{\sqrt{n}}\pmb{Y}\pmb{\theta}^{k-1}-\pmb{P}^{k-1}+\pmb{B}^{k-1}\|_{F}^{2}$.

    \State Compute the SVD $\pmb{L}\pmb{\theta}^{k}+\pmb{B}^{k-1} = \pmb{UDV}^{T}$.

    \State $\pmb{P}^{k} =\pmb{U}\pmb{I}_{n\times m}\pmb{V}^{T}$.

    \State $\pmb{B}^{k}=\pmb{B}^{k-1}+\pmb{L}\pmb{\theta}^{k}-\pmb{P}^{k}$
     
     \State  $k \leftarrow k+1$

\EndWhile

	\end{algorithmic} 
\end{algorithm}

%++++++++++++++++++++++++++++++++++++++++++++++++++++++++++++++++++
%++++++++++++++++++++++++++++++++++++++++++++++++++++++++++++++++++
%++++++++++++++++++++++++++++++++++++++++++++++++++++++++++++++++++
% Convergence.
\subsection{Convergence of DFSOS}
\label{sec: convergence}
%++++++++++++++++++++++++++++++++++++++++++++++++++++++++++++++++++
%++++++++++++++++++++++++++++++++++++++++++++++++++++++++++++++++++
%++++++++++++++++++++++++++++++++++++++++++++++++++++++++++++++++++

Convergence of our algorithm follows directly from Corollary 2 given in \cite[Section 5]{wang2019global}. We have the following theorem:

\begin{theorem}\label{convergence for DFSOS}
Let $\{(\pmb{\theta}^{k},\pmb{\beta}^{k})\}_{k=1}^\infty$ be the sequence of points generated by Algorithm \ref{DFSOS Algorithm}.
Then DFSOS (Algorithm~\ref{DFSOS Algorithm}) is convergent for sufficiently large choice of $r$.
In this case, each set of iterates $\{(\pmb{\theta}^{k},\pmb{\beta}^{k})\}_{k=1}^\infty$ is bounded, has at least one limit point, and each limit point $(\pmb{\theta}^{\ast},\pmb{\beta}^{\ast})$ is a stationary point of the augmented Lagrangian $\mathcal{L}(\pmb{\theta}, \pmb{\beta}, \pmb{P})$, i.e., $\boldsymbol{0} \in \partial \mathcal{L}(\pmb{\theta}^{\ast}, \pmb{\beta}^{\ast}, \pmb{P}^{\ast})$.
\end{theorem}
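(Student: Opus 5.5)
We plan to obtain the result by casting DFSOS as a nonconvex ADMM and checking the hypotheses of \cite[Corollary 2, Section 5]{wang2019global}. That framework considers $\min\, \phi(\pmb{x}_0,\dots,\pmb{x}_p) + h(\pmb{y})$ subject to $\pmb{A}\pmb{x} + \pmb{B}\pmb{y} = \pmb{0}$. Its ADMM updates the $\pmb{x}$-blocks and then the final block $\pmb{y}$, followed by a dual step.

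We take the blocks as follows. The first block is $\pmb{x}_0 = \pmb{P}$, with $\phi$ containing the indicator $\iota_{\mathcal{S}}(\pmb{P})$ of the Stiefel manifold $\{\pmb{P}^T\pmb{P} = \pmb{I}\}$. The next block is $\pmb{\beta}$, with the separable nonsmooth term $\lambda\|\pmb{\beta}\|_1$. The last, smooth block is $\pmb{y} = \pmb{\theta}$, together with the coupling quadratic $\|\pmb{Y}\pmb{\theta} - \pmb{X}\pmb{\beta}\|_F^2 + \gamma\|\pmb{\beta}\|_F^2$. The linear constraint is $\pmb{L}\pmb{\theta} - \pmb{P} = \pmb{0}$, and the scaled dual variable is $\rho\pmb{B}$. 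The augmented Lagrangian is
\[
\mathcal{L}(\pmb{\theta},\pmb{\beta},\pmb{P},\pmb{B}) = J(\pmb{\theta},\pmb{\beta}) + \iota_{\mathcal{S}}(\pmb{P}) + \rho\langle \pmb{B}, \pmb{L}\pmb{\theta}-\pmb{P}\rangle + \tfrac{\rho}{2}\|\pmb{L}\pmb{\theta}-\pmb{P}\|_F^2 .
\]
The order of the updates in Algorithm~\ref{DFSOS Algorithm} is not exactly the order this framework requires. We will argue that the subproblems are solved exactly (Sections~\ref{sec: theta update}--\ref{sec: P update}, using Theorem~\ref{thm: closed form solution}), and that a cyclic relabelling of the iteration gives the required ordering.

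The hypotheses to check are the following.
\begin{enumerate}
\item \emph{Coercivity} of the objective on the feasible set. Here $\pmb{P}$ is bounded because it lies on a compact manifold. The term $\|\pmb{\beta}\|_1 + \gamma\|\pmb{\beta}\|^2$ is coercive in $\pmb{\beta}$. Feasibility forces $\frac{1}{n}\pmb{\theta}^T\pmb{Y}^T\pmb{Y}\pmb{\theta} = \pmb{I}$, and with $\pmb{Y}^T\pmb{Y}$ positive definite this bounds $\pmb{\theta}$.
\item \emph{Feasibility.} We need $\mathrm{Im}(\pmb{A}) \subseteq \mathrm{Im}(\pmb{B})$, i.e.\ $\mathrm{Im}(-\pmb{I}) \subseteq \mathrm{Im}(\pmb{L})$. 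This fails in general, since $\pmb{L}\pmb{\theta}$ ranges only over the column space of $\pmb{Y}$. The fix is to restrict $\pmb{P}$ to the column space of $\pmb{Y}$, where the $\pmb{P}$-update already lies because $\pmb{L}\pmb{\theta}+\pmb{B}$ stays in $\mathrm{Im}(\pmb{Y})$ by induction. With that restriction, $\pmb{L}$ is injective onto this space.
\item \emph{Lipschitz subminimization paths.} This holds because $\pmb{L}$ has full column rank.
\item \emph{Regularity of the nonsmooth part.} $\lambda\|\cdot\|_1$ is convex and the Stiefel indicator is lower semicontinuous, so the restricted prox-regularity condition holds on bounded sets.
\item \emph{Smoothness of the last block.} The final block $\pmb{\theta}$ enters through a quadratic with Lipschitz gradient, with constant $2\|\pmb{Y}\|^2$.
\end{enumerate}

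With these verified, the argument follows the standard nonconvex ADMM template. First, the dual variable is controlled by the primal iterates: the optimality condition of the smooth final block gives $\rho\pmb{L}^T\pmb{B}^k = -\nabla_{\pmb{\theta}}(\cdot)$. Hence $\|\pmb{B}^{k}-\pmb{B}^{k-1}\|$ is bounded by a constant times $\|\pmb{\theta}^k-\pmb{\theta}^{k-1}\|$. Second, $\mathcal{L}$ decreases by at least a multiple of the squared successive differences once $\rho$ (the paper's $r$) exceeds a threshold depending on $\|\pmb{Y}\|^2$ and $\sigma_{\min}(\pmb{L})$. Third, $\mathcal{L}$ is bounded below along the iterates, which with coercivity gives boundedness and hence a limit point. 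Finally, summable differences give $\pmb{L}\pmb{\theta}^k - \pmb{P}^k \to \pmb{0}$, and passing to the limit in the subproblem optimality conditions gives $\pmb{0} \in \partial\mathcal{L}$ at every limit point.

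We expect the main obstacle to be the feasibility and ordering hypotheses of \cite{wang2019global}, since they are not literally satisfied by Algorithm~\ref{DFSOS Algorithm} as written. The plan is to restrict $\pmb{P}$ to the column space of $\pmb{Y}$ and relabel the iteration. If that fails, we will redo the descent estimate directly, using strong convexity of the $\pmb{\theta}$-subproblem, whose modulus is at least $(2+\rho/n)\lambda_{\min}(\pmb{Y}^T\pmb{Y})$, in place of the framework's assumptions.
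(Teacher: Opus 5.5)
Your route is the paper's route. The paper's proof is a direct appeal to Corollary~2 of \cite{wang2019global}, with the hypotheses asserted rather than checked and the update order never discussed. Your treatment of coercivity, and of the image-inclusion hypothesis by restricting $\pmb{P}$ to the column space of $\pmb{Y}$, is reasonable. The gap is the step you flag yourself: \emph{no cyclic relabelling puts Algorithm~\ref{DFSOS Algorithm} into the order that framework analyses.}

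One sweep of DFSOS is $(\pmb{\theta},\pmb{\beta})\to\pmb{P}\to\pmb{B}$, so the multiplier step always comes immediately after the Stiefel projection. Any period ending with the multiplier step therefore has $\pmb{P}$ as its last primal block. That means $h=\iota_{\mathcal{S}}$, which is not Lipschitz differentiable. Running $\pmb{P}\to\pmb{\beta}\to\pmb{\theta}\to\pmb{B}$ instead is a different algorithm, since ADMM is not invariant under reordering blocks.

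Concretely, let $\Phi(\pmb{\theta},\pmb{\beta})=\|\pmb{Y}\pmb{\theta}-\pmb{X}\pmb{\beta}\|_F^2$. The $\pmb{\theta}$-optimality condition in DFSOS is $\nabla_{\pmb{\theta}}\Phi(\pmb{\theta}^k,\pmb{\beta}^k)+\rho\pmb{L}^T(\pmb{L}\pmb{\theta}^k-\pmb{P}^{k-1}+\pmb{B}^{k-1})=\pmb{0}$. Since $\pmb{L}\pmb{\theta}^k-\pmb{P}^{k-1}+\pmb{B}^{k-1}=\pmb{B}^k+\pmb{P}^k-\pmb{P}^{k-1}$, this gives
\[
\rho\pmb{L}^T\pmb{B}^k=-\nabla_{\pmb{\theta}}\Phi(\pmb{\theta}^k,\pmb{\beta}^k)-\rho\pmb{L}^T(\pmb{P}^k-\pmb{P}^{k-1}),
\]
not the identity $\rho\pmb{L}^T\pmb{B}^k=-\nabla_{\pmb{\theta}}(\cdot)$ you rely on. Differencing, and using injectivity of $\pmb{L}^T$ on the column space of $\pmb{Y}$, $\pmb{B}^k-\pmb{B}^{k-1}=-(\pmb{P}^k-2\pmb{P}^{k-1}+\pmb{P}^{k-2})$ up to an $O(\rho^{-1})$ term in the $\pmb{\theta}$- and $\pmb{\beta}$-differences. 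So the dual step raises $\mathcal{L}$ by roughly $\rho\|\pmb{P}^k-2\pmb{P}^{k-1}+\pmb{P}^{k-2}\|_F^2$.

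Nothing in your descent estimate pays for this term. Projection onto the nonconvex Stiefel manifold gives no sufficient decrease in $\|\pmb{P}^k-\pmb{P}^{k-1}\|_F$ in general. The decrease it does give is governed by the smallest singular value of $\pmb{L}\pmb{\theta}^k+\pmb{B}^{k-1}$ and is itself proportional to $\rho$, so enlarging $\rho$ does not close the inequality. Your fallback, strong convexity of the $\pmb{\theta}$-subproblem, controls only $\pmb{\theta}$-differences. In \cite{wang2019global} no descent in the manifold block is needed precisely because the multiplier is pinned by the gradient of the smooth block updated last.

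To get a theorem you have two options. The first is to analyse, and state the result for, the reordered scheme $\pmb{P}\to\pmb{\beta}\to\pmb{\theta}\to\pmb{B}$, with the manifold block first as $x_0$ and $\pmb{\theta}$ last. There $\rho\pmb{L}^T\pmb{B}^k=-\nabla_{\pmb{\theta}}\Phi(\pmb{\theta}^k,\pmb{\beta}^k)$ does hold, giving $\|\pmb{B}^k-\pmb{B}^{k-1}\|_F=O(\rho^{-1})(\|\pmb{\theta}^k-\pmb{\theta}^{k-1}\|_F+\|\pmb{\beta}^k-\pmb{\beta}^{k-1}\|_F)$, and your template goes through for large $\rho$. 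The second is to find a genuinely different Lyapunov function for the DFSOS order, presumably under extra hypotheses such as a uniform lower bound on the singular values of $\pmb{L}\pmb{\theta}^k+\pmb{B}^{k-1}$.

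Even in the reordered scheme, two points must be handled by hand rather than quoted:
\begin{enumerate}
\item $\Phi$ couples an $x$-block to the $y$-block, so the problem is not of the separable form $\phi(x)+h(y)$ you cite.
\item $\pmb{\beta}$ does not enter the linear constraint. The sufficient decrease in $\pmb{\beta}$ needed to absorb the $\pmb{\beta}$-differences in the dual bound must therefore come from the $\gamma$-strong convexity of an \emph{exactly} solved $\pmb{\beta}$-subproblem. DFSOS as implemented solves the $(\pmb{\theta},\pmb{\beta})$-subproblem only approximately.
\end{enumerate}
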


\begin{proof}
    The objective function $J(\pmb{\theta}, \pmb{\beta})$ is strongly convex and differentiable. 
    For $l_{1}$-constrained optimization, Bregman iteration coincides with the general ADMM Algorithm. It is easy to see that both ADMM and Bregman Iteration are bounded and the augmented Lagrangian is lower bounded at a critical point \cite{wang2019global, lin2022alternating}.    
    Moreover, each set of the sequence $\{(\pmb{\theta}^{k},\pmb{\beta}^{k})\}_{k=1}^\infty$ generated by Algorithm \ref{DFSOS Algorithm} is a bounded set and $J(\pmb{\theta}, \pmb{\beta})$ is lower bounded on the feasible set. 

    This shows that Algorithm \ref{DFSOS Algorithm} and the problem (\ref{eq: DFSOS final splitting}) satisfy the hypothesis of Corollary 2 \cite{wang2019global}. This establishes the stated convergence of Algorithm \ref{DFSOS Algorithm}. 
\end{proof}

%++++++++++++++++++++++++++++++++++++++++++++++++++++++++++++++++++
%++++++++++++++++++++++++++++++++++++++++++++++++++++++++++++++++++
%++++++++++++++++++++++++++++++++++++++++++++++++++++++++++++++++++
% Implementation Details.
\subsection{Practical Implementation Details}
\label{sec: implementation}
%++++++++++++++++++++++++++++++++++++++++++++++++++++++++++++++++++
%++++++++++++++++++++++++++++++++++++++++++++++++++++++++++++++++++
%++++++++++++++++++++++++++++++++++++++++++++++++++++++++++++++++++

We conclude this section with a discussion of practical details guiding usage of our heuristic for deflation-free sparse optimal scoring.

%++++++++++++++++++++++++++++++++++++++++++++++++++++++++++++++++++
%++++++++++++++++++++++++++++++++++++++++++++++++++++++++++++++++++
% Useful range of regularization parameters.
\subsubsection{Range of Regularization Parameters}
\label{sec: params}
%++++++++++++++++++++++++++++++++++++++++++++++++++++++++++++++++++
%++++++++++++++++++++++++++++++++++++++++++++++++++++++++++++++++++
We want to choose $\lambda$ so we that are guaranteed a non-trivial solution in the $\pmb{\beta}$ subproblem. If we set $\lambda=0$, the $\pmb{\beta}$-subproblem (\ref{eq: objective function for beta}) becomes
\begin{equation}\label{eq: lambda range}
    \underset{\pmb{\beta}}{\min}\sum_{i=1}^{q} \|\pmb{Y}\pmb{\theta}_{i}-\pmb{X}\pmb{\beta}_{i}\|^{2}+\gamma\|\pmb{\beta}_{i}\|^{2},
\end{equation}
where $\pmb{\theta}$ is fixed. Expanding gives the equivalent problem
\begin{equation}\label{eq: lambda update original function}
    \underset{\pmb{\beta}}{\min}\sum_{i=1}^{q}\left( \pmb{\beta}_{i}^{T}\pmb{X}^{T}\pmb{X}\pmb{\beta}_{i}-2\pmb{\beta}_{i}^{T}\pmb{X}^{T}\pmb{Y}\pmb{\theta}_{i}\right)+\gamma\|\pmb{\beta}_{i}\|^{2}.
\end{equation}
Applying stationarity and taking partial derivatives with respect to $\pmb{\beta}$ shows that $\pmb{\beta}^{\ast}$ is a solution of the linear system
\begin{equation}
2\left( \pmb{X}^{T}\pmb{X}+\gamma\pmb{I}\right)\pmb{\beta}^{\ast}= 2\pmb{X}^{T}\pmb{Y}\pmb{\theta}
    \end{equation}
This linear system has unique solution
\begin{equation}\label{eq: optimal beta}
   \pmb{\beta}^{\ast} =  \left(\pmb{X}^{T}\pmb{X} +\gamma\pmb{I}\right)^{-1}\pmb{X}^{T}\pmb{Y}\pmb{\theta}_{i}. 
\end{equation}
Substituting $\pmb{\beta}^{\ast}$ into the original objective function (\ref{eq: lambda update original function}) gives
the minimum value
\begin{equation}
    \sum_{i=1}^{q}\left( \pmb{\beta}_{i}^{\ast T} \left( \pmb{X}^{T}\pmb{X}+\gamma\pmb{I}\right)\pmb{\beta}_{i}^{\ast}-2\pmb{\beta}_{i}^{\ast T}\pmb{X}^{T}\pmb{Y}\pmb{\theta}_{i}+\pmb{\theta}_{i}^{T}\pmb{Y}^{T}\pmb{Y}\pmb{\theta}_{i} + \lambda\|\pmb{\beta}_{i}\|_{1} \right),
\end{equation}
where we have $\pmb{\theta}_{i}^{T}\pmb{Y}^{T}\pmb{Y}\pmb{\theta}_{i}=n$ by feasibility of $\pmb{\theta}$. \\
Now, rearranging we have a nontrivial $\pmb{\beta}$ solution if $\pmb{\beta}^{\ast}$ has value at most $nq$. Therefore, we need
\begin{equation}
    \sum_{i=1}^{q}\left( \pmb{\beta}_{i}^{\ast T} \left( \pmb{X}^{T}\pmb{X}+\gamma\pmb{I}\right)\pmb{\beta}_{i}^{\ast}-2\pmb{\beta}_{i}^{\ast T}\pmb{X}^{T}\pmb{Y}\pmb{\theta}_{i}+ \lambda\|\pmb{\beta}_{i}\|_{1} \right) \leq 0.
\end{equation}
Thus, it is sufficient to choose
\begin{equation}\label{eq: lambda equation}
    \lambda \leq \frac{\sum_{i=1}^{q}\pmb{\beta}_{i}^{T}(\pmb{X}^{T}\pmb{X} + \gamma \pmb{I})\pmb{\beta}_{i}-2\pmb{\beta}_{i}^{T}\pmb{X}^{T}\pmb{Y\theta}_{i}}{\|\pmb{\beta}\|_{1}},
\end{equation}
to ensure there exists at least one solution $\pmb{\beta}^{\ast}$ with objective value strictly less than the value of the trivial solution $\pmb{\beta} = \pmb{0}$. 

A more flexible approach is to assign a different regularization term $\lambda_i$ for each column of $\pmb{\beta}$:
\begin{equation}
    \underset{\pmb{\beta}}{\min}\sum_{i=1}^{q} \left( 
    \|\pmb{Y}\pmb{\theta}_{i}-\pmb{X}\pmb{\beta}_{i}\|^{2}+\gamma\|\pmb{\beta}_{i}\|^{2}+\lambda_i\|\pmb{\beta}_{i}\|_{1}
    \right).
\end{equation}
In this case, we can ensure a non-trivial solution by choosing $\lambda_1, \lambda_2, \dots, \lambda_q$ to satisfy 
$$
\lambda_i \le \frac{\pmb{\beta}_{i}^{T}(\pmb{X}^{T}\pmb{X} + \gamma \pmb{I})\pmb{\beta}_{i}-2\pmb{\beta}_{i}^{T}\pmb{X}^{T}\pmb{Y\theta}_{i}}{\|\pmb{\beta}_i\|_{1}}
$$
for all $i=1,2,\dots, q$.

%++++++++++++++++++++++++++++++++++++++++++++++++++++++++++++++++++
%++++++++++++++++++++++++++++++++++++++++++++++++++++++++++++++++++
% Initialization.
\subsubsection{Initialization}
\label{sec: initialization}
%++++++++++++++++++++++++++++++++++++++++++++++++++++++++++++++++++
%++++++++++++++++++++++++++++++++++++++++++++++++++++++++++++++++++

%++++++++++++++++++++++++++++++++++++++++++++++++++++++++++++++++++
%++++++++++++++++++++++++++++++++++++++++++++++++++++++++++++++++++
% Initial Theta
%++++++++++++++++++++++++++++++++++++++++++++++++++++++++++++++++++
%++++++++++++++++++++++++++++++++++++++++++++++++++++++++++++++++++
We randomly initialize $\T$ and apply an orthogonalization process based on modification of the Gram-Schmidt process
to ensure that our initial iterate $\T$ satisfies $\T^T\Y^T \Y \T = n \I$.
Details of this process are given in~Algorithm~\ref{Pseudocode for initializing theta}.

\begin{algorithm} [t!] 
	\begin{algorithmic}[1]
        \State Input random $\pmb{\theta}\in\mathbb{R}^{k\times q}.$
		\For {$i =1,2,\ldots$}
			\State $\pmb{D} = \frac{1}{n}(\pmb{Y}^{T}\pmb{Y})$
			\State $\pmb t = \pmb u- \pmb Q_{j}(\pmb Q_{j}^{T}\pmb D \pmb u)$
            \State $a = \pmb t^{T} \pmb D  \pmb t$
			\State $\pmb{\theta}_{i} = \pmb t/\sqrt{a}$
		\EndFor
	\end{algorithmic}  
	\caption{Pseudocode for initializing $\pmb{\theta}$} 
 \label{Pseudocode for initializing theta}
\end{algorithm}

%++++++++++++++++++++++++++++++++++++++++++++++++++++++++++++++++++
%++++++++++++++++++++++++++++++++++++++++++++++++++++++++++++++++++
% Beta initialization.
%++++++++++++++++++++++++++++++++++++++++++++++++++++++++++++++++++
%++++++++++++++++++++++++++++++++++++++++++++++++++++++++++++++++++

On the other hand, we calculate 
initial $\pmb{\beta}$ from the initial orthogonal $\pmb{\theta}$ and the data matrix $\pmb{X}$ using the Sherman-Morrison-Woodbury process.

% SMW Lemma.
\begin{lemma}[Sherman-Morrison-Woodbury] 
If $\pmb{A} = \pmb{E} + \pmb{UV}$, then 
\begin{equation}\label{eq:Sherman formula}
   \pmb{A}^{-1} = (\pmb{E} + \pmb{UV})^{-1}
            = \pmb{E}^{-1} - \pmb{E}^{-1}\pmb{U}(\pmb{I} + \pmb{V}\pmb{E}^{-1}\pmb{U})^{-1}\pmb{V}\pmb{E}^{-1}.
\end{equation}
\end{lemma}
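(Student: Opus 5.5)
The plan is to verify directly that the proposed right-hand side is a two-sided inverse of $\pmb{A} = \pmb{E} + \pmb{UV}$, assuming (as is implicit in the statement) that $\pmb{E}$ and the \emph{capacitance matrix} $\pmb{C} := \pmb{I} + \pmb{V}\pmb{E}^{-1}\pmb{U}$ are invertible. Let $\pmb{E}\in\mathbb{R}^{m\times m}$, $\pmb{U}\in\mathbb{R}^{m\times r}$, $\pmb{V}\in\mathbb{R}^{r\times m}$, so $\pmb{C}$ is $r\times r$. Set
\[
\pmb{M} := \pmb{E}^{-1} - \pmb{E}^{-1}\pmb{U}\pmb{C}^{-1}\pmb{V}\pmb{E}^{-1}.
\]
Once $\pmb{A}\pmb{M} = \pmb{I}$ is shown, $\pmb{M}$ is a right inverse of a square matrix and hence equals $\pmb{A}^{-1}$. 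This also gives invertibility of $\pmb{A}$ for free.

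The key step is the product expansion. I will compute
\[
\pmb{A}\pmb{M} = (\pmb{E}+\pmb{UV})\pmb{E}^{-1} - (\pmb{E}+\pmb{UV})\pmb{E}^{-1}\pmb{U}\pmb{C}^{-1}\pmb{V}\pmb{E}^{-1}.
\]
The first term is $\pmb{I} + \pmb{U}\pmb{V}\pmb{E}^{-1}$. In the second term, I will factor $\pmb{U}$ out on the left:
\[
(\pmb{E}+\pmb{UV})\pmb{E}^{-1}\pmb{U} = \pmb{U} + \pmb{U}\pmb{V}\pmb{E}^{-1}\pmb{U} = \pmb{U}(\pmb{I}+\pmb{V}\pmb{E}^{-1}\pmb{U}) = \pmb{U}\pmb{C}.
\]
The second term is therefore $\pmb{U}\pmb{C}\pmb{C}^{-1}\pmb{V}\pmb{E}^{-1} = \pmb{U}\pmb{V}\pmb{E}^{-1}$. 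Subtracting gives $\pmb{A}\pmb{M} = \pmb{I}$.

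For completeness, I will also check $\pmb{M}\pmb{A} = \pmb{I}$ in the same way, using the mirrored identity $\pmb{V}\pmb{E}^{-1}(\pmb{E}+\pmb{UV}) = \pmb{C}\pmb{V}$. This avoids relying on the square-matrix argument.

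The only real obstacle is not algebraic but a matter of hypotheses. The statement as written omits the invertibility assumptions on $\pmb{E}$ and on $\pmb{I}+\pmb{V}\pmb{E}^{-1}\pmb{U}$, so I will state them explicitly. I will also note the standard fact, via the determinant identity $\det(\pmb{E}+\pmb{UV}) = \det(\pmb{E})\det(\pmb{I}+\pmb{V}\pmb{E}^{-1}\pmb{U})$, that when $\pmb{E}$ is invertible, $\pmb{A}$ is invertible exactly when $\pmb{C}$ is. In the intended application, $\pmb{E} = \gamma\pmb{I}$ with $\gamma>0$ and $\pmb{U}\pmb{V} = \pmb{X}^T\pmb{X}$, so $\pmb{C} = \pmb{I} + \gamma^{-1}\pmb{X}\pmb{X}^T$ is positive definite and the hypotheses hold automatically. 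The payoff is that only an $n\times n$ system needs to be solved rather than a $p\times p$ one.
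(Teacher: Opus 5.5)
Your proposal is correct. The factorizations $(\pmb{E}+\pmb{UV})\pmb{E}^{-1}\pmb{U}=\pmb{U}\pmb{C}$ and $\pmb{V}\pmb{E}^{-1}(\pmb{E}+\pmb{UV})=\pmb{C}\pmb{V}$ give $\pmb{A}\pmb{M}=\pmb{M}\pmb{A}=\pmb{I}$. You are also right that the statement silently needs $\pmb{E}$ and $\pmb{I}+\pmb{V}\pmb{E}^{-1}\pmb{U}$ to be invertible; both hold automatically in the paper's use with $\pmb{E}=\gamma\pmb{I}$, $\pmb{U}=\pmb{X}^T$, $\pmb{V}=\pmb{X}$.

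The paper gives no proof of its own and quotes the identity as a standard fact. Your direct verification is exactly the standard argument it relies on, with the missing hypotheses made explicit.
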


After temporarily setting the $\ell_1$-penalty parameter $\lambda=0$, \eqref{eq: optimal beta} is equivalent to
\begin{equation}\label{eq: initial beta}
    \underset{\pmb{\beta}}{\min}\sum_{i=1}^{q}\left( \pmb{\beta}_{i}^{T}\pmb{X}^{T}\pmb{X}\pmb{\beta}_{i}-2\pmb{\beta}_{i}^{T}\pmb{X}^{T}\pmb{Y}\pmb{\theta}_{i}^{\ast}\right)+\gamma\|\pmb{\beta}_{i}\|^{2}.
\end{equation}
Here $\pmb{\theta}_{i}^{\ast}$ is the orthogonal initial $\pmb{\theta}$. Applying stationarity and taking partial derivatives of  (\ref{eq: initial beta}) with respect to $\pmb{\beta}_{i}$ gives optimum $\pmb{\beta}^{\ast}$ such that
\begin{equation}\label{eq: initial beta equation}
 2\left( \pmb{X}^{T}\pmb{X}+\gamma\pmb{I}\right)\pmb{\beta}_{i}^{\ast}= 2\pmb{X}^{T}\pmb{Y}\pmb{\theta}_{i}^{\ast}.
\end{equation}
Applying the Sherman-Morrison-Woodbury formula (\ref{eq:Sherman formula}), shows that 
\begin{align*}
        \pmb{\beta}_{i}^{\ast} &=  \left(\pmb{X}^{T}\pmb{X} +\gamma\pmb{I}\right)^{-1}\pmb{X}^{T}\pmb{Y}\pmb{\theta}_{i}^{\ast}\\
        &= \left( \frac{1}{\gamma}\pmb{I} - \frac{1}{\gamma}^{2}\pmb{X}^{T}\left( \pmb{I} + \frac{1}{\gamma}\pmb{X} \pmb{X}^{T}\right)^{-1} \pmb{X} \right)\pmb{X}^{T}\pmb{Y}\pmb{\theta}_{i}^{\ast}\\
        &= \frac{1}{\gamma}\left( \pmb{M} - \frac{1}{\gamma}\pmb{X}^{T}\left( \pmb{I} + \frac{1}{\gamma} \pmb{X} \pmb{X}^{T}\right)^{-1}(\pmb{XM})\right),
\end{align*}
where $\pmb{M} = \pmb{X}^{T}\pmb{Y}\pmb{\theta}_{i}^{\ast}$. This process is summarized in Alg.~\ref{Pseudocode for initial beta}.

\begin{algorithm}[t]    
	\begin{algorithmic}[1]
        \State \textbf{Input} $\pmb{X}, \pmb{\theta}^{\ast}, \pmb{Y}, \gamma$
        \State Calculate $\pmb{B} = \pmb{X}^{T}\pmb{Y}\pmb{\theta}_{i}^{\ast}$
        \State Compute $\pmb{XB}$
        \State Compute $\pmb{I} + \frac{1}{\gamma}\pmb{XX}^{T} $
        \State Using Cholesky factorization solve $\left(\pmb{I} + \frac{1}{\gamma}\pmb{XX}^{T} \right)\pmb{V} = \pmb{XB}$
        \begin{enumerate}
            \item $\pmb{R}^{T}\pmb{R} = \pmb{I} + \frac{1}{\gamma}\pmb{XX}^{T}$
            \item Solve $\pmb{R}^{T}\pmb{RV} = \pmb{XB}$ for $\pmb{RV}$ $\Rightarrow \text{temp} = \pmb{RV} = \pmb{R}^{T}\backslash (\pmb{XB})$
            \item Solve $\pmb{V} = \pmb{R}\backslash \text{temp}$
        \end{enumerate}
        \State Compute $\pmb{\beta} = \frac{1}{\gamma}\left( \pmb{B} - \frac{1}{\gamma}\pmb{X}^{T}\pmb{V}\right)$.
		
	\end{algorithmic} 
	\caption{Pseudocode for $\pmb{\beta}$} 
        \label{Pseudocode for initial beta}
\end{algorithm}

%++++++++++++++++++++++++++++++++++++++++++++++++++++++++++++++++++
%++++++++++++++++++++++++++++++++++++++++++++++++++++++++++++++++++
% Augmented Lagrangian Parameter
\subsubsection{An Alternative Factorization}
\label{sec:L-form}

The matrix $\Y^T \Y$ is a diagonal matrix with $i$th diagonal entry equal to the number of observations $n_i$ belonging to class $i$. We can perform the matrix-splitting given in \eqref{eq: DFSOS final splitting} using the matrix 
\begin{equation} \label{eq: new L}
\L = \frac{1}{\sqrt{n}}
\begin{pmatrix}
    \sqrt{n_1} \\ & \sqrt{n_2} \\ & & \ddots \\ &&& \sqrt{n_K}.
\end{pmatrix}
\end{equation}
This factorization can be more computationally efficient than that outlined above as the matrix $\L$ is $k\times k$, rather than $n\times n$. Moreover, products involving $\L$ only require scaling according to the diagonal entries of $\L$.

%++++++++++++++++++++++++++++++++++++++++++++++++++++++++++++++++++
%++++++++++++++++++++++++++++++++++++++++++++++++++++++++++++++++++
% Augmented Lagrangian Parameter
\subsubsection{The Augmented Lagrangian Parameter}
\label{sec:rho-update}

We dynamically update the augmented Lagrangian parameter $r$ each iteration.
If the residual $\| \P^k - \L \T^k\|_F^2$ following iteration $k$ does not decrease a sufficient amount, then we increase $r$ to accelerate convergence of the Bregman iteration scheme as suggested in \cite[Section 3.1]{burer2003nonlinear}. Details for this update process are given in Alg.~\ref{Pseudocode for parameter r}.

\begin{algorithm}[t]
	\begin{algorithmic}[1]
        \State \textbf{Input} $\eta < 1, \sigma > 1$, initial $\rho$, initial orthogonal primal variable $ \pmb{P^0} = \L \T^0$.
        \State Initialize $v_{0} = 2 \|\pmb{P^0}\|_{F}^{2} = 2q.$
        \State At iteration $k$:  Compute $v=\|\pmb{P} - \L \pmb{\theta}\|_{F}^{2}$
        \If{$v< \eta v_{k}$}
            \State 
            $v_{k+1} = v$
        \Else
            \State
             $\rho_{k+1} = \sigma \rho_{k}$
             \State
            $v_{k+1} = v_{k}   $            
        \EndIf
	\end{algorithmic}  
	\caption{Update of penalty parameter $\rho$} 
    \label{Pseudocode for parameter r}
\end{algorithm}
%++++++++++++++++++++++++++++++++++++++++++++++++++++++++++++++++++
%++++++++++++++++++++++++++++++++++++++++++++++++++++++++++++++++++

%++++++++++++++++++++++++++++++++++++++++++++++++++++++++++++++++++
%++++++++++++++++++++++++++++++++++++++++++++++++++++++++++++++++++
%++++++++++++++++++++++++++++++++++++++++++++++++++++++++++++++++++
%++++++++++++++++++++++++++++++++++++++++++++++++++++++++++++++++++
\section{Empirical Analysis}
\label{sec:empirical_analysis}
%++++++++++++++++++++++++++++++++++++++++++++++++++++++++++++++++++
%++++++++++++++++++++++++++++++++++++++++++++++++++++++++++++++++++
%++++++++++++++++++++++++++++++++++++++++++++++++++++++++++++++++++
%++++++++++++++++++++++++++++++++++++++++++++++++++++++++++++++++++

We next compare the empirical performance of our deflation-free optimal scoring methods with existing methods for SOS, as well as standard classification methods, for a variety of data.

%++++++++++++++++++++++++++++++++++++++++++++++++++++++++++++++++++
%++++++++++++++++++++++++++++++++++++++++++++++++++++++++++++++++++
\subsection{Gaussian Data}
\label{sec:gaussians}
%++++++++++++++++++++++++++++++++++++++++++++++++++++++++++++++++++
%++++++++++++++++++++++++++++++++++++++++++++++++++++++++++++++++++

%++++++++++++++++++++++++++++++++++++++++++++++++++++++++++++++++++
% Description of Gaussian distribution.
%++++++++++++++++++++++++++++++++++++++++++++++++++++++++++++++++++

We first investigate the performance of our algorithm for classifying Gaussian data.
We performed the following experiment $10$ times for each $(K,r)$-pair for $K \in \{3,6\}$ and $r \in \{0.1, 0.5, 0.9\}$.
In each experiment, we sample $p$-dimensional vectors from $K$ multivariate normal distributions for $p=1000$. For each dataset, we apply sparse optimal scoring to perform dimension reduction, and validate this transformation by performing nearest centroid classification following projection onto the span of the learned discriminant vectors.

\subsubsection{Distribution of Gaussian Data}
\label{sec:gaussian-framework}

We generate training data corresponding to the $i$th class, $i = 1, 2, ..., K$, by sampling $n_{\text{train}}=100$ observations from the multivariate normal distribution with mean $\pmb\mu_{i} \in \mathbb{R}^{p}$ satisfying    
\begin{equation*} 
[\pmb\mu_{i}]_{j} = 
\begin{cases}
			& 0.7 \quad \text{if} ~100(i-1) < j\leq 100i \\
			& 0 \quad \text{otherwise}
\end{cases}
\end{equation*}
for all $j = 1, 2,..., K$ and
covariance matrix 
$\pmb\Sigma\in \mathbb{R}^{p\times p}$ with all features correlated defined by 
\begin{equation*}
    \Sigma_{ij} = 
    \begin{cases}
        r, & \text{if } i \neq j, \\ 
        1, & \text{if } i = j
    \end{cases}
\end{equation*}
for fixed scalar $r \in [0, 1)$.
In each experiment, we sampled $n_{\text{test}} = 1000$ testing observations from each class.  For each combination of $(K, r)$, we produced 10 training-validation pairs of data.

% All computations were executed on a standard node of the High Performance Computing (HPC) system located at the Alabama Supercomputer Center with 120 GB of RAM using Matlab R2022a. 

%++++++++++++++++++++++++++++++++++++++++++++++++++++++++++++++++++
% Comparators
%++++++++++++++++++++++++++++++++++++++++++++++++++++++++++++++++++
\subsubsection{Methods for Comparison and Choice of Parameters}

We apply the DFSOS method proposed in Section~\ref{sec: dfsos} to fit discriminant vectors using each training data set. We consider two variants based on solving~\eqref{eq: DFSOS final splitting} with 
$\L = \frac{1}{\sqrt{n}} \Y$ (\emph{DFSOS-1}) and $\L = \frac{1}{\sqrt{n}} \left(\Y^T \Y \right)^{1/2}$ (\emph{DFSOS-2}). We use the accelerated proximal gradient heuristic proposed in~\cite{atkins2023proximal} to solve~\eqref{eq: beta update} and update $\pmb{\beta}$ in each version of DFSOS. We compare our deflation-free approach with the accelerated proximal gradient (APG) and alternating direction method of multipliers (ADMM) heuristics for sparse optimal scoring proposed in~\cite{atkins2023proximal}. 
We use 5-fold cross validation to choose $\ell_1$-penalty parameter $\lambda$ in DFSOS, APG, and ADMM from 7~possible choices on the exponential grid
\begin{equation*}
    \left\{2^{-3} , 2^{-2}, 2^{-1}, 1, 2, 2^2, 2^3  \right\} \times \lambda_{\max},
\end{equation*}
where $\lambda_{\max}$ is chosen as in Section~\ref{sec: params}.
We also compare our classification results using \emph{support vector machine (SVM)} and \emph{$k$-nearest neighbour (KNN)} classifiers for $k=1,5,10$. 

%++++++++++++++++++++++++++++++++++++++++++++++++++++++++++++++++++
% Termination criteria and other parameters.
%++++++++++++++++++++++++++++++++++++++++++++++++++++++++++++++++++
% DFSOS
DFSOS requires choice of stopping criteria for the inner optimization of $\pmb{\beta}$ and the overall optimization iteration. We terminate the inner optimization if a $10^{-5}$ suboptimal solution is found or a maximum of $50$ iterations are performed; we stop the $\boldsymbol{\beta}$-update proximal gradient step after a $10^{-4}$ suboptimal solution is found or after $100$ iterations.
We terminate the outer loop after a maximum of 500 iterations or a $10^{-4}$ suboptimal solution is found. 
We use scaling parameters $\eta =0.25$, $\sigma =2$, and initial $\rho=5$ to dynamically update the augmented Lagrangian parameter in DFSOS as described in Section~\ref{sec:rho-update}.
We choose initial $\pmb\beta$ and $\pmb\theta$ as in Section~\ref{sec: initialization}.

% APG/ADMM
The ADMM and APG heuristics require choice of the regularization parameters $\boldsymbol{\Omega}, \gamma$, and $\lambda$. In each experiment, we set $\pmb{\Omega} = \pmb{I}$ and $\gamma = 10^{-1}$. We choose $\lambda$ by $5$-fold cross validation from an exponential grid equivalent with that used for DFSOS.
These heuristics require stopping criteria for inner optimization of $\pmb{\beta}$ and the outer optimization loop.
We terminate the inner loop after a maximum of 100 iterations or if a $10^{-4}$ suboptimal solution is found. We terminate the outer loop if a $10^{-4}$ suboptimal solution is found or a maximum of 500 iterations have been performed.
The augmented Lagrangian parameter for ADMM was set at $\mu =2$ for all experiments.

% Table summarizing parameters.

\subsubsection{Computational Environments}

All experiments were performed with Matlab R2023a~\cite{MATLAB} using a single serial node of the IRIDIS 5 High Performance Computing Facility at the University of Southampton. Matlab implementations of the DFSOS and ASDA methods are shared at Dr.~Ames's Github repository\footnote{\url{https://github.com/bpames/Deflation-Free-SOS}}. We used default settings for the support vector machine and nearest neighbours functions from Matlab's Statistics and Machine Learning Toolbox \cite{SML-toolbox, matlab-sml-toolbox-doc}.

%++++++++++++++++++++++++++++++++++++++++++++++++++++++++++++++++++
%++++++++++++++++++++++++++++++++++++++++++++++++++++++++++++++++++
% Results
\subsection{Discussion of Experiments involving Gaussian Data}
\label{sec:results-Gaussians}
%++++++++++++++++++++++++++++++++++++++++++++++++++++++++++++++++++
%++++++++++++++++++++++++++++++++++++++++++++++++++++++++++++++++++

Summary statistics for our experiments involving Gaussian data can be found in Table~\ref{tab:synth-1}.

%++++++++++++++++++++++++++++++++++++++++++++++++
%++++++++++++++++++++++++++++++++++++++++++++++++
%++++++++++++++++++++++++++++++++++++++++++++++++
% SYNTH TABLE
%++++++++++++++++++++++++++++++++++++++++++++++++
%++++++++++++++++++++++++++++++++++++++++++++++++
%++++++++++++++++++++++++++++++++++++++++++++++++
\begin{table}[t!]
\adjustbox{max width=\textwidth}{%
\begin{tabular}{llllllllll}
\toprule
  Data & Measure &      DFSOS-1 &      DFSOS-2 &          APG &         ADMM &         SVM &       1-NN &       5-NN &      10-NN \\
\midrule
$K=3$ &     Accuracy &    1.0 (0.0) &    0.999 (0.0) &    0.999 (0.0) &      1.0 (0.0) &     1.0 (0.0) &  0.92 (0.0) &   0.978 (0.0) & 0.988 (0.0) \\ $r=0.1$ 
       &   Run-time (s) &   4.325 (1.107) &  3.948 (0.512) &  2.462 (0.127) &   2.626 (0.25) & 0.174 (0.014) & 0.021 (0.0) & 0.029 (0.001) & 0.022 (0.0) \\
       &  Cardinality &    0.119 (0.0) &    0.113 (0.0) &    0.118 (0.0) &  0.166 (0.023) &            -- &          -- &            -- &          --  \\\midrule
$K=3$ &     Accuracy &      1.0 (0.0) &      1.0 (0.0) &    0.999 (0.0) &    0.998 (0.0) &     1.0 (0.0) &  0.98 (0.0) &   0.981 (0.0) & 0.973 (0.0) \\ $r=0.5$ 
       &   Run-time (s) &  5.084 (1.373) &  4.783 (0.635) &   2.87 (0.068) &  3.099 (0.265) &  0.17 (0.012) & 0.021 (0.0) & 0.027 (0.001) & 0.021 (0.0) \\
       &  Cardinality &    0.084 (0.0) &    0.085 (0.0) &    0.097 (0.0) &    0.093 (0.0) &            -- &          -- &            -- &          -- \\\midrule
$K=3$ &     Accuracy &      1.0 (0.0) &      1.0 (0.0) &  0.588 (0.002) &  0.614 (0.007) &     1.0 (0.0) & 0.991 (0.0) &   0.976 (0.0) & 0.962 (0.0) \\ $r=0.9$
       &   Run-time (s) &  5.022 (1.783) &  4.695 (0.656) &  2.446 (0.059) &  2.554 (0.234) & 0.167 (0.012) & 0.022 (0.0) & 0.028 (0.001) & 0.021 (0.0) \\
       &  Cardinality &  0.097 (0.001) &  0.113 (0.001) &    0.012 (0.0) &    0.012 (0.0) &            -- &          -- &            -- &          -- \\ \midrule\midrule
$K=6$ &     Accuracy &    0.996 (0.0) &    0.994 (0.0) &    0.992 (0.0) &    0.993 (0.0) &     1.0 (0.0) & 0.855 (0.0) &   0.949 (0.0) & 0.978 (0.0) \\  $r=0.1$
       &   Run-time (s) & 12.095 (0.561) & 11.283 (0.482) & 12.761 (0.177) & 12.737 (0.545) &  0.66 (0.015) & 0.036 (0.0) & 0.045 (0.001) & 0.034 (0.0) \\
       &  Cardinality &    0.168 (0.0) &     0.17 (0.0) &    0.141 (0.0) &    0.144 (0.0) &            -- &          -- &            -- &          --  \\ \midrule
$K=6$ &     Accuracy &    0.999 (0.0) &    0.999 (0.0) &  0.942 (0.001) &  0.951 (0.001) &     1.0 (0.0) & 0.969 (0.0) &   0.977 (0.0) & 0.972 (0.0) \\ $r=0.5$ 
       &   Run-time (s) &  15.44 (0.876) & 14.614 (0.536) & 16.253 (0.276) & 16.179 (1.285) & 0.655 (0.014) & 0.038 (0.0) & 0.046 (0.001) & 0.035 (0.0) \\
       &  Cardinality &  0.191 (0.001) &  0.203 (0.001) &    0.069 (0.0) &    0.069 (0.0) &            -- &          -- &            -- &          -- \\ \midrule
$K=6$ &     Accuracy &      1.0 (0.0) &      1.0 (0.0) &  0.942 (0.003) &  0.943 (0.003) &     1.0 (0.0) &  0.99 (0.0) &   0.975 (0.0) & 0.966 (0.0) \\ $r=0.9$
       &   Run-time (s) & 17.329 (0.992) & 16.439 (0.668) & 14.416 (0.401) &  14.24 (0.935) &  0.64 (0.014) & 0.036 (0.0) & 0.044 (0.001) & 0.032 (0.0) \\
       &  Cardinality &  0.123 (0.002) &  0.124 (0.002) &    0.094 (0.0) &    0.095 (0.0) &            -- &          -- &            -- &          -- \\
\bottomrule
\end{tabular}
}% close adjustbox.
\caption{Out-of-sample prediction accuracy, run-time in seconds, and fraction of non-zero features for discriminant vectors and classifiers for a selection of Gaussian data constructed as described in Section~\ref{sec:gaussian-framework}. Data is reported in the format \texttt{mean (variance)} over $10$ experiments for each $(K,r)$-pair.}
\label{tab:synth-1}
\end{table}

%++++++++++++++++++++++++++++++++++++++++++++++++
% Discussion / Illustration.
%++++++++++++++++++++++++++++++++++++++++++++++++
\subsubsection{Comparison of Classifiers}

Our proposed deflation-free methods consistently provide more accurate classifiers than the deflationary 
approaches based on the ASDA algorithm, with comparable run-times.
Run-times for all four methods were within a few seconds of each other in all experiments,
while discriminant vectors computed using DFSOS tended to be slightly more dense than those given by ASDA.
Moreover, both DFSOS methods matched or exceeded the accuracy of the SVM and KNN classifiers in all experiments. We must note that SVM and KNN classifiers required much less time to fit, as they do not require any training of regularization parameters.

This phenomenon was starkest when the data was highly correlated, e.g., when $r=0.9$. For example, when $r=0.9$ and $K=3$, the ASDA classifiers both struggled to identify meaningful discriminant directions. In each case, the ASDA method converged to a pair of discriminant vectors with a modest number of relatively small non-zero entries. 
To illustrate, we consider a random set of Gaussian data constructed as described in Section~\ref{sec:gaussians} with $K=3$ and $r=0.9$.
The discriminant vectors $\pmb{\beta}$ given by each method for these data are visualised in Fig.~\ref{fig:DV-9}.
We can see that the discriminant vectors found using ASDA have much smaller magnitude compared to those found using DFSOS. Moreover, the support of the pairs of discriminant vectors found by ASDA differ from those found by DFSOS, which have many more nonzero entries; we should also note that while the discriminant vectors found by DFSOS-1 and DFSOS-2 appear to differ, this is due to rotational symmetry in the paired scoring vectors and the dimension reduction and classifiers for the two DFSOS approaches behave very similarly.

This shrinkage of ASDA discriminant vectors to $0$ leads to significant degradation of classification performance. We visualise decision boundaries of the nearest centroid classifier following projection onto the span of discriminant vectors in Fig.~\ref{fig:DV-9-boundaries}. We can clearly observe that there is minimal separation between classes in the space spanned by the discriminant vectors found by ASDA (see Fig.~\ref{fig:DV-9-APG-boundaries} and Fig.~\ref{fig:DV-9-ADMM-boundaries}).
This is largely avoided when we use DFSOS. Here, observations in each class are well-separated in the discriminant vector space.

We compared this behaviour with that for a set of Gaussian data constructed as described in Section~\ref{sec:gaussians} with $K=3$ and $r=0.5$.
Visualisations of discriminant vectors and classification boundaries are given in Fig.~\ref{fig:DV-5} and Fig.~\ref{fig:DV-5-boundaries} respectively. We can see that all found methods give roughly the same discriminant vectors, decision boundaries, and predictions, up to rotational symmetry. However, even in this case, we see that the test data is better separated in the span of the discriminant vectors given by DFSOS.

\begin{figure}
    \centering
    \begin{subfigure}[t]{0.49\textwidth}
        \centering
        \includegraphics[width=0.99\linewidth]{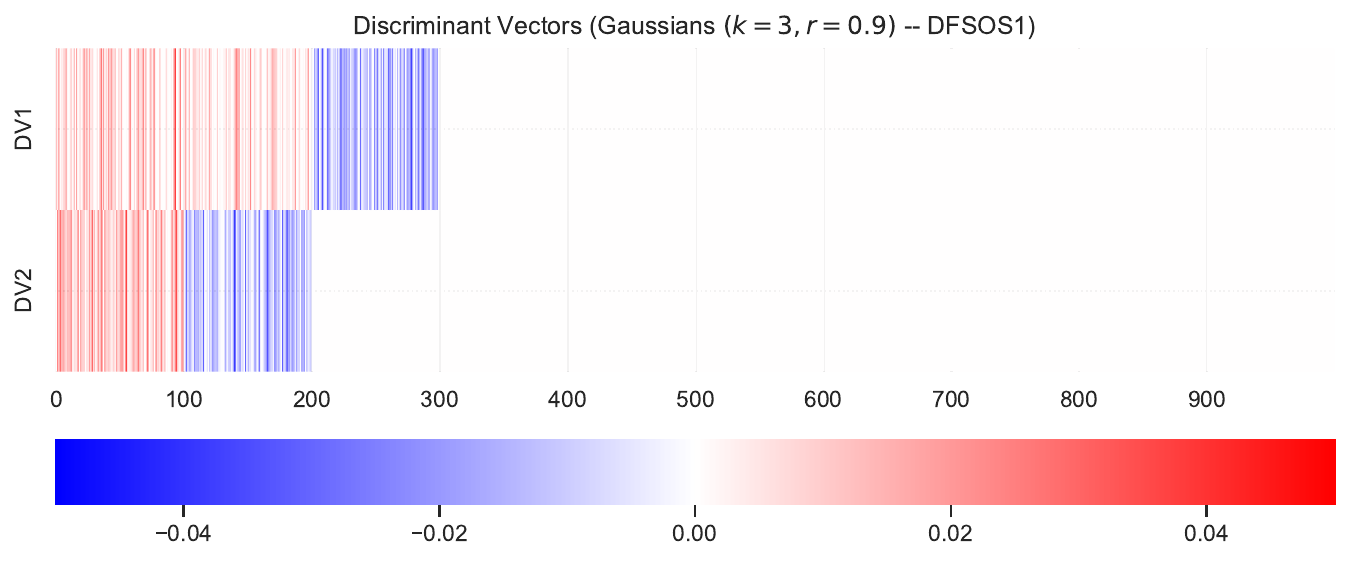}
        \caption{DFSOS-1}
        \label{fig:DV-9-DF1}
    \end{subfigure}
    \hfill
    \begin{subfigure}[t]{0.49\textwidth}
        \centering
        \includegraphics[width=0.99\linewidth]{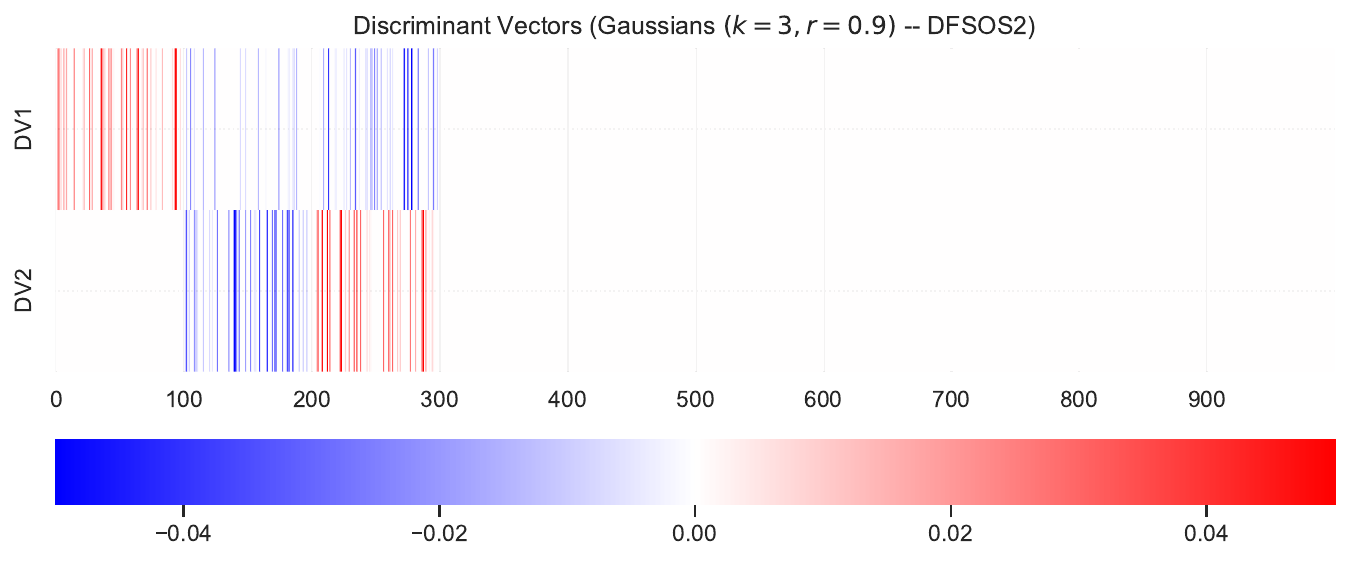}
        \caption{DFSOS-2}
        \label{fig:DV-9-DF2}
    \end{subfigure}

    \begin{subfigure}[t]{0.49\textwidth}
        \centering
        \includegraphics[width=0.99\linewidth]{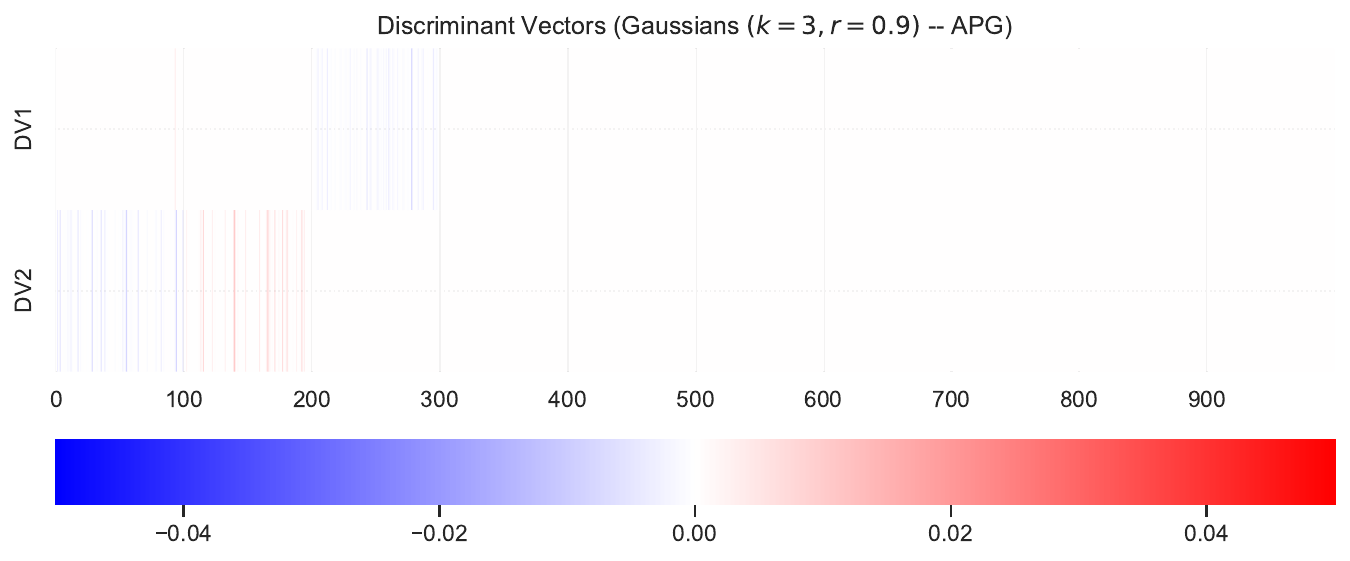}
        \caption{ASDA-APG}
        \label{fig:DV-9-APG}
    \end{subfigure}
    \hfill
    \begin{subfigure}[t]{0.49\textwidth}
        \centering
        \includegraphics[width=0.99\linewidth]{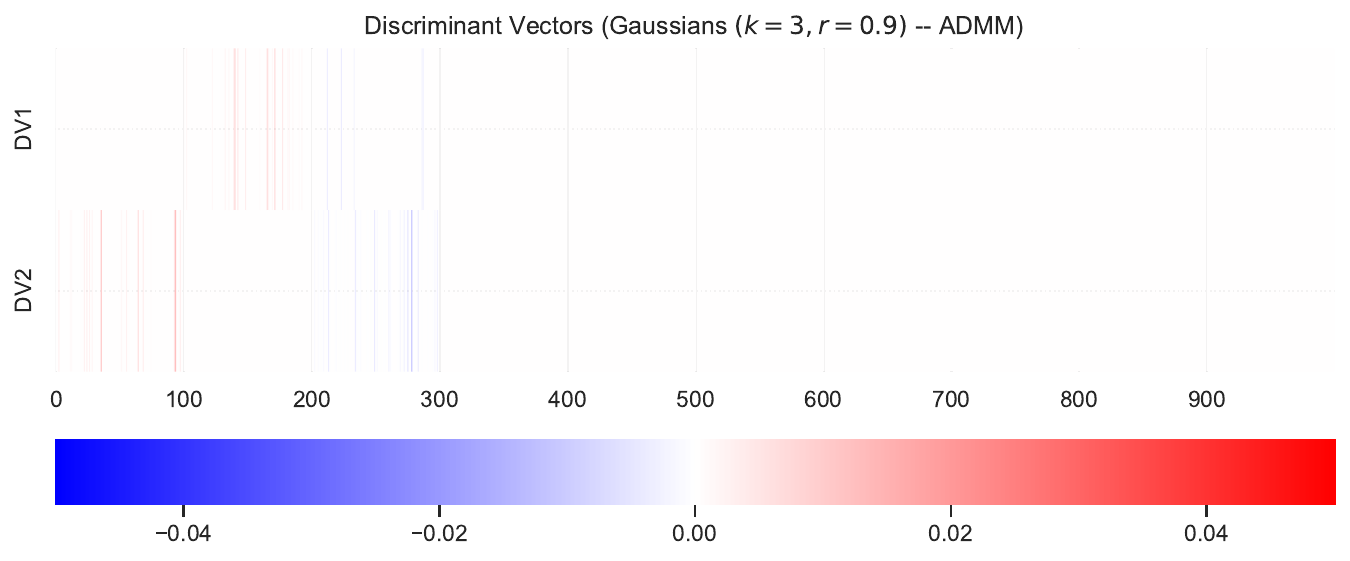}
        \caption{ASDA-ADMM}
        \label{fig:DV-9-ADMM}
    \end{subfigure}
    \caption{Visualisation of discriminant vectors returned by each of DFSOS-1, DFSOS-2, and ASDA with APG and ADMM  subproblem solvers. Note that the nonzero entries of the discriminant vectors found by ASDA have much small magnitude than that of the discriminant vectors found by each variant of DFSOS; this is indicated by very pale colours in Fig.~\ref{fig:DV-9-APG} and Fig.~\ref{fig:DV-9-ADMM}.}
    \label{fig:DV-9}
\end{figure}

%++++++++++++++++++++++++++++++++++++++++++++++++++++++++++++++++++++++++++
%++++++++++++++++++++++++++++++++++++++++++++++++++++++++++++++++++++++++++
% DV - r = 0.5
%++++++++++++++++++++++++++++++++++++++++++++++++++++++++++++++++++++++++++
%++++++++++++++++++++++++++++++++++++++++++++++++++++++++++++++++++++++++++

\begin{figure}
    \centering
    \begin{subfigure}[t]{0.49\textwidth}
        \centering
        \includegraphics[width=0.99\linewidth]{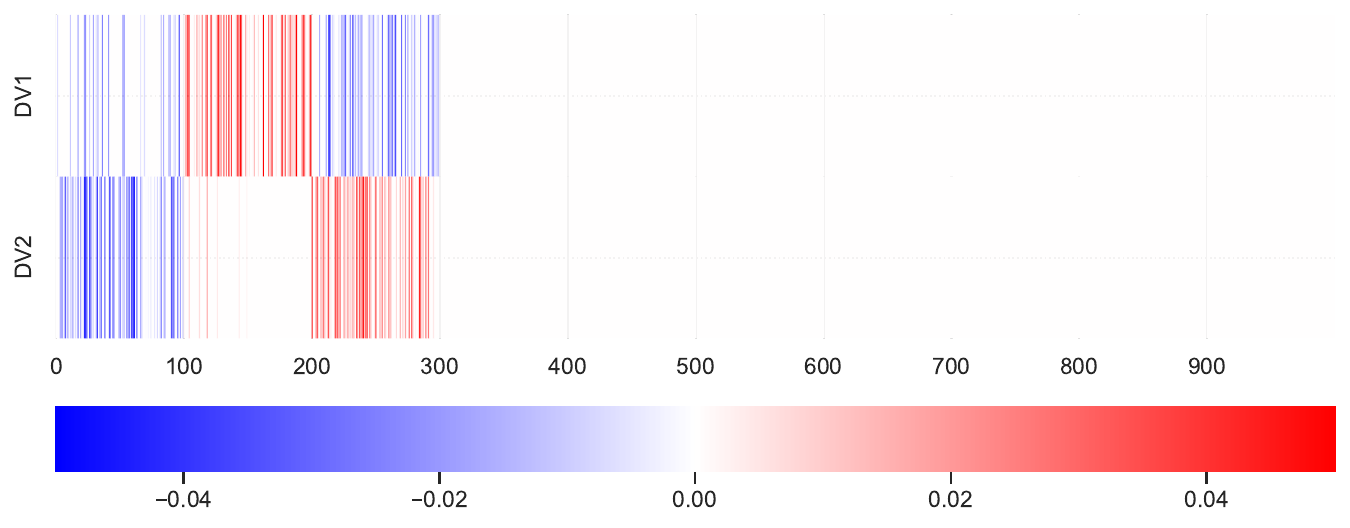}
        \caption{DFSOS-1}
        \label{fig:DV-5-DF1}
    \end{subfigure}
    \hfill
    \begin{subfigure}[t]{0.49\textwidth}
        \centering
        \includegraphics[width=0.99\linewidth]{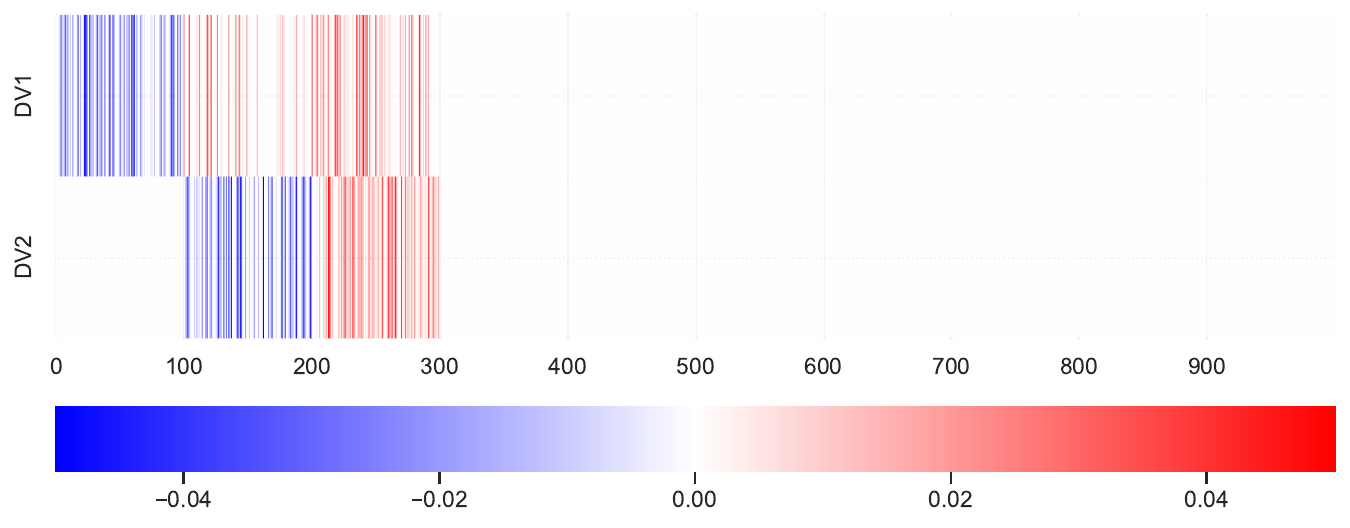}
        \caption{DFSOS-2}
        \label{fig:DV-5-DF2}
    \end{subfigure}

    \begin{subfigure}[t]{0.49\textwidth}
        \centering
        \includegraphics[width=0.99\linewidth]{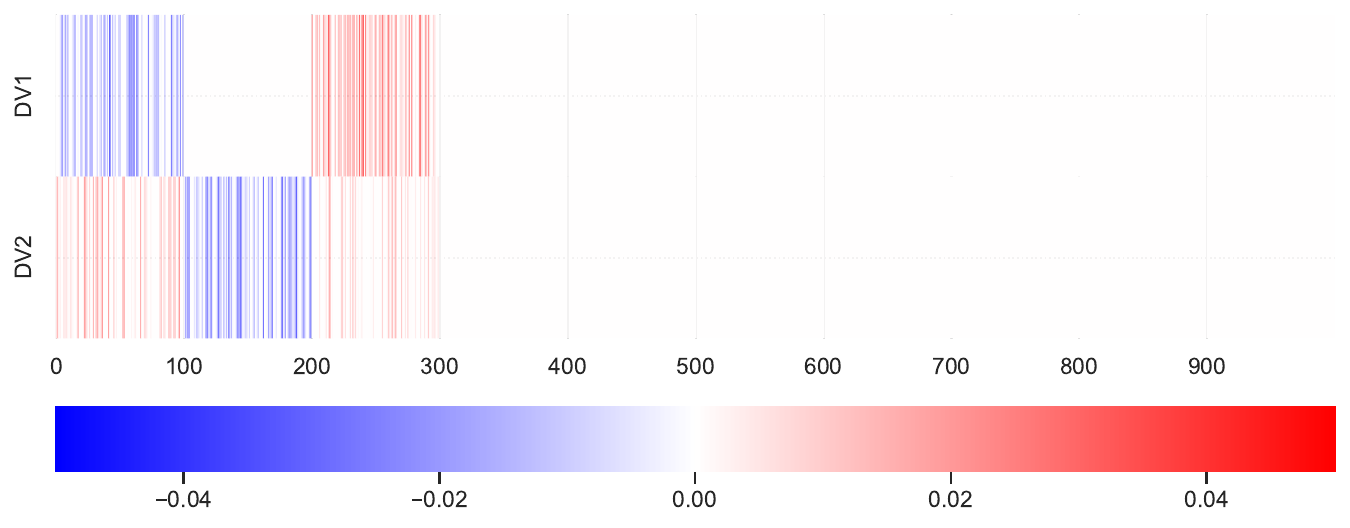}
        \caption{ASDA-APG}
        \label{fig:DV-5-APG}
    \end{subfigure}
    \hfill
    \begin{subfigure}[t]{0.49\textwidth}
        \centering
        \includegraphics[width=0.99\linewidth]{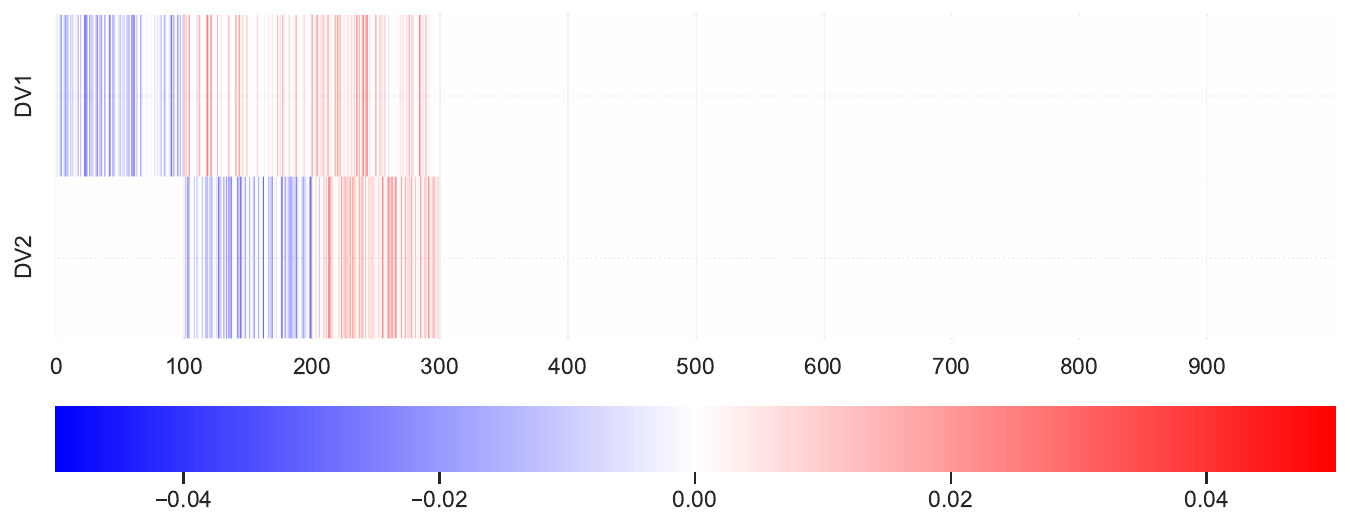}
        \caption{ASDA-ADMM}
        \label{fig:DV-5-ADMM}
    \end{subfigure}
    \caption{Visualisation of discriminant vectors returned by each of DFSOS-1, DFSOS-2, and ASDA with APG and ADMM  subproblem solvers. Note that the nonzero entries of the discriminant vectors found by ASDA, unlike in Fig.~\ref{fig:DV-9}, have similar magnitude to the discriminant vectors found by each variant of DFSOS.}
    \label{fig:DV-5}
\end{figure}

%++++++++++++++++++++++++++++++++++++++++++++++++++++++++++++++++++++++++++
%++++++++++++++++++++++++++++++++++++++++++++++++++++++++++++++++++++++++++
% Boundaries - r = 0.9
%++++++++++++++++++++++++++++++++++++++++++++++++++++++++++++++++++++++++++
%++++++++++++++++++++++++++++++++++++++++++++++++++++++++++++++++++++++++++

\begin{figure}
    \centering
    \begin{subfigure}[t]{0.49\textwidth}
        \centering
        \includegraphics[width=0.99\linewidth]{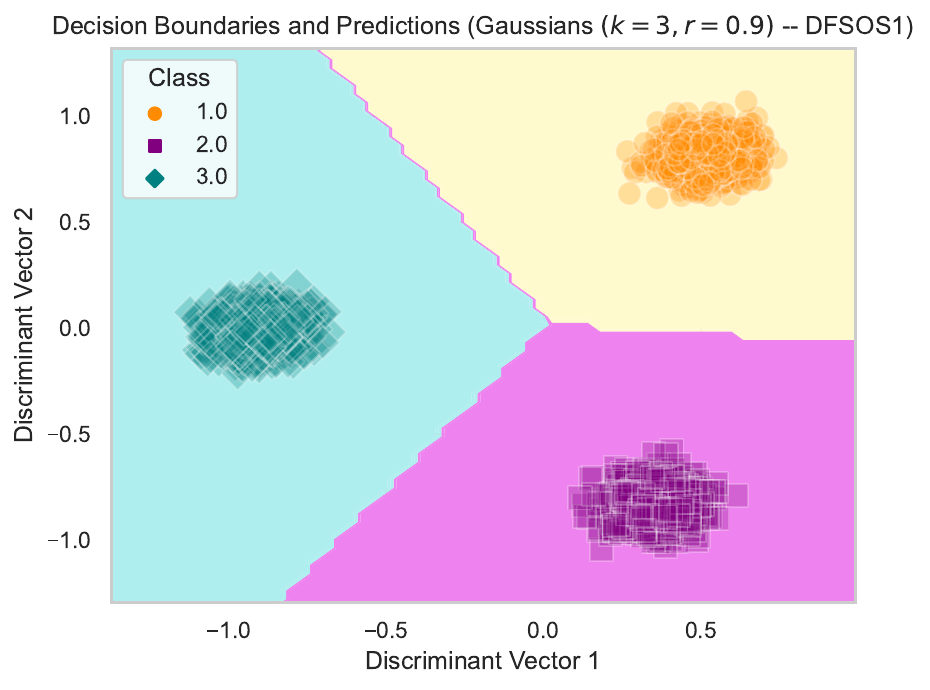}
        \caption{DFSOS-1}
        \label{fig:DV-9-DF1-boundaries}
    \end{subfigure}
    \hfill
    \begin{subfigure}[t]{0.49\textwidth}
        \centering
        \includegraphics[width=0.99\linewidth]{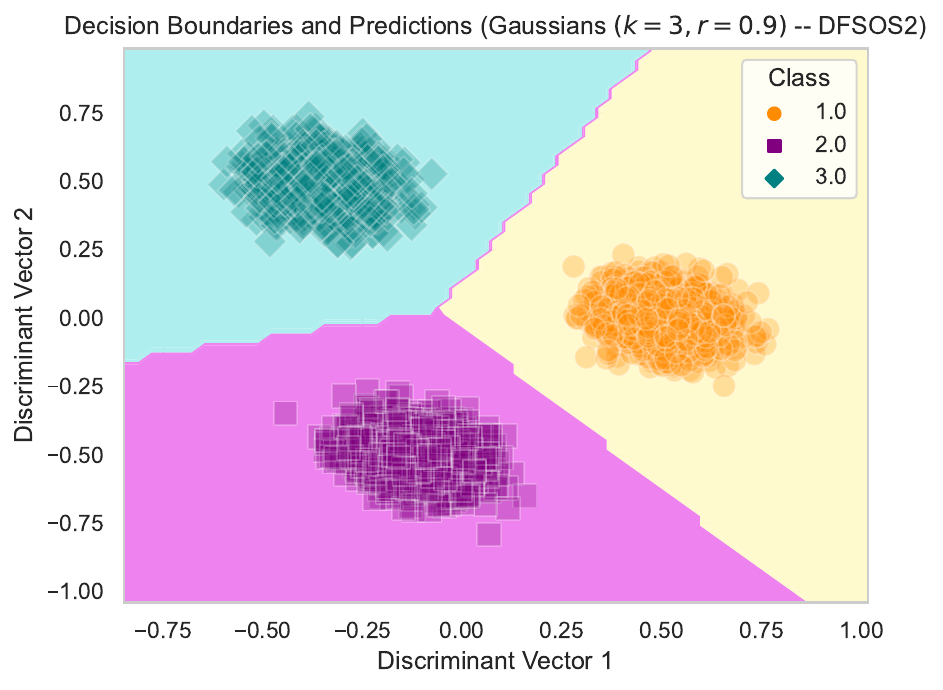}
        \caption{DFSOS-2}
        \label{fig:DV-9-DF2-boundaries}
    \end{subfigure}

    \begin{subfigure}[t]{0.49\textwidth}
        \centering
        \includegraphics[width=0.99\linewidth]{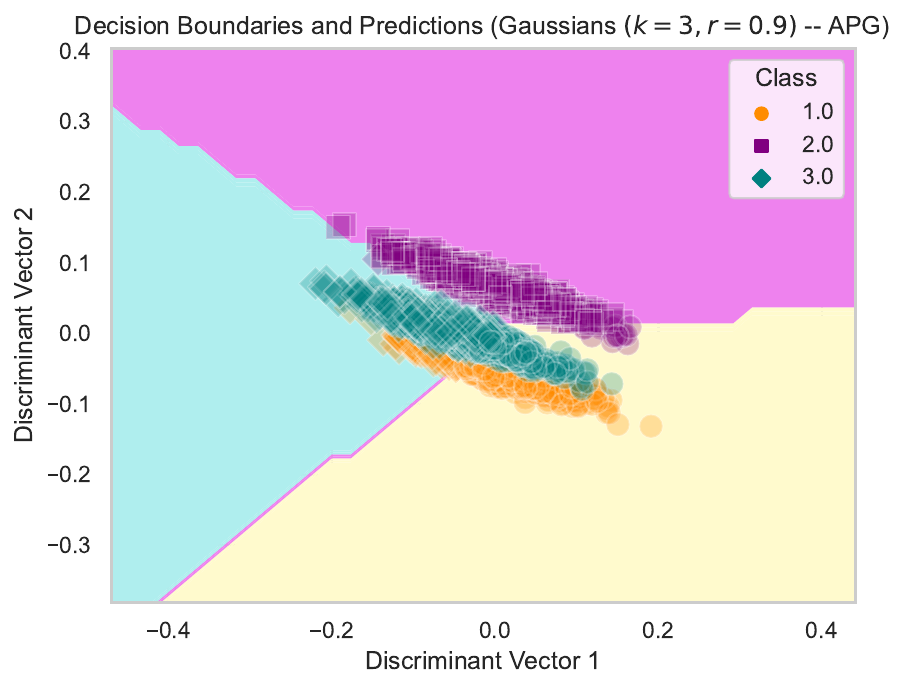}
        \caption{ASDA-APG}
        \label{fig:DV-9-APG-boundaries}
    \end{subfigure}
    \hfill
    \begin{subfigure}[t]{0.49\textwidth}
        \centering
        \includegraphics[width=0.99\linewidth]{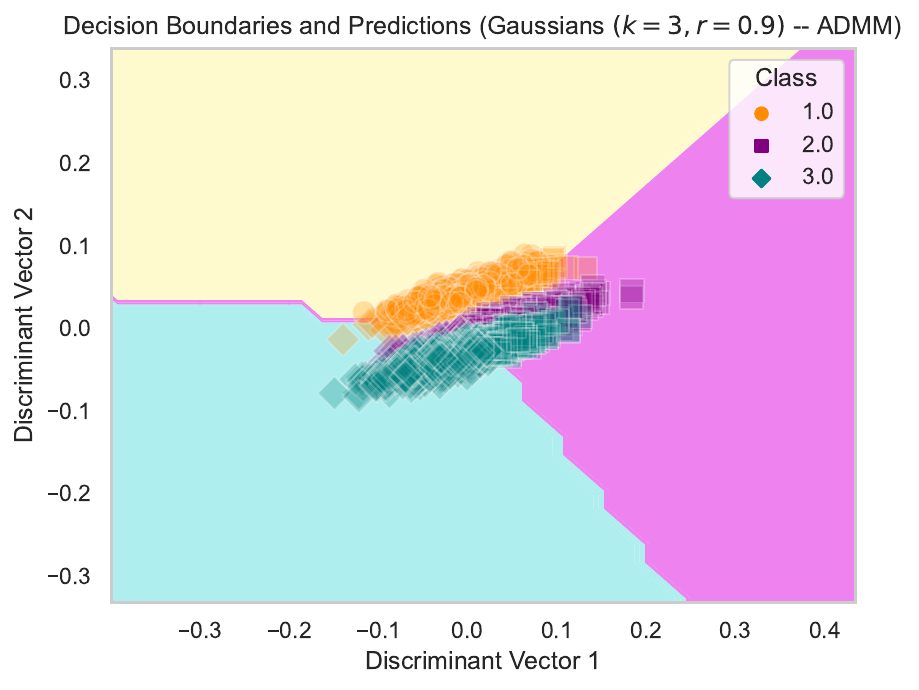}
        \caption{ASDA-ADMM}
        \label{fig:DV-9-ADMM-boundaries}
    \end{subfigure}
    \caption{Nearest centroid classification decision boundaries and predictions for testing data  in the span of each discriminant vector pair for $K=3$, $r=0.9$. Actual class label is indicated by colour and predicted class label is indicated by shape. Note that test data is not separable when using the discriminant vectors given by ASDA.}
    \label{fig:DV-9-boundaries}
\end{figure}

%++++++++++++++++++++++++++++++++++++++++++++++++++++++++++++++++++++++++++
%++++++++++++++++++++++++++++++++++++++++++++++++++++++++++++++++++++++++++
% Boundaries - r = 0.5
%++++++++++++++++++++++++++++++++++++++++++++++++++++++++++++++++++++++++++
%++++++++++++++++++++++++++++++++++++++++++++++++++++++++++++++++++++++++++

\begin{figure}
    \centering
    \begin{subfigure}[t]{0.49\textwidth}
        \centering
        \includegraphics[width=0.99\linewidth]{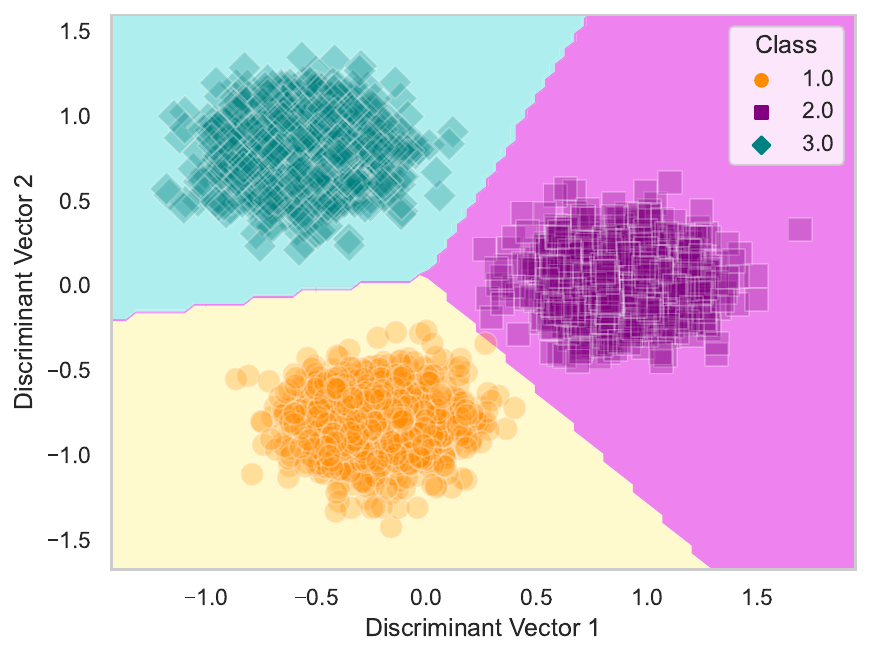}
        \caption{DFSOS-1}
        \label{fig:DV-5-DF1-boundaries}
    \end{subfigure}
    \hfill
    \begin{subfigure}[t]{0.49\textwidth}
        \centering
        \includegraphics[width=0.99\linewidth]{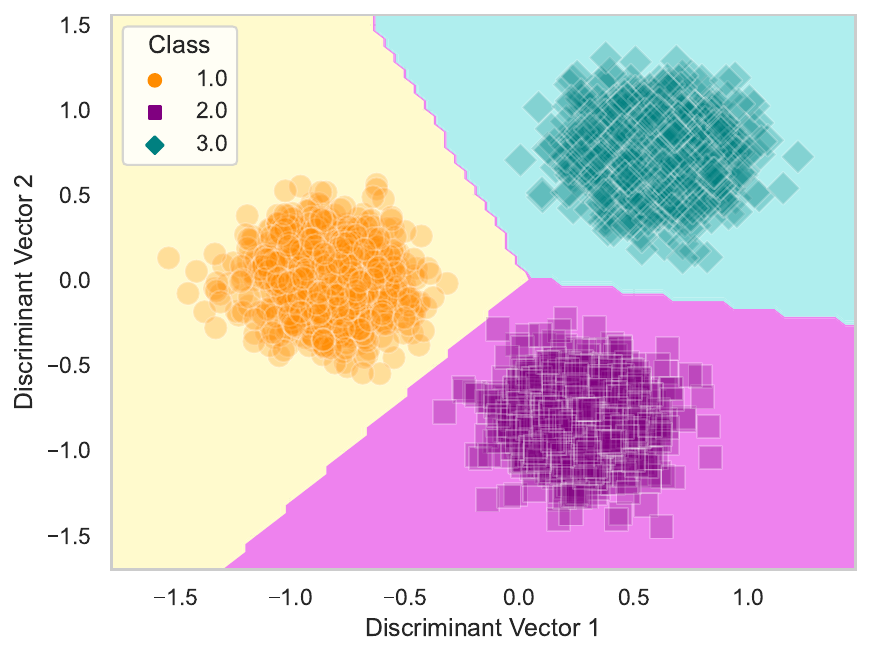}
        \caption{DFSOS-2}
        \label{fig:DV-5-DF2-boundaries}
    \end{subfigure}

    \begin{subfigure}[t]{0.49\textwidth}
        \centering
        \includegraphics[width=0.99\linewidth]{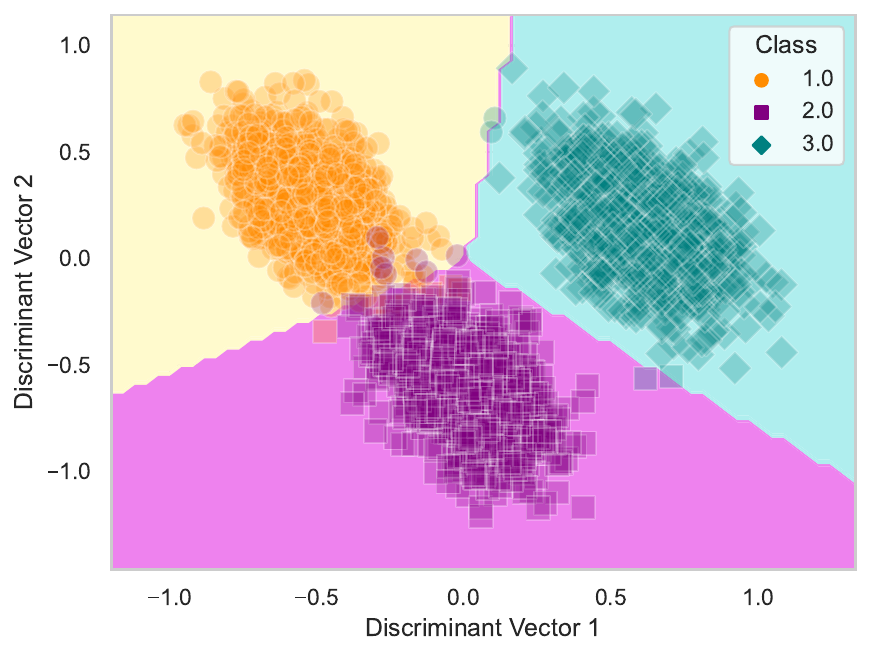}
        \caption{ASDA-APG}
        \label{fig:DV-5-APG-boundaries}
    \end{subfigure}
    \hfill
    \begin{subfigure}[t]{0.49\textwidth}
        \centering
        \includegraphics[width=0.99\linewidth]{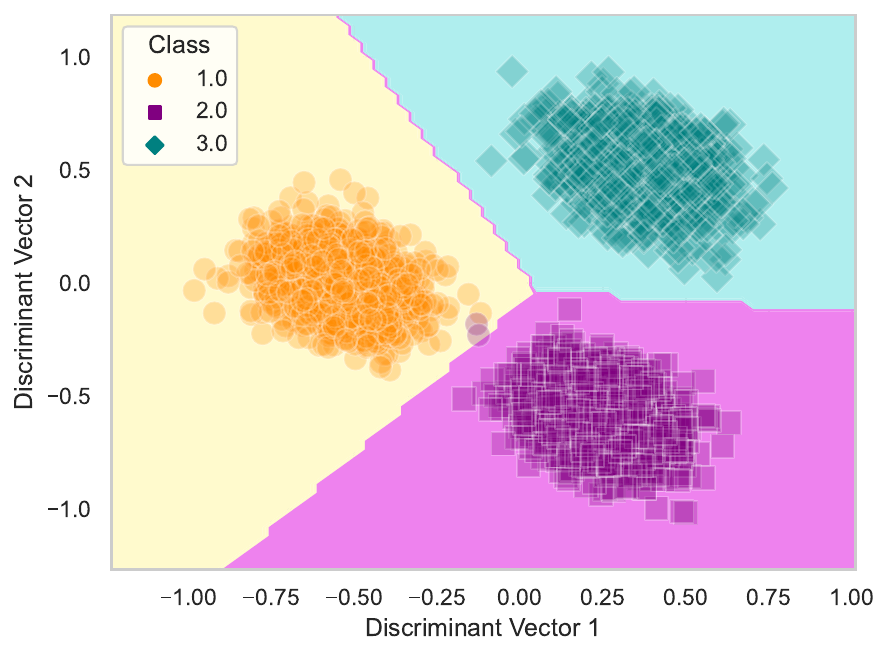}
        \caption{ASDA-ADMM}
        \label{fig:DV-5-ADMM-boundaries}
    \end{subfigure}
    \caption{Nearest centroid classification decision boundaries and predictions for testing data in the span of each discriminant vector pair  for $K=3$, $r=0.5$. Actual class label is indicated by colour and predicted class label is indicated by shape. Note that test data is more separable when using the discriminant vectors given by DFSOS.}
    \label{fig:DV-5-boundaries}
\end{figure}

%++++++++++++++++++++++++++++++++++++++++++++++++++++++++++++++++++
%++++++++++++++++++++++++++++++++++++++++++++++++++++++++++++++++++
% Real-Data
\subsubsection{Hypothesis Tests for Gaussian Data}
\label{sec:synth-hypothesis-tests}
%++++++++++++++++++++++++++++++++++++++++++++++++++++++++++++++++++
%++++++++++++++++++++++++++++++++++++++++++++++++++++++++++++++++++

For each $(K,r)$-pair, we performed one-sided Wilcoxon tests to compare pairs of classification methods in terms of prediction error, run-times, and cardinality.
For measure $f$, we test the null hypothesis $H_0: f(i) = f(j)$ against the one-sided alternative $H_a: f(i) < f(j)$ for each pair of methods $i$ and $j$.
The number of times we reject $H_0$ for each $(i,j)$-pair using an $\alpha = 0.05$ level of significance for each measure is given in Fig.~\ref{fig:synth-hypothesis-tests}. We observe the following:
\begin{itemize}
    \item Classifiers calculated using the two variants of DFSOS were significantly more accurate than all methods except for SVM in most trials. DFSOS classifiers were more accurate than all KNN classifiers for all $(K,r)$; DFSOS-1 was significantly more accurate than both ASDA methods for all but one $(K,r)$-pair, and DFSOS-2 was significantly more accurate than both ASDA methods for all but two $(K,r)$-pairs.
    We should note that we observe similar phenomena using other accuracy metrics such as recall, precision, and $f$-score.
    \item DFSOS tended to find denser discriminant vectors than ASDA; ASDA had significantly sparse discriminant vectors for four $(K,r)$ pairs.
    \item DFSOS required, on average, more time to converge than the ASDA methods, albeit by a small amount of actual time (1-2 seconds in most trials). ASDA had significantly less run-time for four $(K,r)$-pairs.
    On the other hand, DFSOS-2 required less computation on average than DFSOS-1 for all $(K,r)$-pairs.
\end{itemize}
From these hypothesis tests, we can conclude that DFSOS provides significantly better classification accuracy than ASDA, at a modest cost of increased run-time and cardinality of solutions. Moreover, this classification performance matches the state of the art for linearly separable data given by support vector machines, with increased interpretability due to the sparsity of discriminant vectors.

%++++++++++++++++++++++++++++++++++++++++++++++++++++++++++++++++++++++++++
%++++++++++++++++++++++++++++++++++++++++++++++++++++++++++++++++++++++++++
% Hypothesis Tests -- Synthetic Data
%++++++++++++++++++++++++++++++++++++++++++++++++++++++++++++++++++++++++++
%++++++++++++++++++++++++++++++++++++++++++++++++++++++++++++++++++++++++++

\begin{figure}
    \centering

    \begin{subfigure}[t]{0.3\textwidth}
        \centering
        \includegraphics[width=0.99\linewidth]{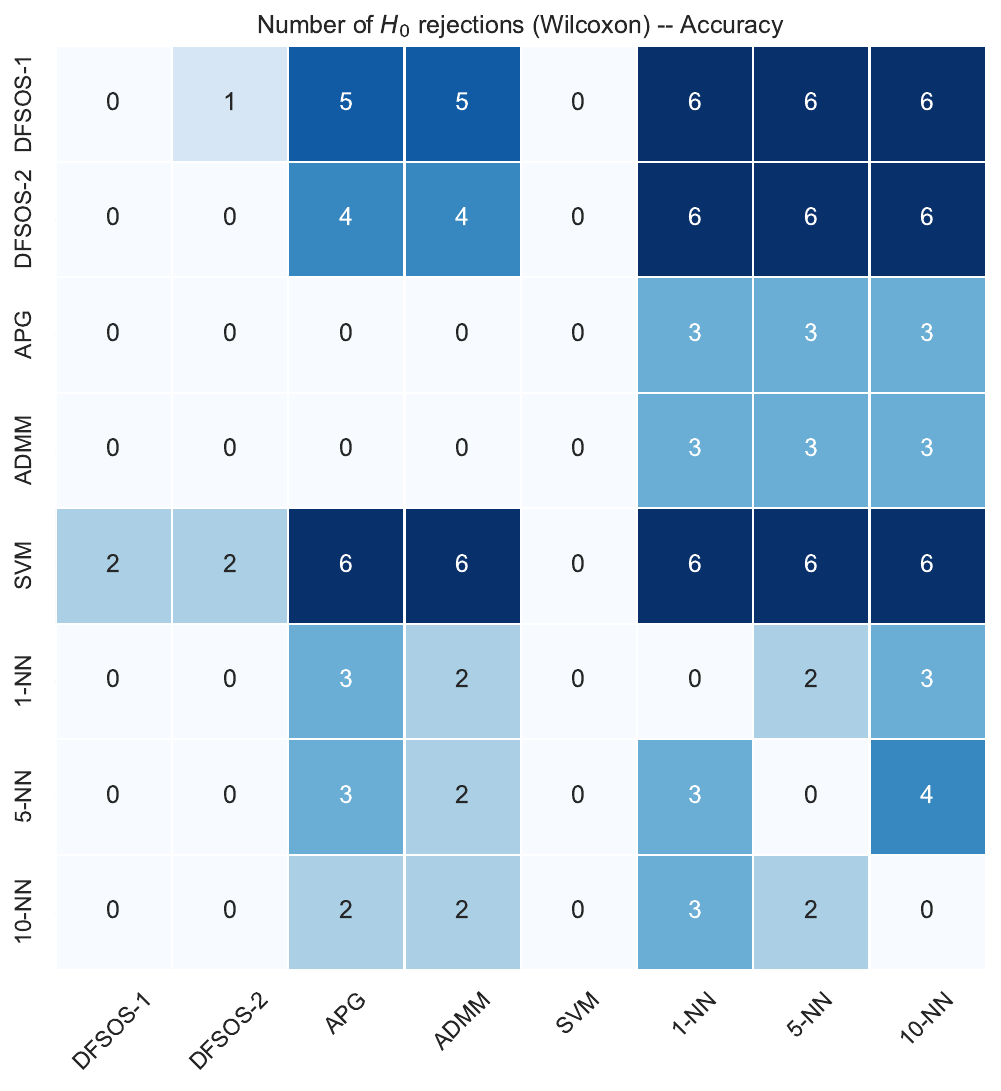}
        \caption{Classification Accuracy}
        \label{fig:DV-5-DF2-boundaries}
    \end{subfigure}
        \hfill
    \begin{subfigure}[t]{0.3\textwidth}
        \centering
        \includegraphics[width=0.99\linewidth]{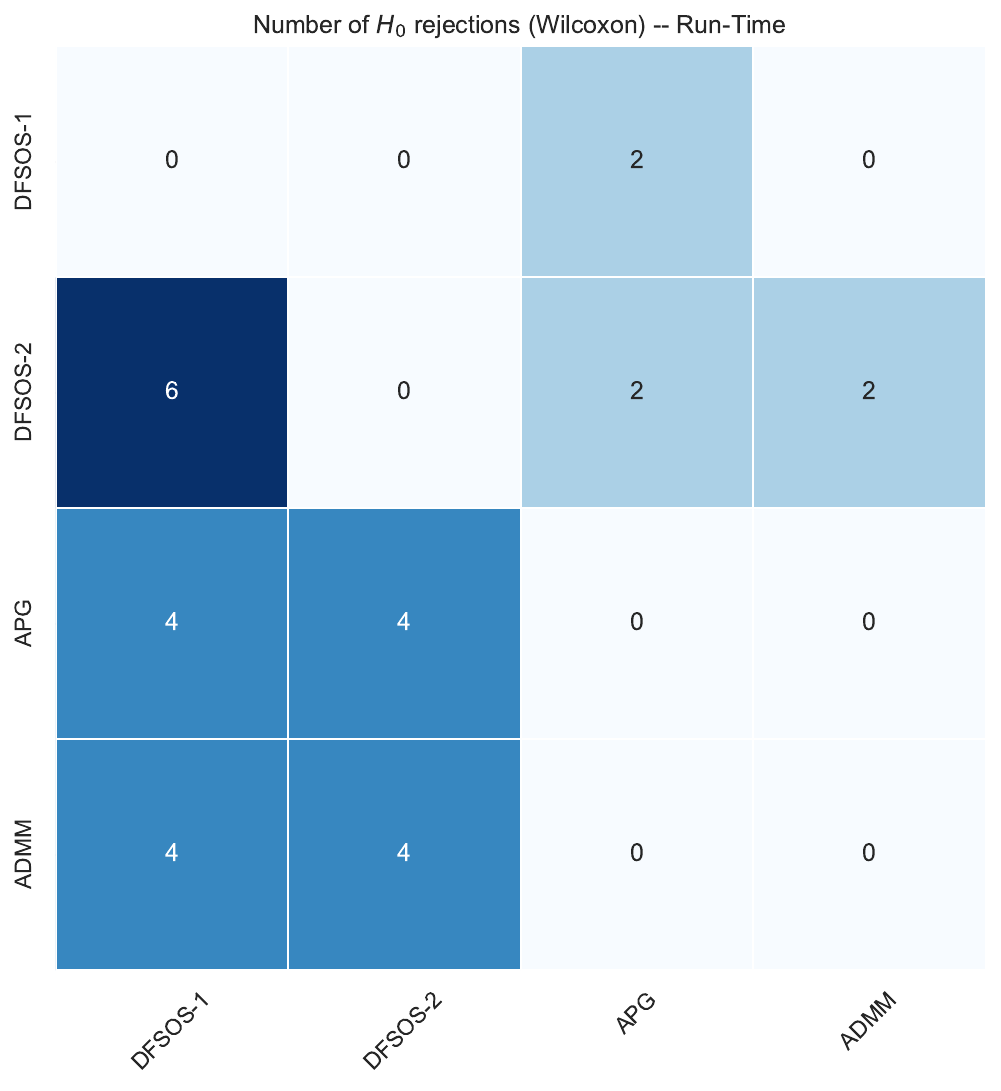}
        \caption{Run-times}
        \label{fig:synth-test-times}
    \end{subfigure}
    \hfill
    \begin{subfigure}[t]{0.3\textwidth}
        \centering
        \includegraphics[width=0.99\linewidth]{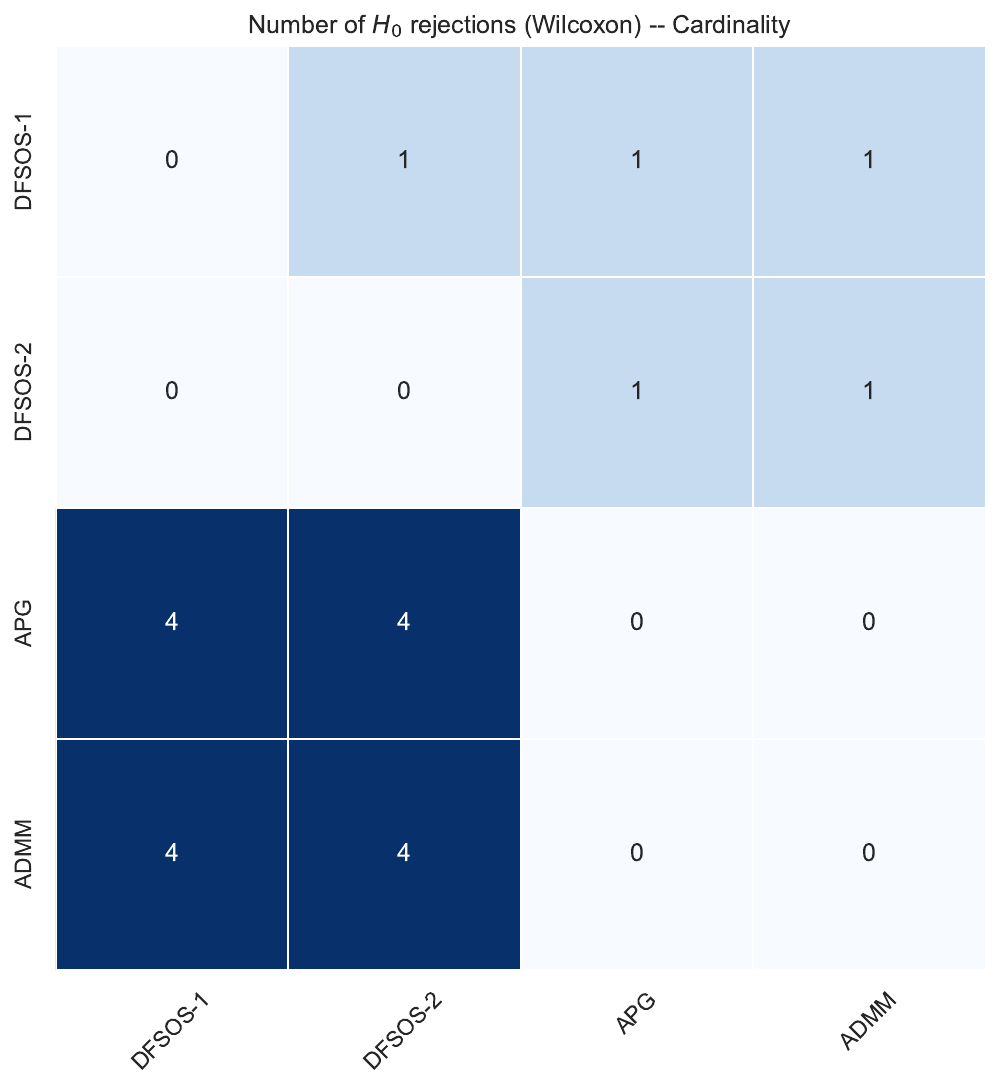}
        \caption{Cardinality}
        \label{fig:synth-test-card}
    \end{subfigure}
    \caption{Number of $(K,r)$-pairs for which we observe statistically significant improvement in classification accuracy, computational efficiency, and cardinality of discriminants when using row method compared to column method. That is, the $(i,j)$-entry is the number of $(K,r)$-pair where we reject the null hypothesis $H_0:$ there is no difference between method $i$ and method $j$ using the Wilcoxon test with one-sided alternative hypothesis $H_a:$ method $i$ is better than method $j$.}
    \label{fig:synth-hypothesis-tests}
\end{figure}

%++++++++++++++++++++++++++++++++++++++++++++++++++++++++++++++++++
%++++++++++++++++++++++++++++++++++++++++++++++++++++++++++++++++++
% Prediction Analysis
\subsubsection{Consistency of Predictions}
\label{sec:real-data}
%++++++++++++++++++++++++++++++++++++++++++++++++++++++++++++++++++
%++++++++++++++++++++++++++++++++++++++++++++++++++++++++++++++++++

Although we have observed an improvement in classification performance using DFSOS (and SVM) over ASDA in these experiments, we should caution that the actual classification performance of all of the observed methods is remarkably similar.
To illustrate this phenomenon, we calculate the \emph{cosine similarity} for the out-of-sample predictions made by each classifier. The cosine similarity between the vector of predictions for each pair of classifiers, averaged over the 10 problem instances sampled for each $(K,r)$-pair, is given in Fig.~\ref{fig:synth-cosine}.
The diagonal entries of Fig.~\ref{fig:synth-cosine} indicate the average cosine similarity between all pairs of prediction vectors made by each method across all $10$ Gaussian observations.

The cosine similarities for the class membership predictions summarized in Fig.~\ref{fig:synth-cosine} are almost all nearly equal to $1$, even when the Wilcoxon test suggests a statistically significant difference in classification accuracy. 
This suggests that when there is a difference in classification prediction, it is due to a disagreement in a small number of assigned class labels, with all other predictions agreeing for the two classifiers. This can be observed in Fig.~\ref{fig:DV-5-boundaries}. In this setting, we have $(K,r)=(3,0.5)$ and our hypothesis tests suggest that DFSOS classifiers provided a statistically significant improvement in classification performance over ASDA classifiers. However, the decision boundaries for the classifiers are similar and the predicted class labels agree for all four DFSOS and ASDA classifiers except for a relatively small number of misclassified observations near the decision boundary for the two ASDA classifiers.

\begin{figure}
    \centering
    \begin{subfigure}[t]{0.32\textwidth}
        \centering
        \includegraphics[width=0.99\linewidth]{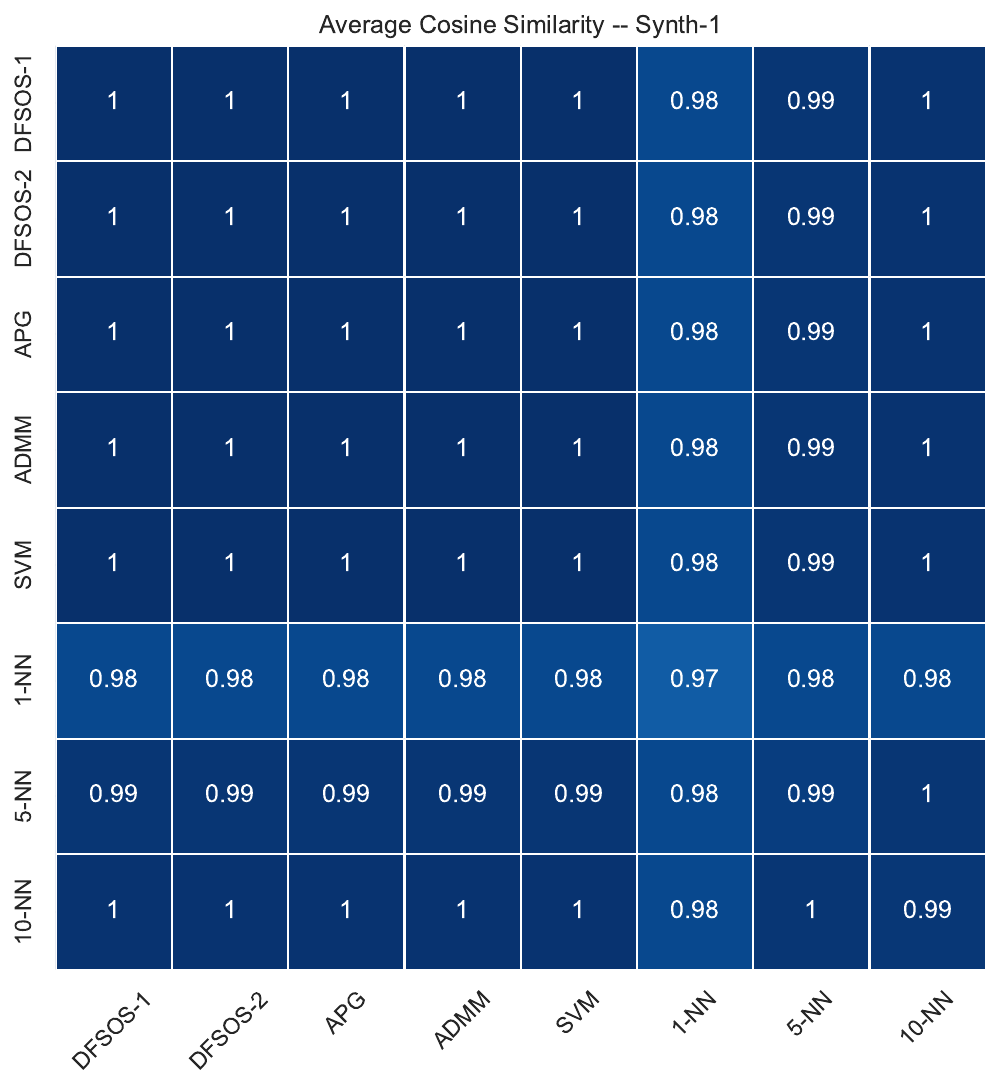}
        \caption{$(K,r) = (3, 0.1)$}
        \label{fig:3-1-J}
    \end{subfigure}
        \hfill
    \begin{subfigure}[t]{0.32\textwidth}
        \centering
        \includegraphics[width=0.99\linewidth]{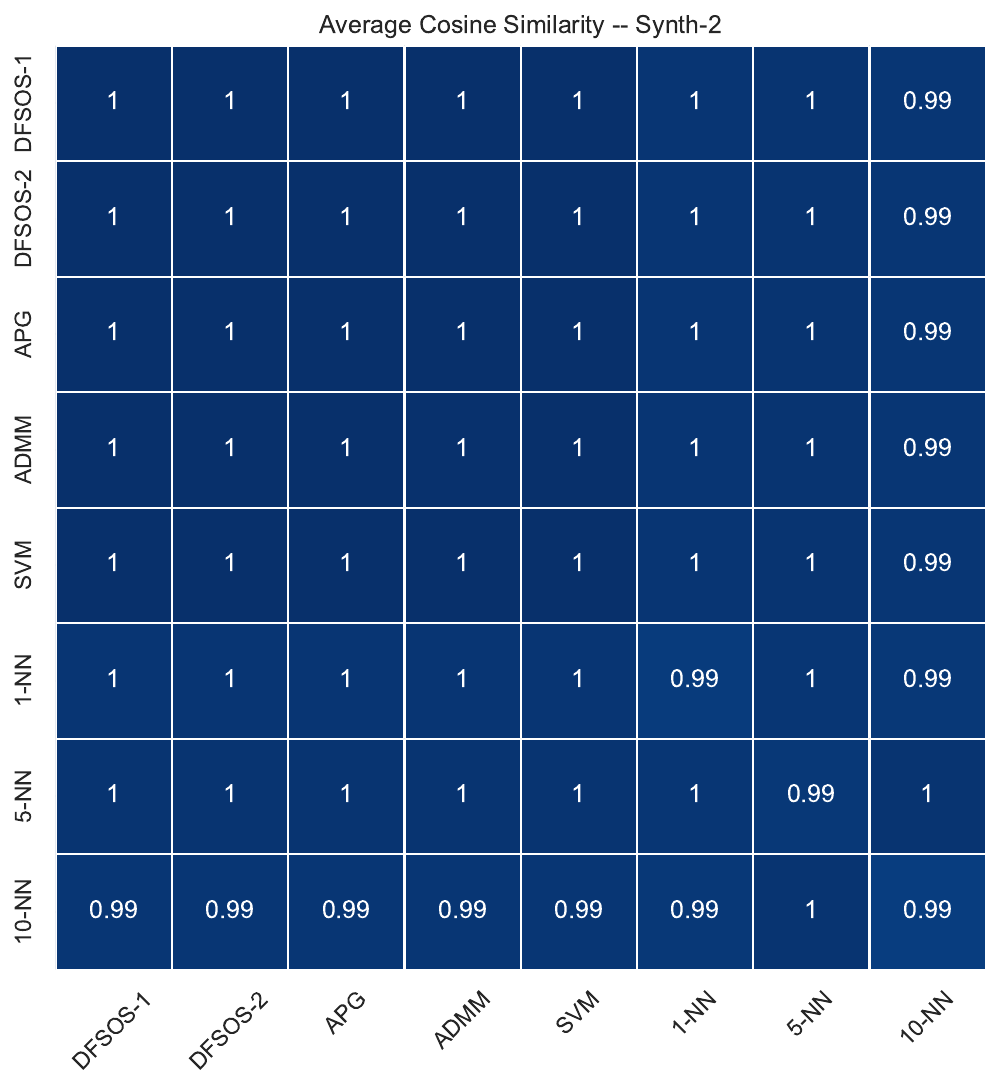}
        \caption{$(K,r) = (3, 0.5)$}
        \label{fig:3-2-J}
    \end{subfigure}
    \hfill
    \begin{subfigure}[t]{0.32\textwidth}
        \centering
        \includegraphics[width=0.99\linewidth]{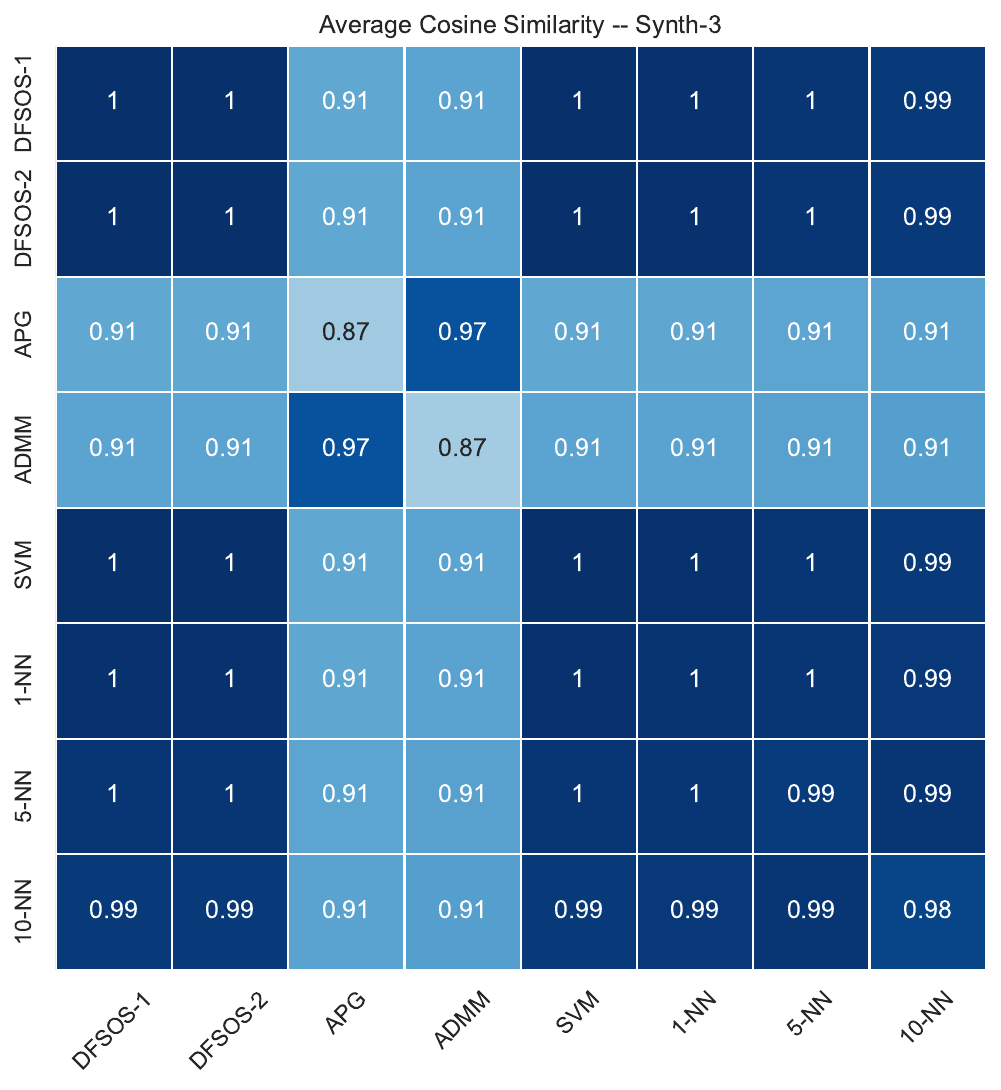}
        \caption{$(K,r) = (3, 0.9)$}
        \label{fig:3-3-J}
    \end{subfigure}
    
     \begin{subfigure}[t]{0.32\textwidth}
        \centering
        \includegraphics[width=0.99\linewidth]{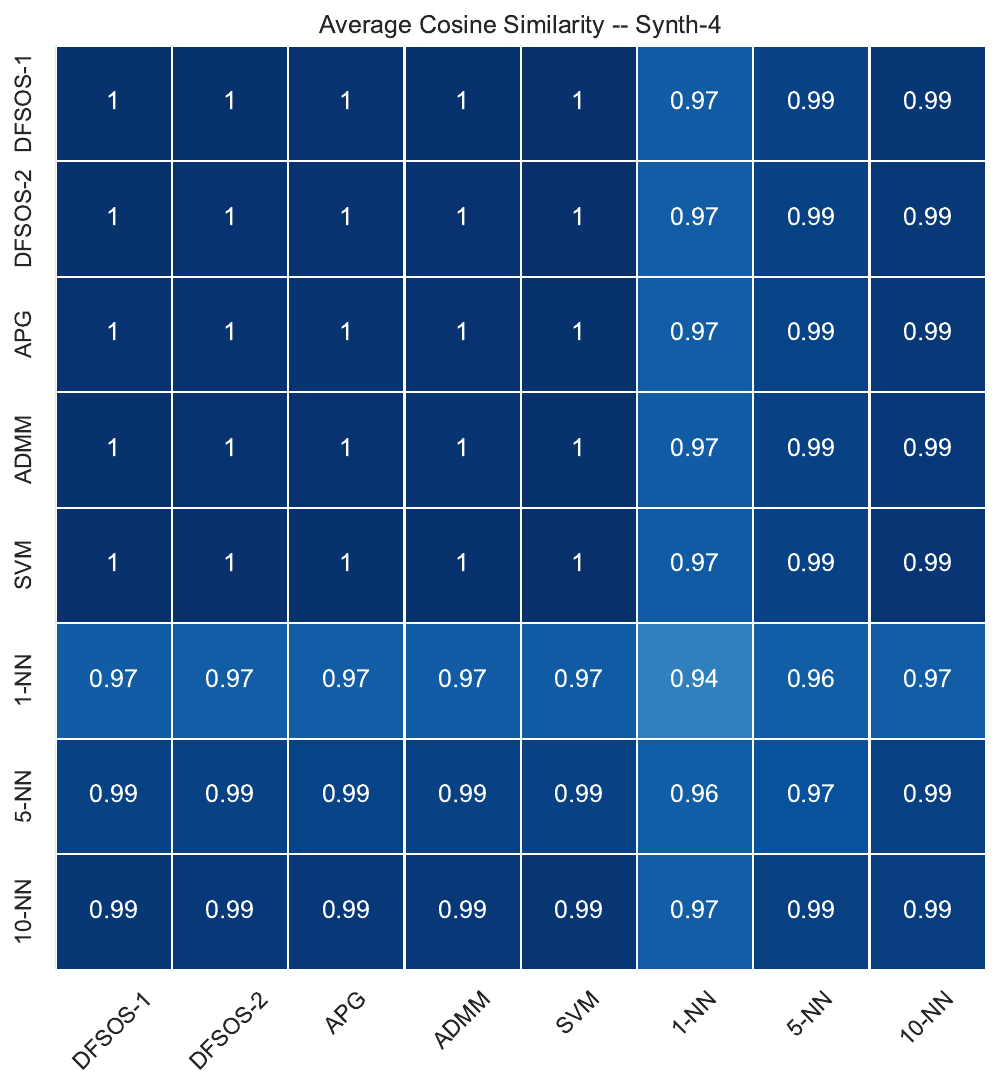}
        \caption{$(K,r) = (6, 0.1)$}
        \label{fig:3-4-J}
    \end{subfigure}
        \hfill
    \begin{subfigure}[t]{0.32\textwidth}
        \centering
        \includegraphics[width=0.99\linewidth]{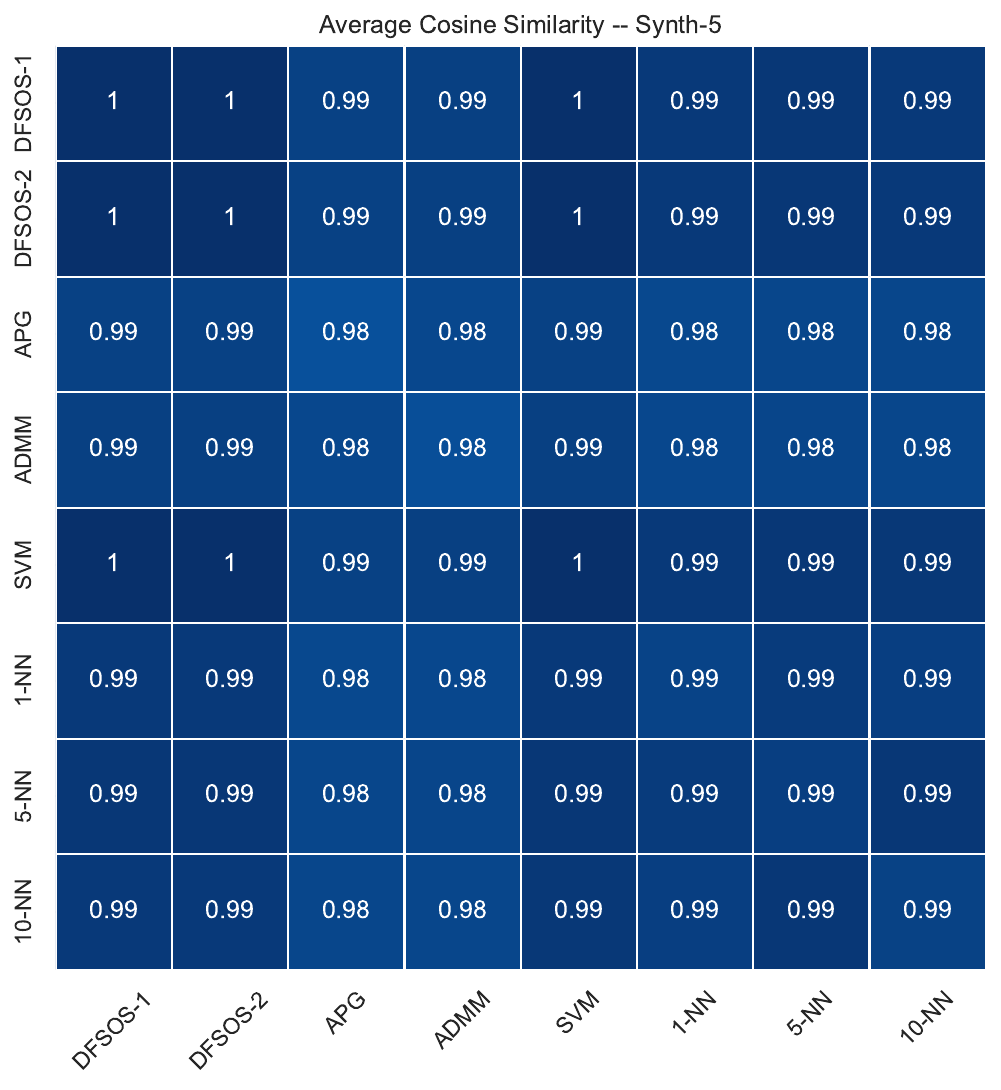}
        \caption{$(K,r) = (6, 0.5)$}
        \label{fig:3-5-J}
    \end{subfigure}
    \hfill
    \begin{subfigure}[t]{0.32\textwidth}
        \centering
        \includegraphics[width=0.99\linewidth]{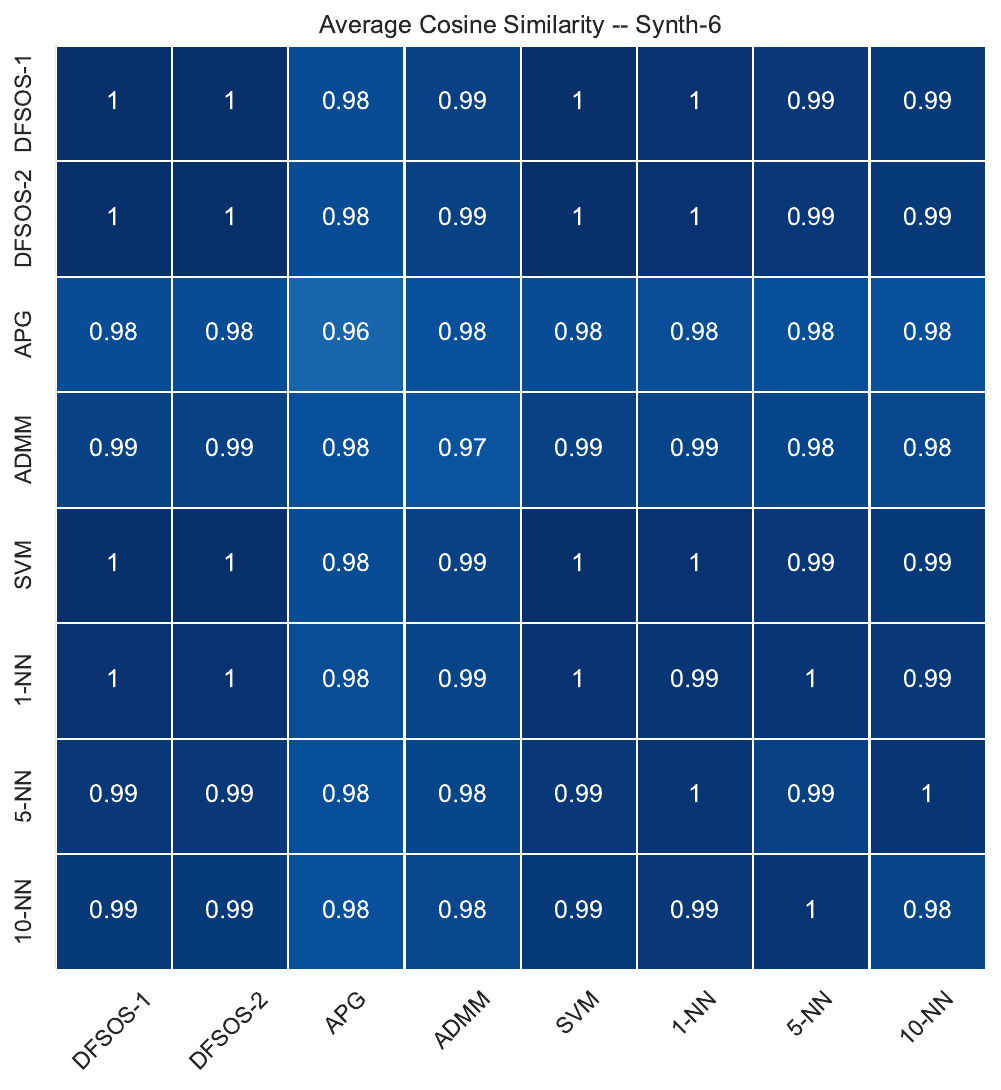}
        \caption{$(K,r) = (6, 0.9)$}
        \label{fig:3-6-J}
    \end{subfigure}
    \caption{Average cosine similarity between out-of-sample predictions for each pair of methods. The diagonal entries indicate the average cosine similarity between all pairs of prediction vectors made by each method across all $10$ Gaussian observations.}
    \label{fig:synth-cosine}
\end{figure}

%++++++++++++++++++++++++++++++++++++++++++++++++++++++++++++++++++
%++++++++++++++++++++++++++++++++++++++++++++++++++++++++++++++++++
% Real-Data
\subsection{Empirical Analysis of Time-Series Data}
\label{sec:real-data}
%++++++++++++++++++++++++++++++++++++++++++++++++++++++++++++++++++
%++++++++++++++++++++++++++++++++++++++++++++++++++++++++++++++++++

We also evaluated performance of DFSOS using a subset of data from the UC-Riverside classification data repository~\cite{dau2019ucr}.
We limited our analysis to data containing more predictor variables than observations ($p > n$) and at least $K=3$ classes; this yielded a collection of 15 data sets.
We used DFSOS, APG, and ADMM to obtain $q=K-1$ sparse discriminant vectors and performed nearest-centroid classification after projection onto the subspace spanned by these discriminant vectors. We compare classification accuracy of these methods with each other and standard classifiers SVM, KNN as before. For each data set we train each classifier using $10$ random initializations and compare average run-time and classification accuracy across these $10$ trials.

%++++++++++++++++++++++++++++++++++++++++++++++++++++++++++++++++++
\subsubsection{Experimental Parameters}
%++++++++++++++++++++++++++++++++++++++++++++++++++++++++++++++++++
As before, we use $5$-fold cross validation to choose the $\ell_1$-penalty $\lambda$ from the exponential grid $$\{2^{-4}, 2^{-3}, 2^{-2}, 2^{-1}, 1\} \times \lambda_{\max}$$ in DFSOS, APG, and APG.
We terminate the iterative update of $\pmb{\beta}$ in DFSOS, APG, and ADMM after $50$ iterations or a $10^{-5}$ suboptimal solution is found and we terminate the outer loop after $50$ iterations or a $10^{-4}$ suboptimal solution is found.
We use $\eta =0.25$, $\sigma =2$, and initial $\rho=5$ to update the augmented Lagrangian parameter $\rho$ in DFSOS.
We use augmented Lagrangian parameter $\mu = 2$ in ADMM.

%++++++++++++++++++++++++++++++++++++++++++++++++++++++++++++++++++
\subsubsection{Classification of UCR Time-Series Data}
%++++++++++++++++++++++++++++++++++++++++++++++++++++++++++++++++++

Summary statistics for our analysis of the UCR time-series data can be found in %Table~\ref{tab:UCR-1} and 
Table~\ref{tab:UCR-2}.

As before, we performed one-sided Wilcoxon tests to compare pairs of classification methods in terms of classification error, run-time, and cardinality. Figure~\ref{fig:UCR-hypothesis-tests} summarizes the results of these hypothesis tests. We observe similar outcomes to that of the experiments involving Gaussian data. DFSOS has similar run-time and classification performance to that of the two ASDA methods, especially ADMM. However, inspecting Table~\ref{tab:UCR-2} suggests that when DFSOS does provide an improvement in classification accuracy over ASDA then this improvement can be substantial, e.g., for the \emph{Beef} and \emph{EthanolLevel} data. In this case, we can infer that the improved stability obtained by eliminating deflationary errors allows computation of more accurate discriminant-based classifiers.

%++++++++++++++++++++++++++++++++++++++++++++++++
%++++++++++++++++++++++++++++++++++++++++++++++++
%++++++++++++++++++++++++++++++++++++++++++++++++
% UCR RESULTS TABLE 1
%++++++++++++++++++++++++++++++++++++++++++++++++
%++++++++++++++++++++++++++++++++++++++++++++++++
%++++++++++++++++++++++++++++++++++++++++++++++++
\begin{table}[t!]
\adjustbox{max width=\textwidth}{%
\begin{tabular}{llllllllll}
\toprule
        Data &      Measure &        DFSOS-1 &        DFSOS-2 &            APG &          ADMM &           SVM &          1-NN &          5-NN &         10-NN \\
\midrule
   Arrowhead &     Accuracy &    0.701 (0.0) &    0.701 (0.0) &    0.688 (0.0) &    0.71 (0.0) &   0.754 (0.0) &    0.76 (0.0) &   0.714 (0.0) &   0.566 (0.0) \\
             & Run-time (s) &  0.924 (0.695) &  0.819 (0.243) &   0.67 (0.259) &  0.271 (0.04) & 0.362 (0.499) & 0.026 (0.003) &   0.017 (0.0) &   0.013 (0.0) \\
             &  Cardinality &  0.554 (0.066) &  0.554 (0.066) &  0.891 (0.006) & 0.572 (0.004) &            -- &            -- &            -- &            -- \\ \midrule
        Beef &     Accuracy &    0.867 (0.0) &    0.867 (0.0) &    0.579 (0.0) & 0.642 (0.001) &   0.867 (0.0) &     0.7 (0.0) &   0.633 (0.0) &   0.433 (0.0) \\
             & Run-time (s) &  0.908 (0.726) &  0.842 (0.285) &  1.343 (0.665) & 0.459 (0.131) & 0.364 (0.394) & 0.029 (0.002) &   0.019 (0.0) &   0.017 (0.0) \\
             &  Cardinality &    0.979 (0.0) &    0.979 (0.0) &  0.918 (0.007) & 0.587 (0.012) &            -- &            -- &            -- &            -- \\ \midrule
         BME &     Accuracy &    0.896 (0.0) &    0.896 (0.0) &    0.924 (0.0) &   0.926 (0.0) &   0.893 (0.0) &    0.84 (0.0) &    0.74 (0.0) &   0.573 (0.0) \\
             & Run-time (s) &  0.955 (0.692) &   0.725 (0.19) &  0.275 (0.061) & 0.231 (0.029) & 0.405 (0.645) &  0.04 (0.007) & 0.026 (0.002) &   0.015 (0.0) \\
             &  Cardinality &  0.271 (0.029) &  0.271 (0.029) &  0.463 (0.025) & 0.404 (0.004) &            -- &            -- &            -- &            -- \\ \midrule
         Car &     Accuracy &   0.77 (0.001) &   0.77 (0.001) &     0.75 (0.0) &   0.765 (0.0) &   0.817 (0.0) &   0.717 (0.0) &   0.583 (0.0) &     0.6 (0.0) \\
             & Run-time (s) &  1.018 (0.514) &  0.937 (0.268) &  0.971 (0.491) &  0.48 (0.184) & 0.338 (0.429) & 0.025 (0.003) & 0.023 (0.001) & 0.016 (0.001) \\
             &  Cardinality &  0.793 (0.085) &  0.793 (0.085) &    0.977 (0.0) & 0.612 (0.003) &            -- &            -- &            -- &            -- \\ \midrule
         CBF &     Accuracy &  0.688 (0.003) &  0.688 (0.003) &    0.852 (0.0) &   0.851 (0.0) &   0.862 (0.0) &   0.844 (0.0) &   0.764 (0.0) &   0.649 (0.0) \\
             & Run-time (s) &  0.713 (0.234) &  0.669 (0.171) &  0.498 (0.302) & 0.285 (0.068) & 0.442 (0.723) & 0.049 (0.015) & 0.032 (0.003) &  0.02 (0.001) \\
             &  Cardinality &  0.294 (0.036) &  0.294 (0.036) &  0.282 (0.002) & 0.309 (0.004) &            -- &            -- &            -- &            -- \\ \midrule
EthanolLevel &     Accuracy &     0.74 (0.0) &     0.74 (0.0) &  0.391 (0.001) & 0.417 (0.001) &   0.838 (0.0) &   0.334 (0.0) &   0.356 (0.0) &   0.334 (0.0) \\
             & Run-time (s) &  2.098 (0.647) &  1.946 (0.307) & 13.223 (0.826) & 6.632 (0.847) & 1.284 (0.105) & 0.064 (0.002) & 0.061 (0.001) &  0.06 (0.001) \\
             &  Cardinality &    0.997 (0.0) &    0.997 (0.0) &    0.952 (0.0) & 0.827 (0.005) &            -- &            -- &            -- &            -- \\ \midrule
        Fish &     Accuracy &    0.823 (0.0) &    0.823 (0.0) &    0.802 (0.0) &   0.797 (0.0) &   0.834 (0.0) &   0.811 (0.0) &   0.766 (0.0) &   0.754 (0.0) \\
             & Run-time (s) &  0.533 (0.096) &   0.571 (0.14) &  3.696 (2.815) & 1.425 (0.256) & 0.413 (0.108) &  0.03 (0.002) &   0.025 (0.0) &   0.018 (0.0) \\
             &  Cardinality &    0.997 (0.0) &    0.997 (0.0) &     0.97 (0.0) & 0.759 (0.001) &            -- &            -- &            -- &            -- \\ \midrule
      Mallat &     Accuracy &    0.787 (0.0) &  0.709 (0.001) &    0.919 (0.0) &    0.93 (0.0) &   0.888 (0.0) &   0.738 (0.0) &   0.623 (0.0) &     0.5 (0.0) \\
             & Run-time (s) & 14.538 (1.918) & 13.451 (0.356) & 16.903 (1.541) & 9.987 (1.241) & 0.491 (0.103) & 0.027 (0.002) & 0.022 (0.001) & 0.016 (0.001) \\
             &  Cardinality &    0.963 (0.0) &  0.719 (0.001) &    0.918 (0.0) & 0.463 (0.004) &            -- &            -- &            -- &            -- \\
\midrule
                  Meat &     Accuracy &     0.95 (0.0) &      0.95 (0.0) &    0.983 (0.0) &    0.983 (0.0) &   0.933 (0.0) &   0.933 (0.0) &   0.883 (0.0) &   0.883 (0.0) \\
                       & Run-time (s) &  0.508 (0.331) &    0.444 (0.12) &  0.829 (0.501) &  0.366 (0.079) & 0.395 (0.586) &  0.05 (0.016) & 0.023 (0.001) &   0.012 (0.0) \\
                       &  Cardinality &    0.997 (0.0) &     0.997 (0.0) &    0.956 (0.0) &  0.523 (0.008) &            -- &            -- &            -- &            -- \\ \midrule
              OliveOil &     Accuracy &    0.867 (0.0) &     0.867 (0.0) &      0.9 (0.0) &      0.9 (0.0) &   0.867 (0.0) &   0.833 (0.0) &     0.8 (0.0) &   0.767 (0.0) \\
                       & Run-time (s) &  0.202 (0.012) &   0.241 (0.023) &  0.922 (0.546) &    0.505 (0.2) & 0.547 (1.269) &  0.048 (0.01) & 0.034 (0.004) &   0.015 (0.0) \\
                       &  Cardinality &    0.995 (0.0) &     0.995 (0.0) &    0.944 (0.0) &  0.596 (0.005) &            -- &            -- &            -- &            -- \\ \midrule
                 Plane &     Accuracy &    0.978 (0.0) &     0.978 (0.0) &    0.981 (0.0) &    0.981 (0.0) &   0.981 (0.0) &   0.962 (0.0) &   0.952 (0.0) &   0.952 (0.0) \\
                       & Run-time (s) &  1.624 (1.753) &   1.282 (0.413) &  3.093 (1.615) &  0.731 (0.137) & 0.462 (0.279) & 0.028 (0.003) &   0.022 (0.0) &   0.016 (0.0) \\
                       &  Cardinality &  0.518 (0.026) &   0.518 (0.026) &  0.623 (0.003) &  0.573 (0.003) &            -- &            -- &            -- &            -- \\ \midrule
                  Rock &     Accuracy &     0.66 (0.0) &      0.66 (0.0) &  0.673 (0.019) &  0.782 (0.007) &    0.84 (0.0) &    0.76 (0.0) &    0.42 (0.0) &    0.46 (0.0) \\
                       & Run-time (s) &  2.255 (1.919) &   1.961 (0.178) &  2.911 (0.565) &  1.858 (0.075) & 0.251 (0.134) &  0.03 (0.003) & 0.021 (0.001) &   0.016 (0.0) \\
                       &  Cardinality &  0.199 (0.001) &   0.199 (0.001) &  0.552 (0.158) &  0.064 (0.001) &            -- &            -- &            -- &            -- \\  \midrule
       StarLight- &     Accuracy &    0.774 (0.0) &     0.774 (0.0) &    0.774 (0.0) &    0.769 (0.0) &   0.926 (0.0) &   0.844 (0.0) &   0.837 (0.0) &   0.816 (0.0) \\
        Curves               & Run-time (s) & 18.869 (19.86) & 17.648 (12.908) &  4.218 (0.281) & 11.876 (6.043) & 3.034 (0.102) & 0.074 (0.003) & 0.068 (0.001) &    0.06 (0.0) \\
                       &  Cardinality &  0.478 (0.086) &   0.478 (0.086) &  0.822 (0.003) &  0.399 (0.008) &            -- &            -- &            -- &            -- \\ \midrule
               Symbols &     Accuracy &    0.824 (0.0) &   0.827 (0.001) &  0.796 (0.002) &  0.824 (0.001) &   0.837 (0.0) &   0.885 (0.0) &   0.677 (0.0) &   0.183 (0.0) \\
                       & Run-time (s) &  4.107 (3.751) &   3.287 (0.892) &  3.296 (0.527) &   1.779 (0.34) & 0.319 (0.089) & 0.023 (0.002) & 0.019 (0.001) & 0.014 (0.001) \\
                       &  Cardinality &    0.179 (0.0) &     0.206 (0.0) &  0.783 (0.069) &  0.463 (0.023) &            -- &            -- &            -- &            -- \\ \midrule
UWave- &     Accuracy &      0.5 (0.0) &       0.5 (0.0) &    0.834 (0.0) &    0.828 (0.0) &   0.881 (0.0) &   0.949 (0.0) &    0.94 (0.0) &   0.931 (0.0) \\
       Gesture-                & Run-time (s) &  1.571 (0.257) &   1.282 (0.168) & 20.479 (0.884) & 23.313 (1.142) & 1.034 (0.122) & 0.059 (0.002) & 0.055 (0.001) & 0.048 (0.001) \\
         LibraryALL            &  Cardinality &    0.998 (0.0) &     0.998 (0.0) &    0.946 (0.0) &    0.736 (0.0) &            -- &            -- &            -- &            -- \\
\bottomrule
\end{tabular}
}
\caption{
Out-of-sample prediction accuracy, run-time in seconds, and fraction of non-zero features for discriminant vectors and classifiers for second subset of data from the UCR Time Series repository.}
\label{tab:UCR-2}
\end{table}

%++++++++++++++++++++++++++++++++++++++++++++++++++++++++++++++++++++++++++++++++++++++++++++++++++++++++
%++++++++++++++++++++++++++++++++++++++++++++++++++++++++++++++++++++++++++++++++++++++++++++++++++++++++
% Hyptothesis test results.
%++++++++++++++++++++++++++++++++++++++++++++++++++++++++++++++++++++++++++++++++++++++++++++++++++++++++
%++++++++++++++++++++++++++++++++++++++++++++++++++++++++++++++++++++++++++++++++++++++++++++++++++++++++
\begin{figure}
    \centering

    \begin{subfigure}[t]{0.3\textwidth}
        \centering
        \includegraphics[width=0.99\linewidth]{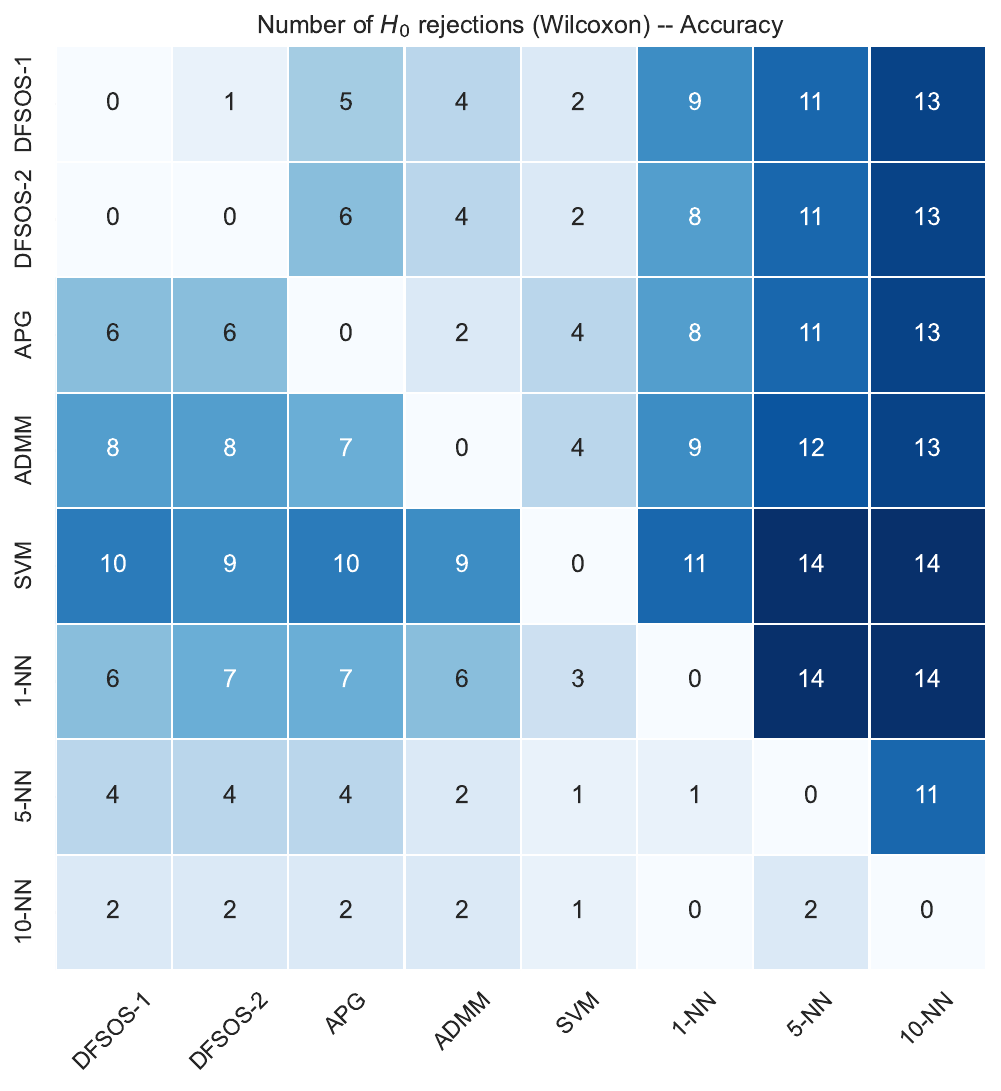}
        \caption{Classification Accuracy}
        \label{fig:DV-5-DF2-boundaries}
    \end{subfigure}
        \hfill
    \begin{subfigure}[t]{0.3\textwidth}
        \centering
        \includegraphics[width=0.99\linewidth]{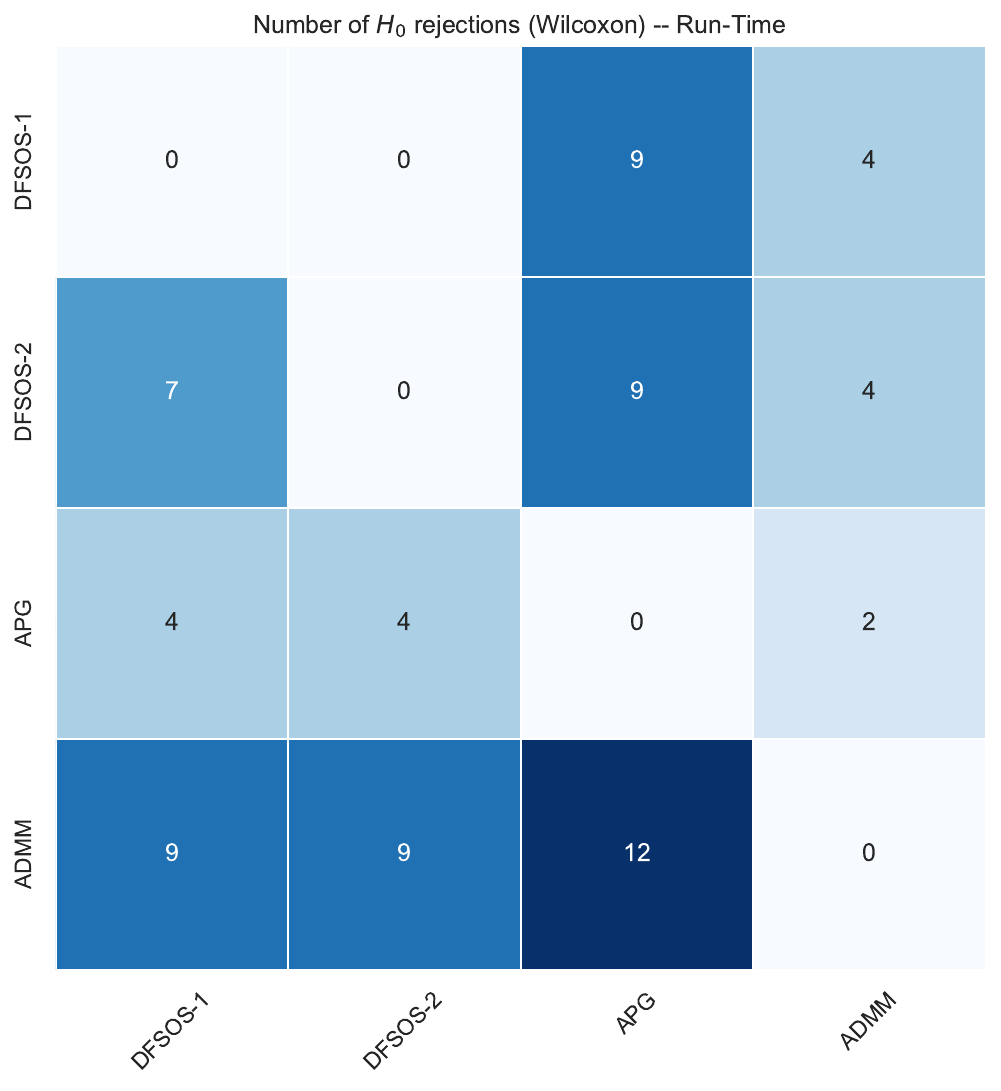}
        \caption{Run-times}
        \label{fig:synth-test-times}
    \end{subfigure}
    \hfill
    \begin{subfigure}[t]{0.3\textwidth}
        \centering
        \includegraphics[width=0.99\linewidth]{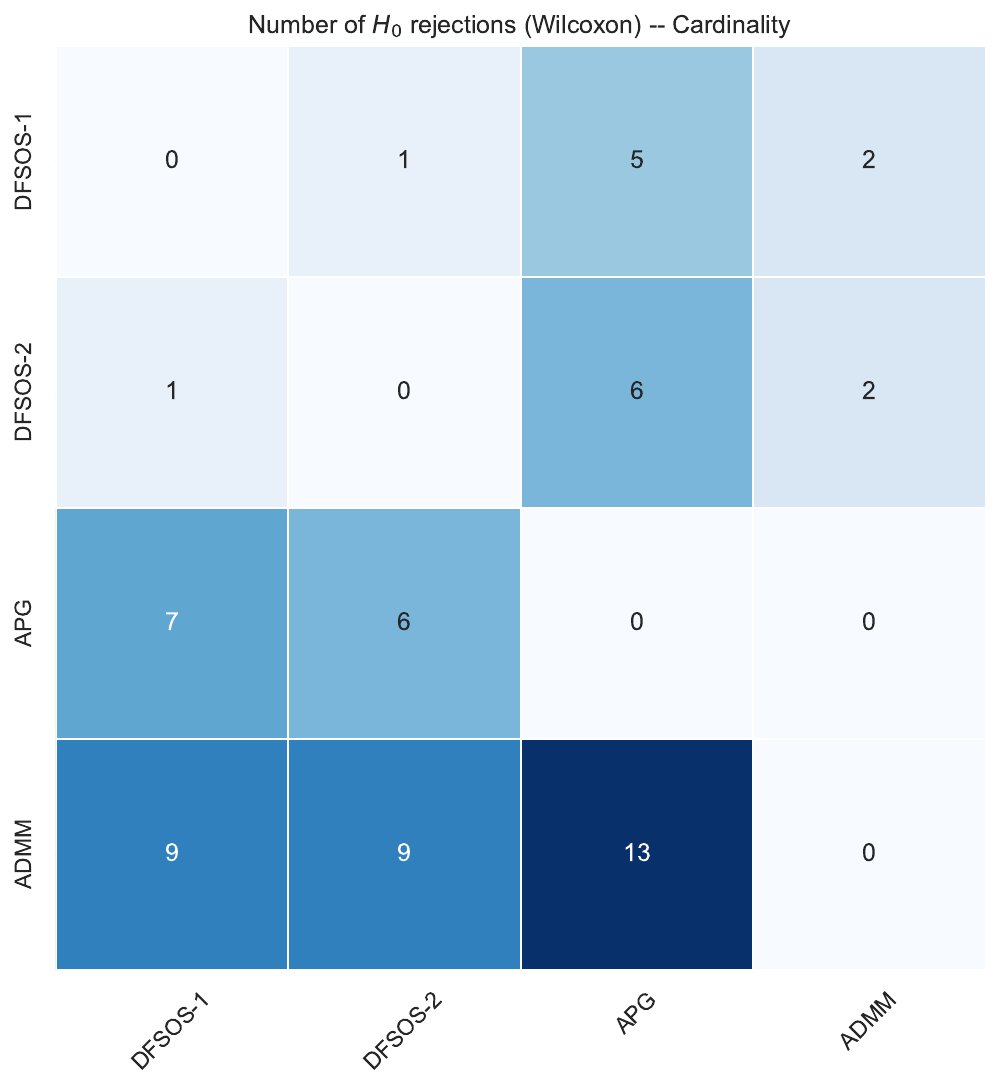}
        \caption{Cardinality}
        \label{fig:synth-test-card}
    \end{subfigure}
    \caption{Number of time-series datasets from UCR repository for which we observe statistically significant improvement in classification accuracy, computational efficiency, and cardinality of discriminants when using row method compared to column method. That is, the $(i,j)$-entry is the number of data sets where we reject the null hypothesis $H_0:$ there is no difference between method $i$ and method $j$ using the Wilcoxon test with one-sided alternative hypothesis $H_a:$ method $i$ is better than method $j$.}
    \label{fig:UCR-hypothesis-tests}
\end{figure}

%++++++++++++++++++++++++++++++++++++++++++++++++++++++++++++++++++++++++++++++++++++++++++++++++++++++++
%++++++++++++++++++++++++++++++++++++++++++++++++++++++++++++++++++++++++++++++++++++++++++++++++++++++++
% Prediction Similarity.
%++++++++++++++++++++++++++++++++++++++++++++++++++++++++++++++++++++++++++++++++++++++++++++++++++++++++
%++++++++++++++++++++++++++++++++++++++++++++++++++++++++++++++++++++++++++++++++++++++++++++++++++++++++
\begin{figure}
    \centering

    \begin{subfigure}[t]{0.19\textwidth}
        \centering
        \includegraphics[width=0.99\linewidth]{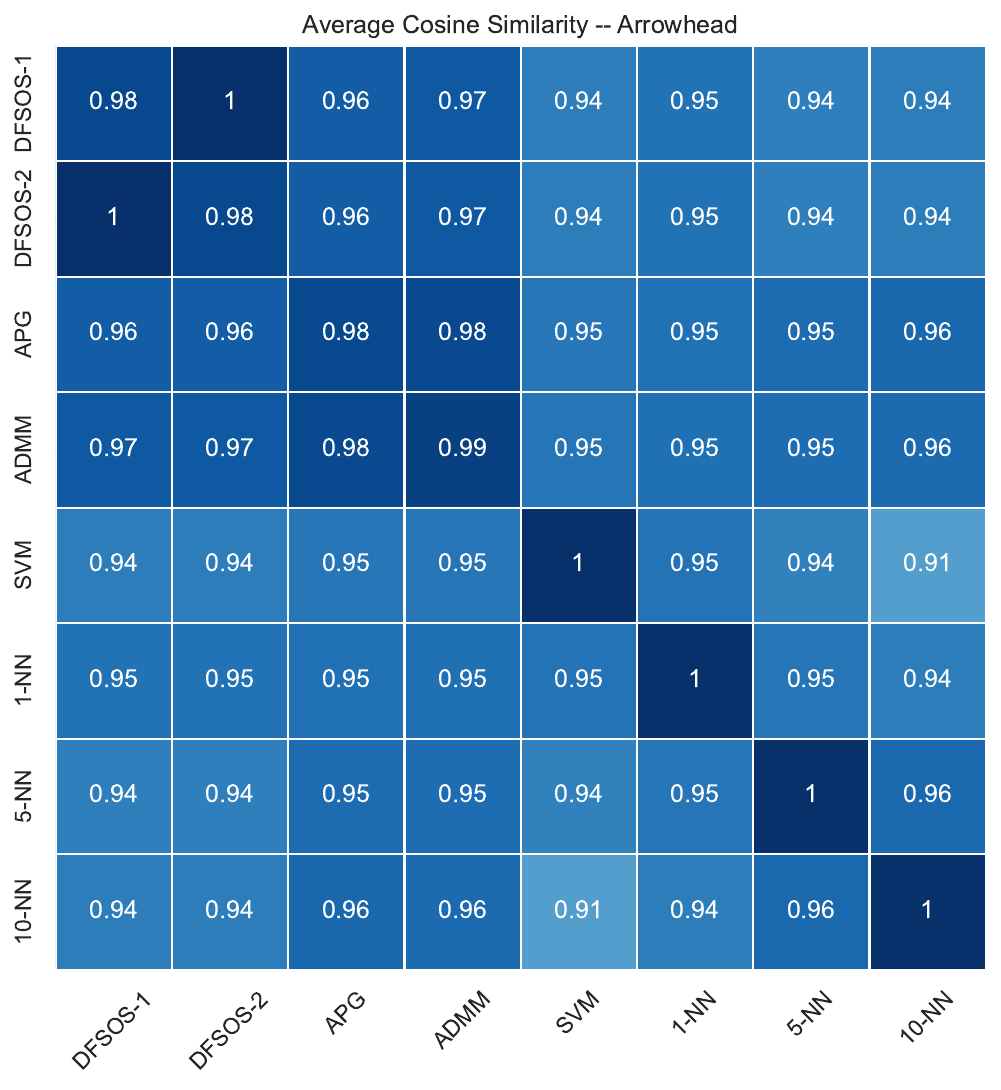}
        \caption{Arrowhead}
        \label{fig:arrowhead}
    \end{subfigure}
        \hfill
    \begin{subfigure}[t]{0.19\textwidth}
        \centering
        \includegraphics[width=0.99\linewidth]{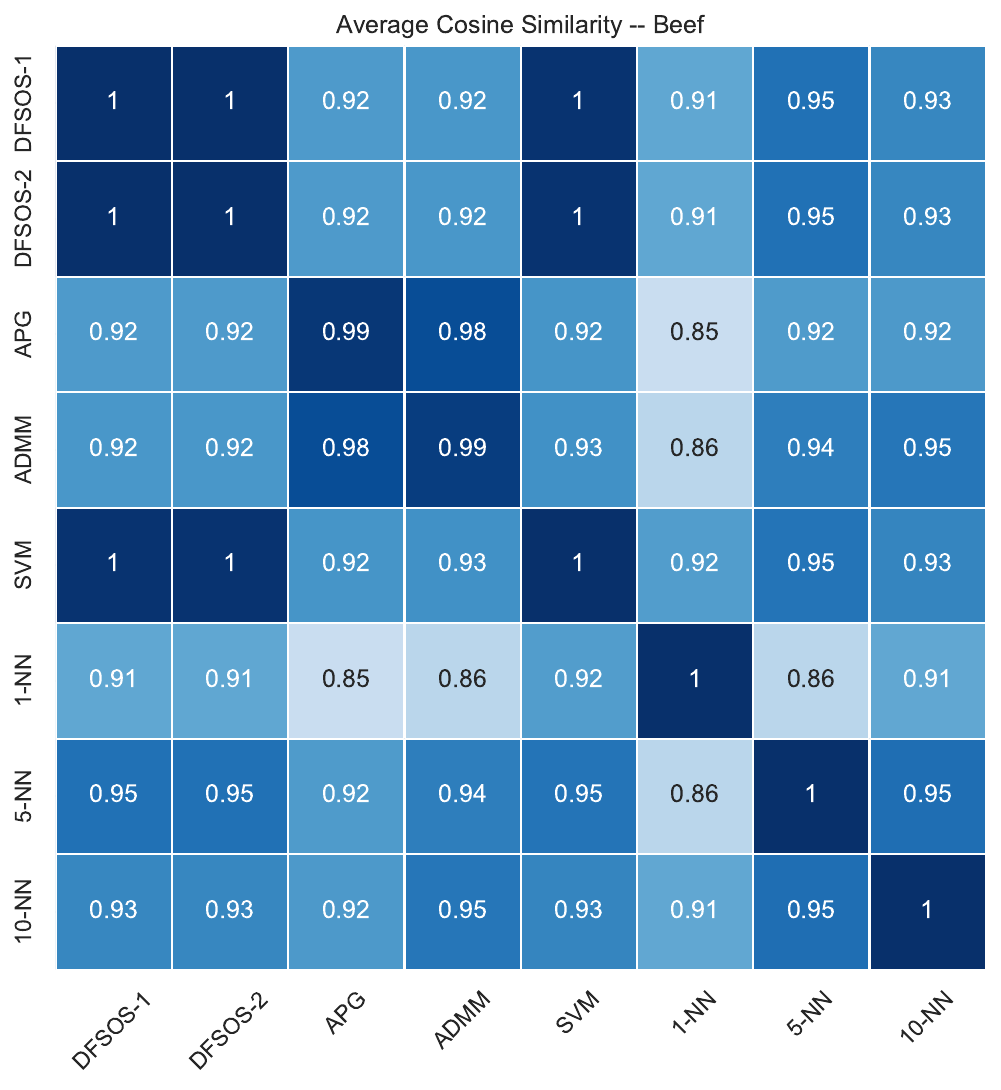}
        \caption{Beef}
        \label{fig:beef}
    \end{subfigure}
    \hfill
    \begin{subfigure}[t]{0.19\textwidth}
        \centering
        \includegraphics[width=0.99\linewidth]{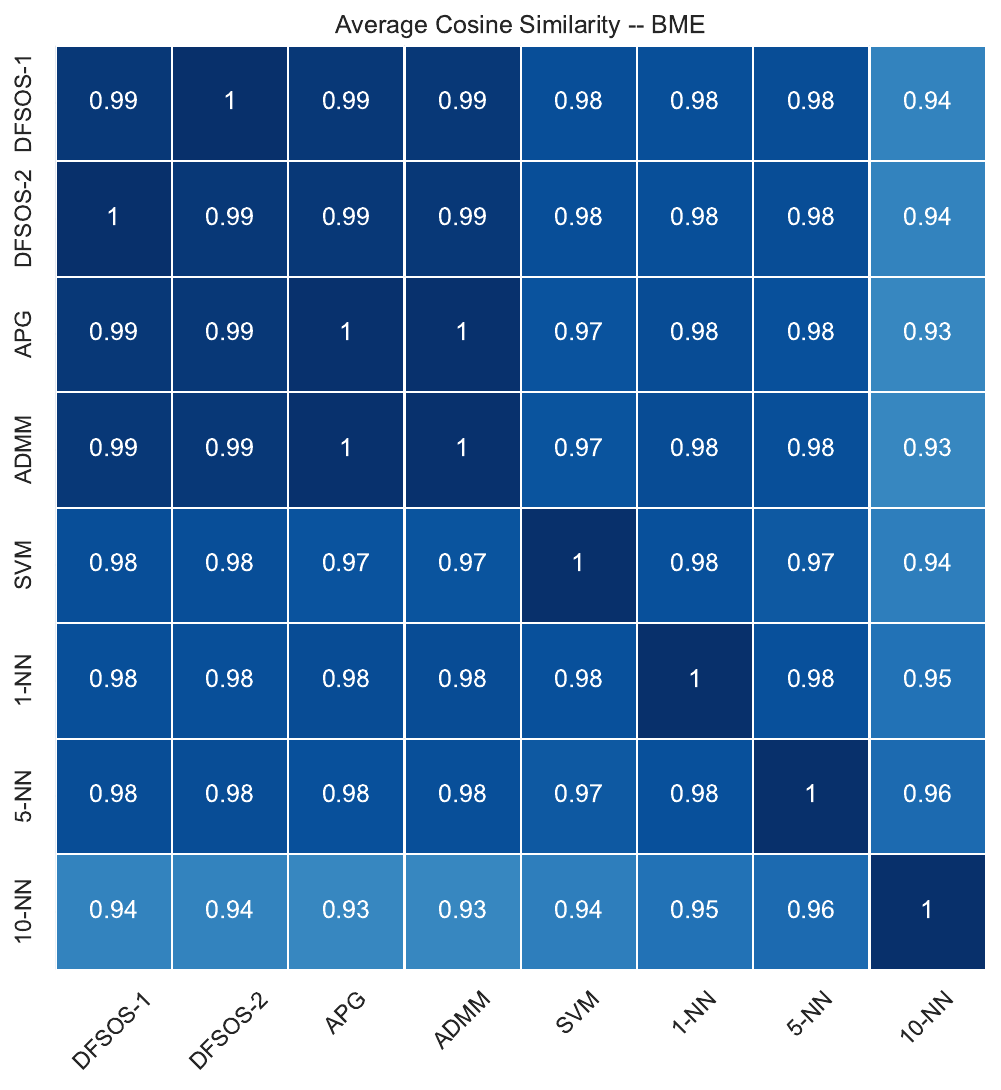}
        \caption{BME}
        \label{fig:BME}
    \end{subfigure}
    \hfill
    \begin{subfigure}[t]{0.19\textwidth}
        \centering
        \includegraphics[width=0.99\linewidth]{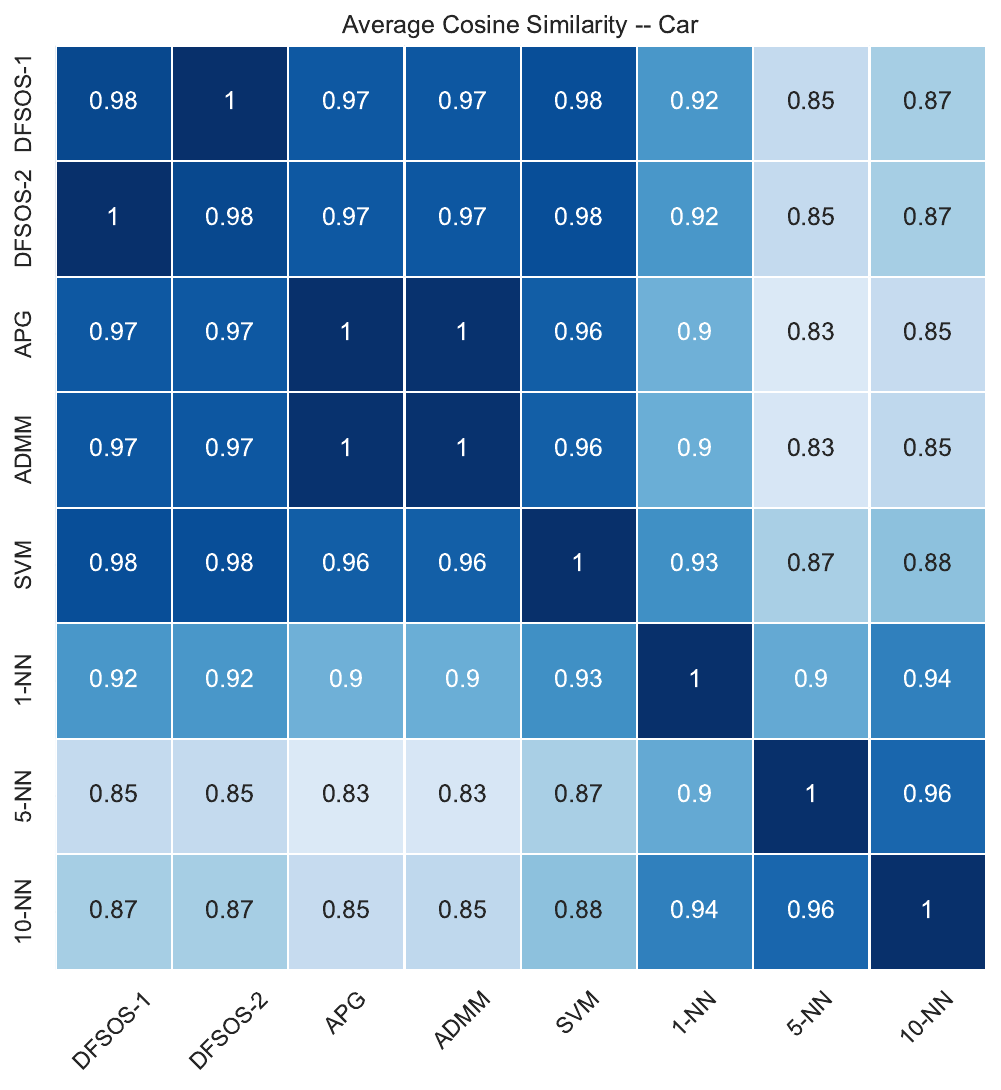}
        \caption{Car}
        \label{fig:Car}
    \end{subfigure}
    \hfill
    \begin{subfigure}[t]{0.19\textwidth}
        \centering
        \includegraphics[width=0.99\linewidth]{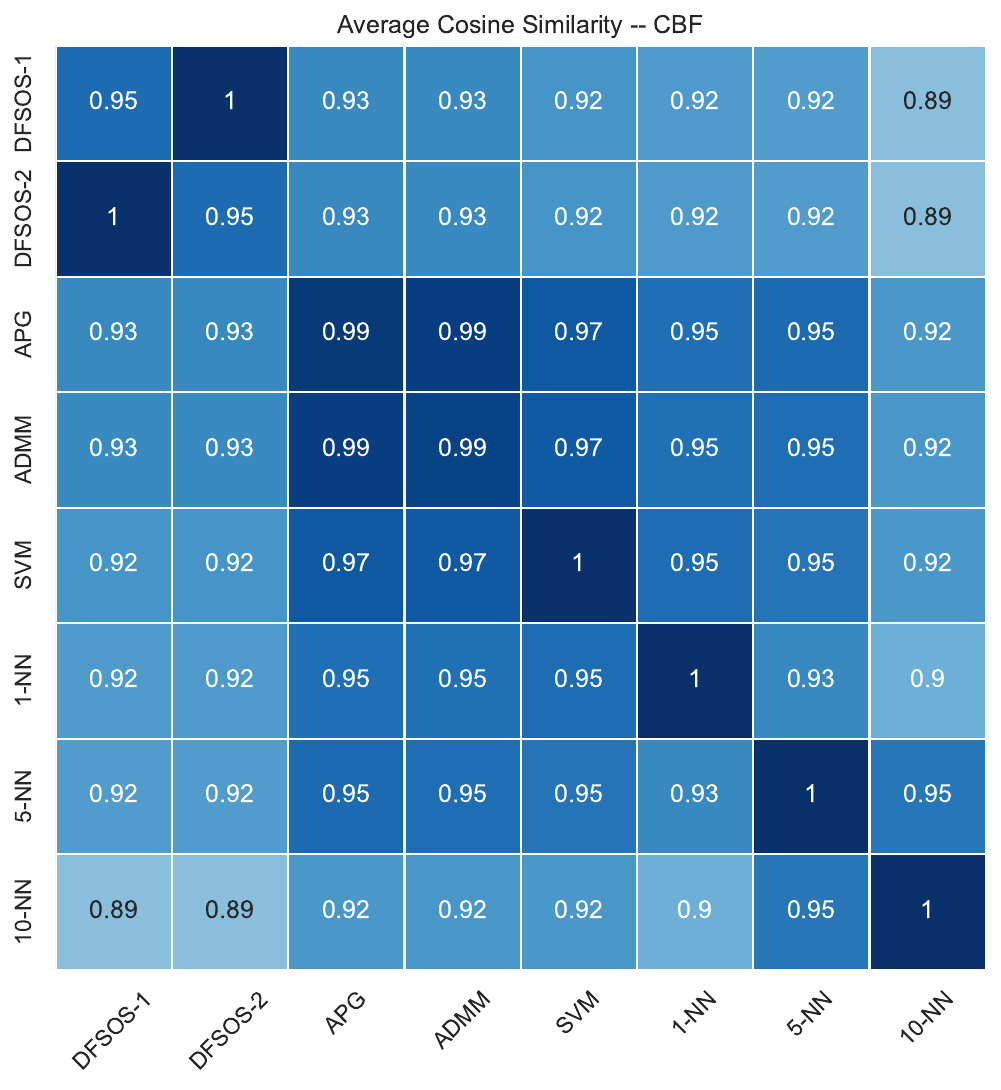}
        \caption{CBF}
        \label{fig:CBF}
    \end{subfigure}

    \begin{subfigure}[t]{0.19\textwidth}
        \centering
        \includegraphics[width=0.99\linewidth]{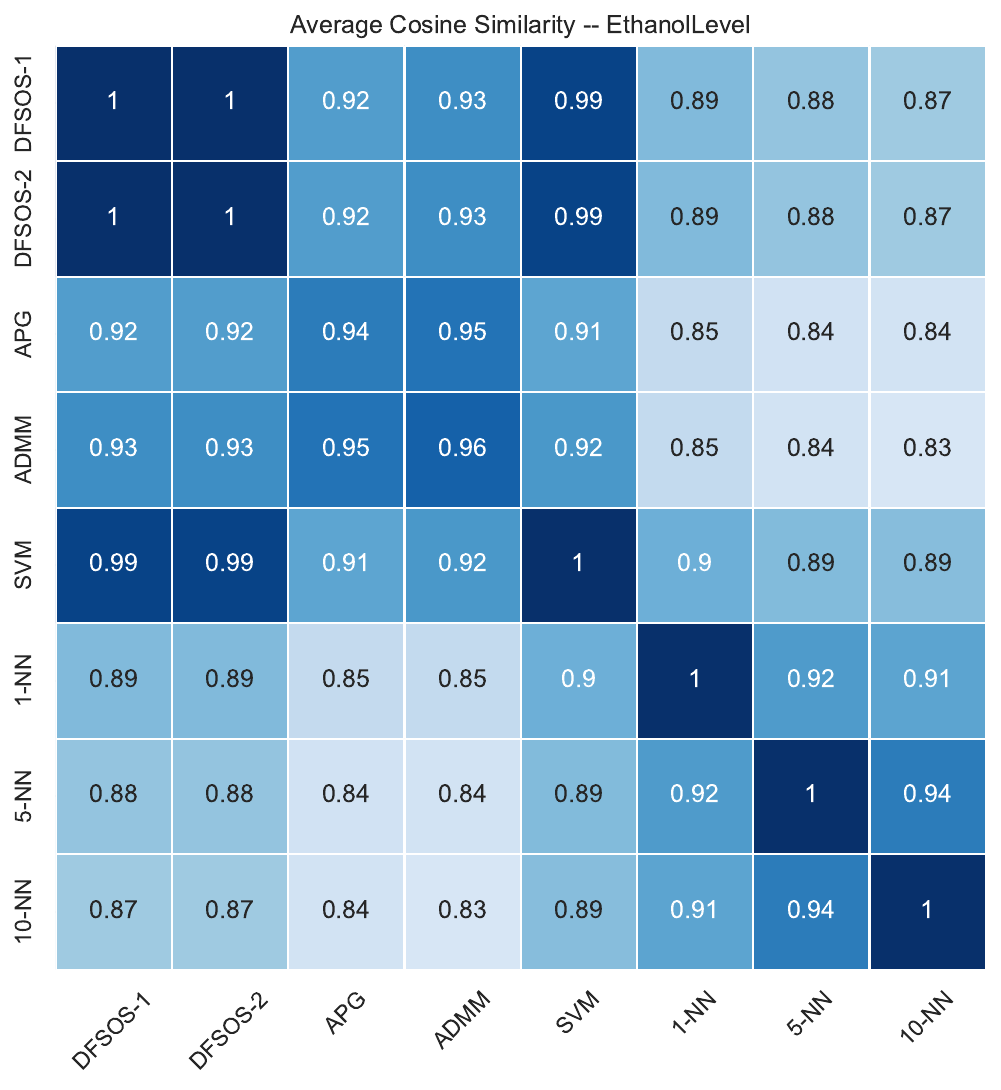}
        \caption{EthanolLevel}
        \label{fig:EthanolLevel}
    \end{subfigure}
        \hfill
    \begin{subfigure}[t]{0.19\textwidth}
        \centering
        \includegraphics[width=0.99\linewidth]{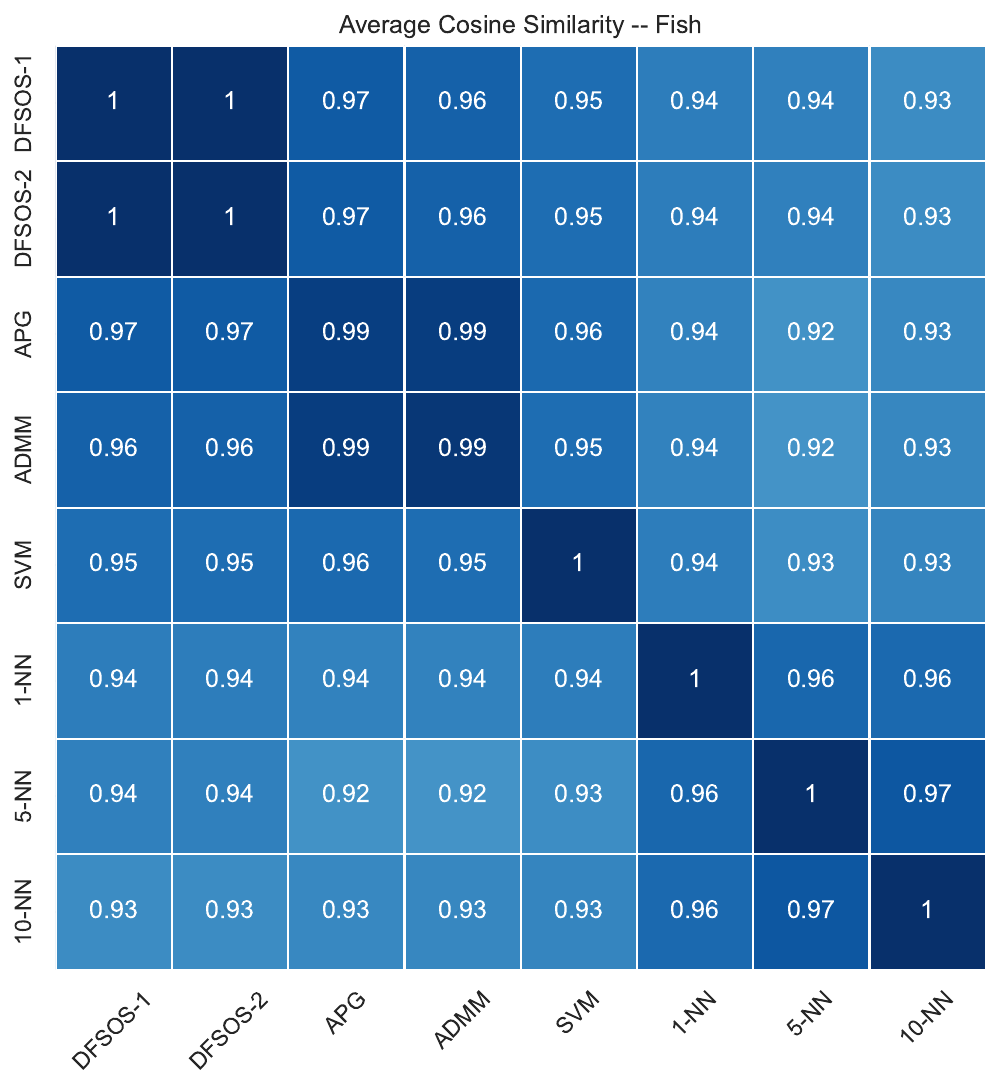}
        \caption{Fish}
        \label{fig:Fish}
    \end{subfigure}
    \hfill
    \begin{subfigure}[t]{0.19\textwidth}
        \centering
        \includegraphics[width=0.99\linewidth]{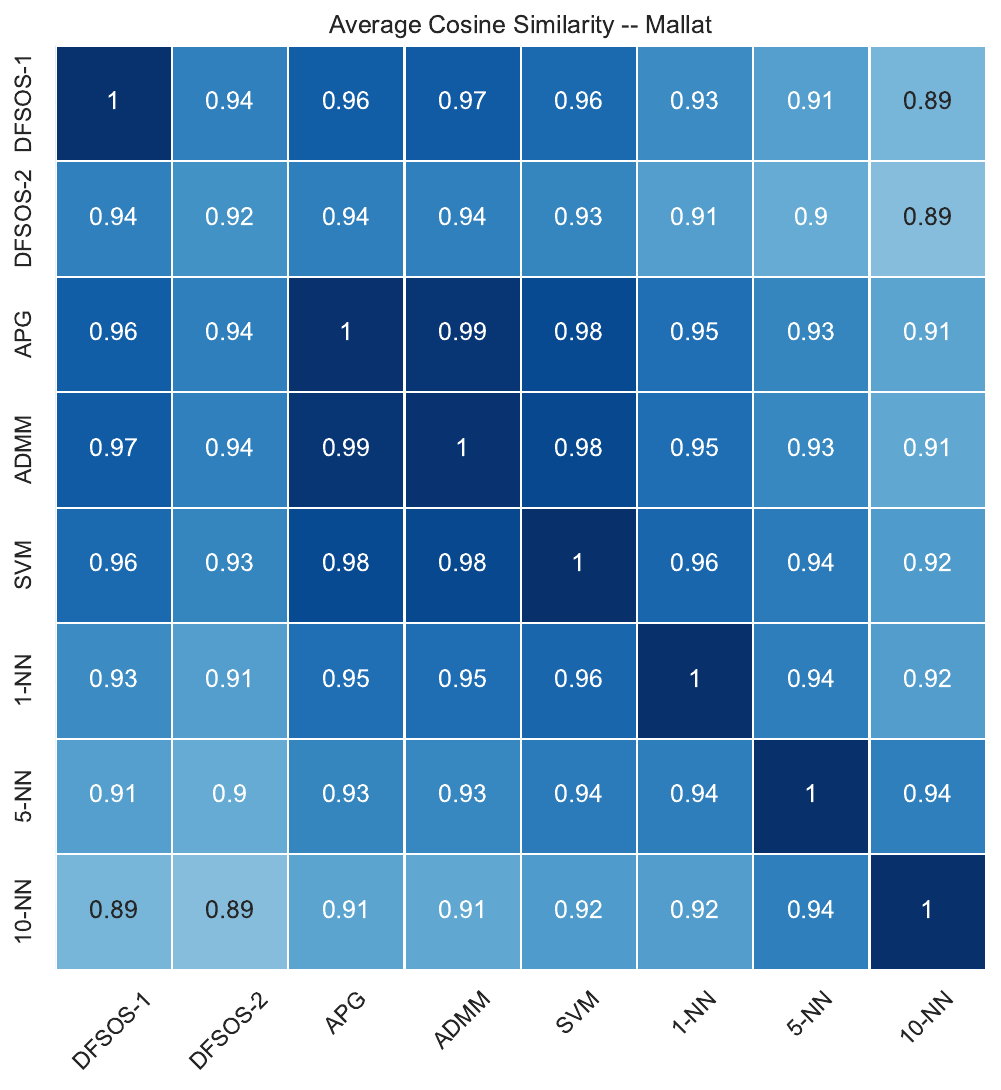}
        \caption{Mallat}
        \label{fig:Mallat}
    \end{subfigure}
    \hfill
    \begin{subfigure}[t]{0.19\textwidth}
        \centering
        \includegraphics[width=0.99\linewidth]{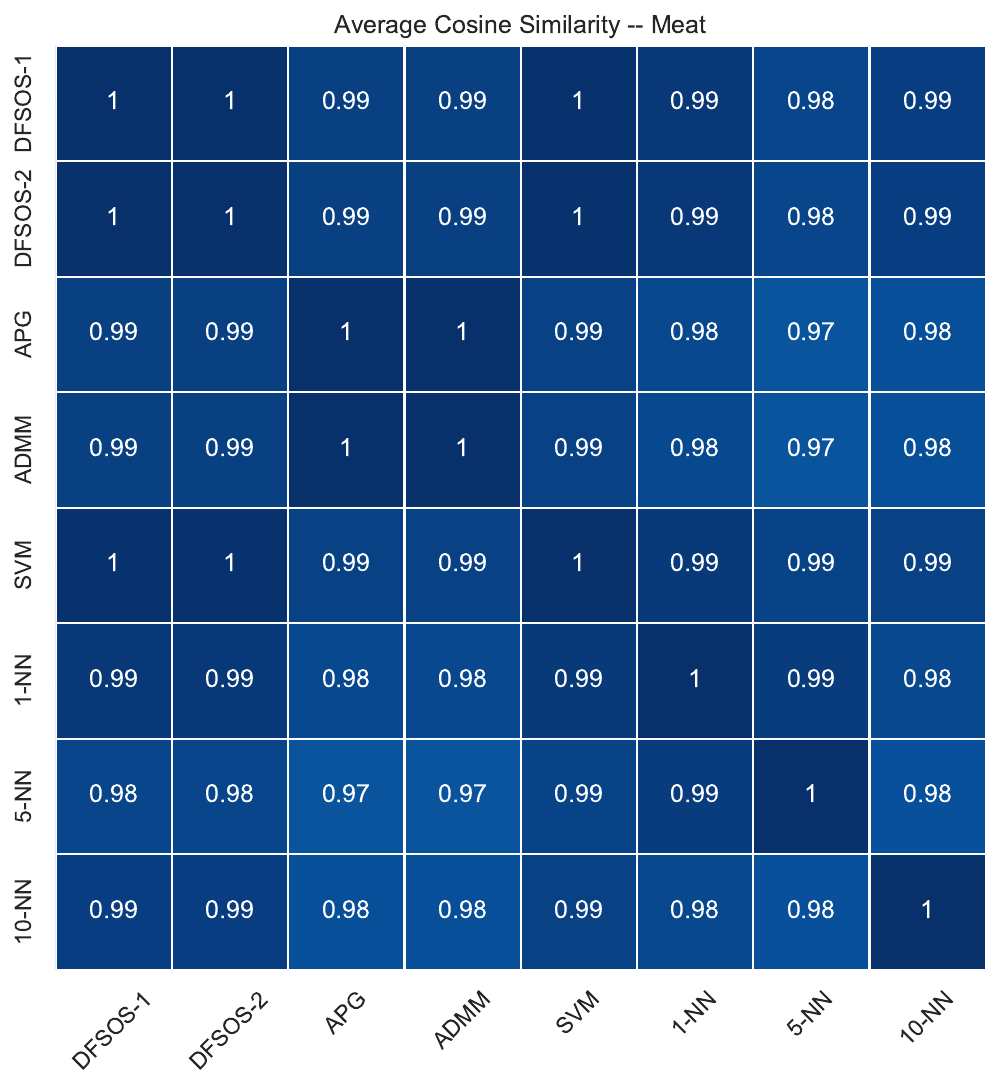}
        \caption{Meat}
        \label{fig:Meat}
    \end{subfigure}
    \hfill
    \begin{subfigure}[t]{0.19\textwidth}
        \centering
        \includegraphics[width=0.99\linewidth]{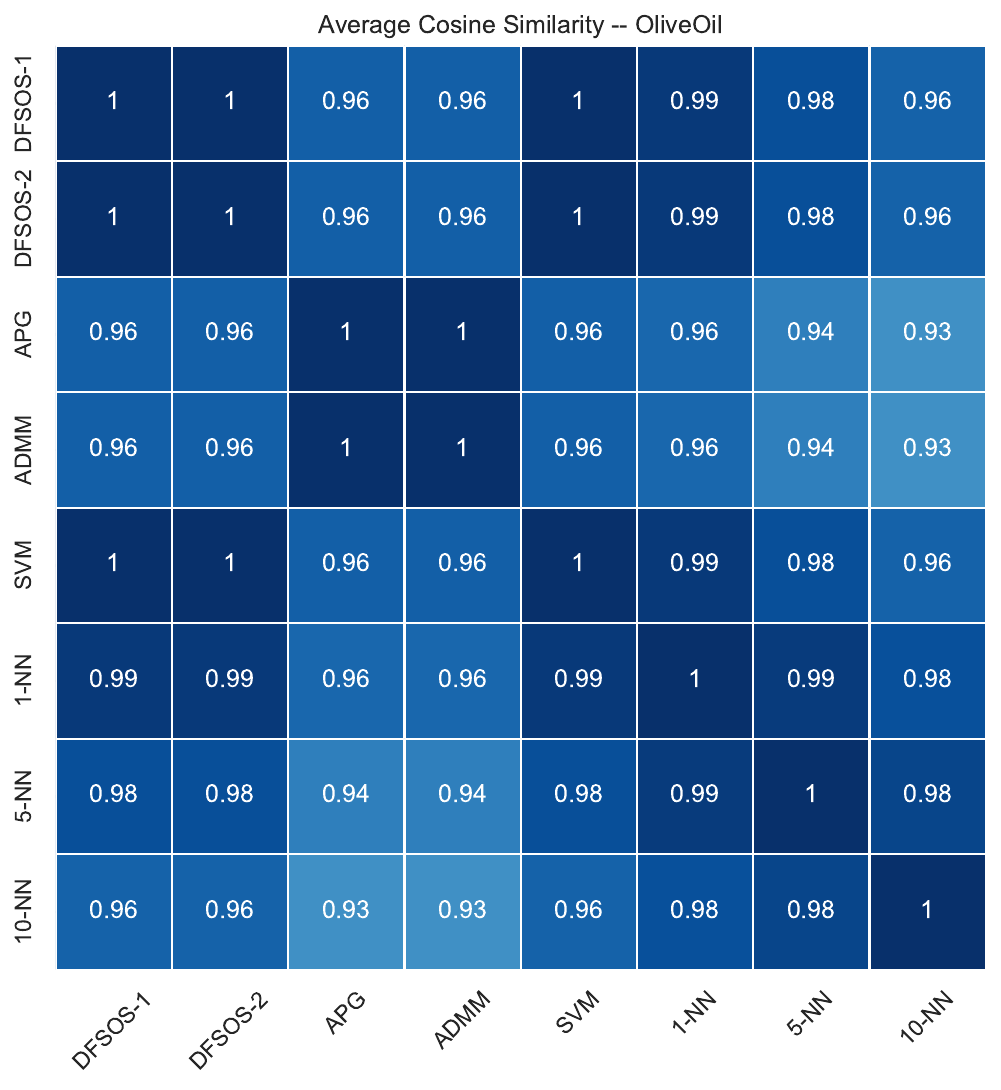}
        \caption{OliveOil}
        \label{fig:OliveOil}
    \end{subfigure}

    \begin{subfigure}[t]{0.19\textwidth}
        \centering
        \includegraphics[width=0.99\linewidth]{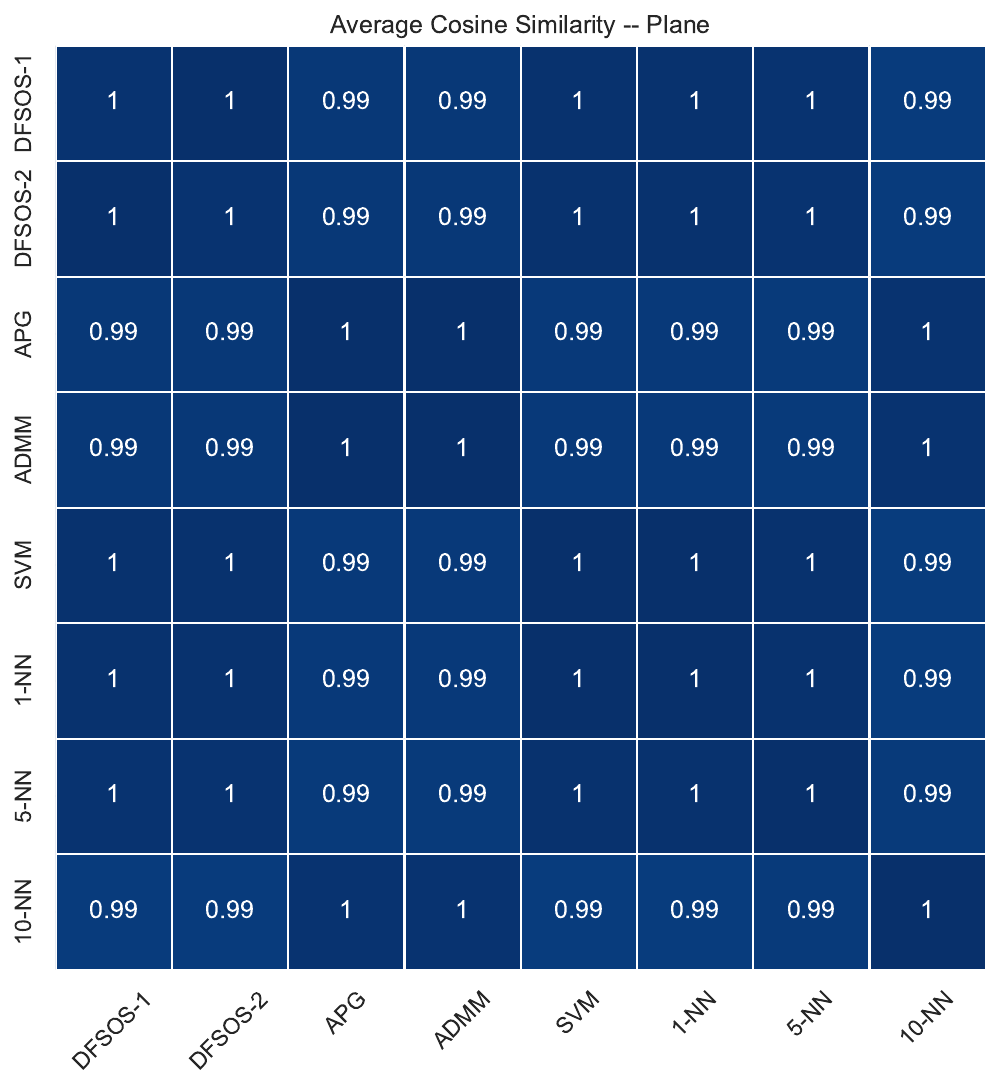}
        \caption{Plane}
        \label{fig:Plane}
    \end{subfigure}
    \hfill
    \begin{subfigure}[t]{0.19\textwidth}
        \centering
        \includegraphics[width=0.99\linewidth]{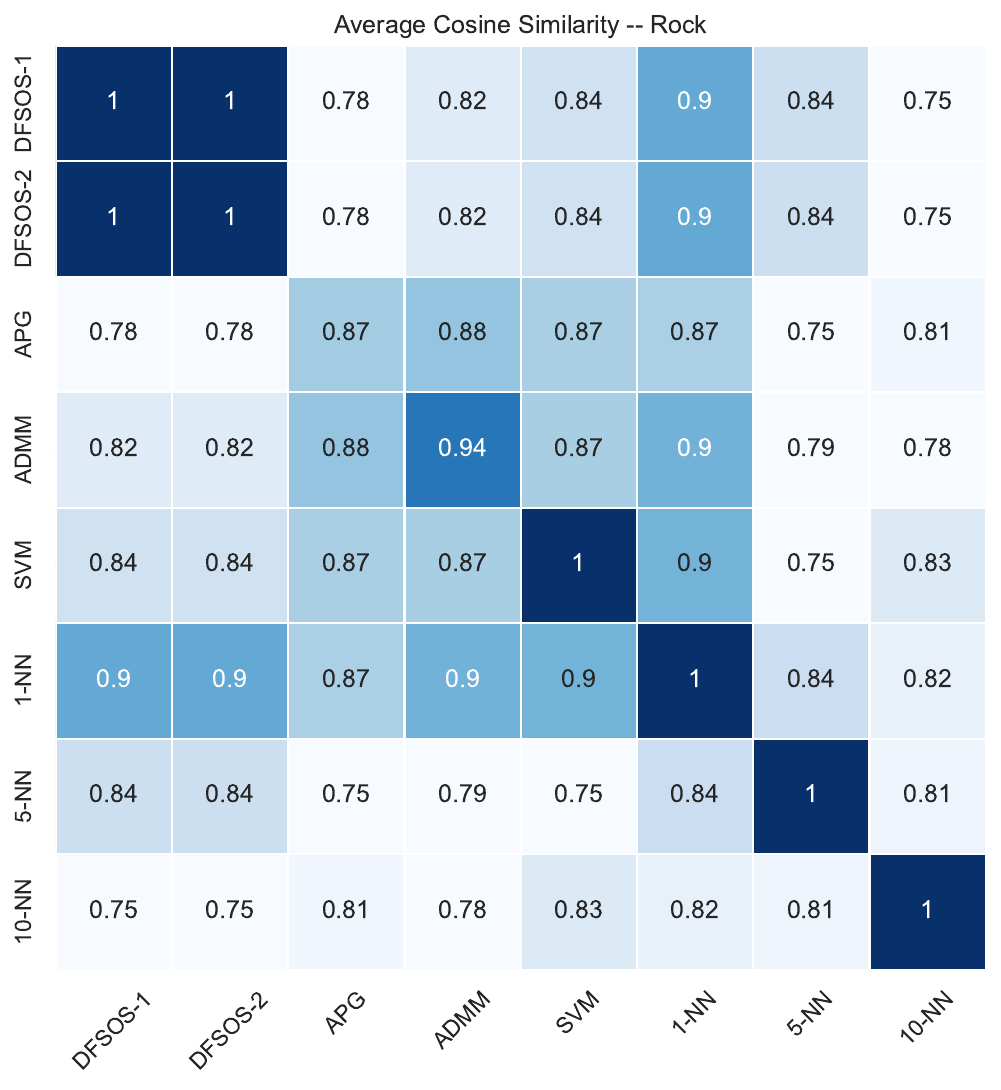}
        \caption{Rock}
        \label{fig:Rock}
    \end{subfigure}
    \hfill
    \begin{subfigure}[t]{0.19\textwidth}
        \centering
        \includegraphics[width=0.99\linewidth]{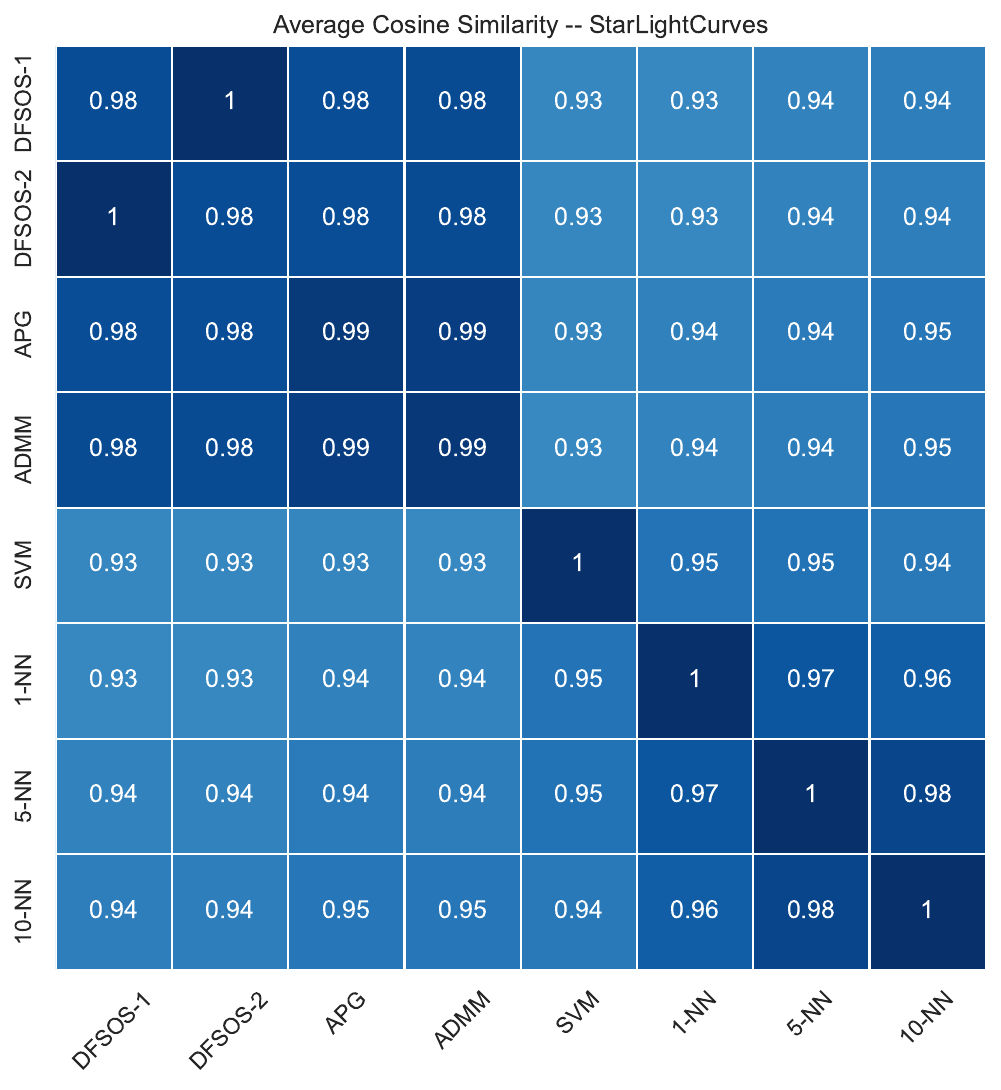}
        \caption{StarLightCurves}
        \label{fig:StarLightCurves}
    \end{subfigure}
        \hfill
    \begin{subfigure}[t]{0.19\textwidth}
        \centering
        \includegraphics[width=0.99\linewidth]{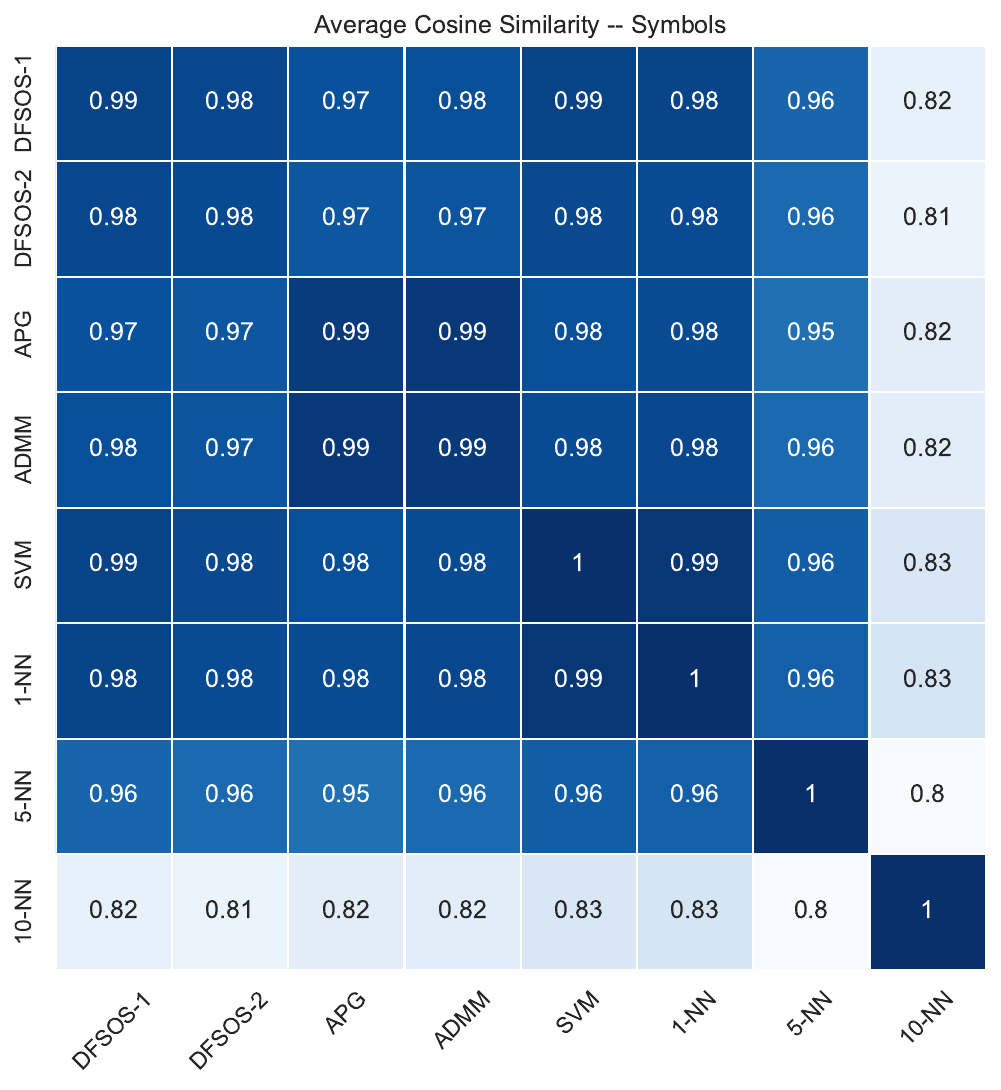}
        \caption{Symbols}
        \label{fig:Symbols}
    \end{subfigure}    
    \hfill
    \begin{subfigure}[t]{0.19\textwidth}
        \centering
        \includegraphics[width=0.99\linewidth]{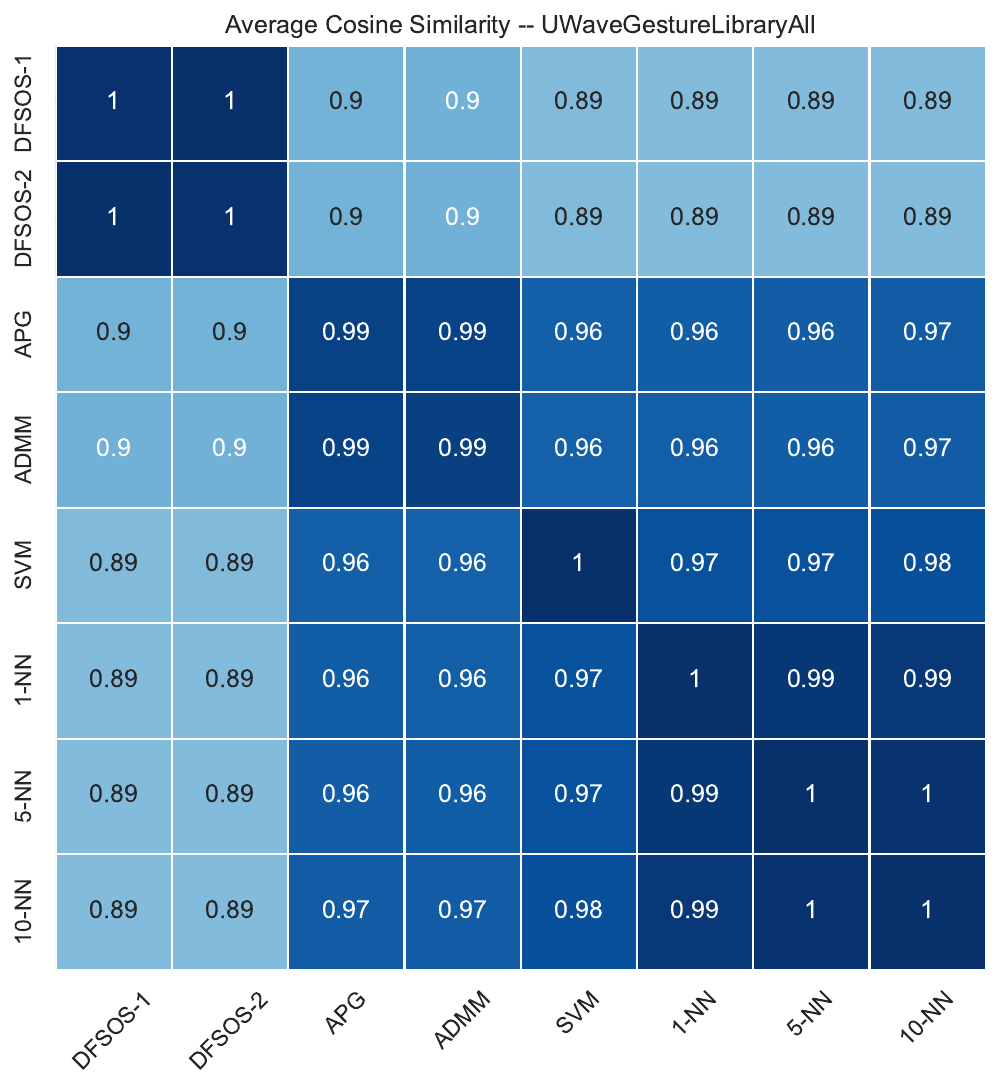}
        \caption{UWaveGestureLibraryAll}
        \label{fig:UWaveGestureLibraryAll}
    \end{subfigure}
        \caption{Average cosine similarity between out-of-sample predictions for each pair of methods. The diagonal entries indicate the average cosine similarity between all pairs of prediction vectors made by each method across all $10$ repetitions for each data set.}

    \label{fig:UCR-predictions}
\end{figure}

%%%%%%%%%%%%%%%%%%%%%%%%%%%%%%%%%%%%%%%%%%%%%%%%%%%%%%%
% Conclusion
%%%%%%%%%%%%%%%%%%%%%%%%%%%%%%%%%%%%%%%%%%%%%%%%%%%%%%%
\section{Conclusion}
We have proposed an algorithmic framework for \emph{deflation-free sparse optimal scoring (DFSOS)}, a novel approach to sparse discriminant analysis that addresses fundamental limitations of existing deflation-based methods. By reformulating the sparse optimal scoring problem with an explicit global orthogonality constraint, DFSOS computes all discriminant vectors simultaneously rather than sequentially, eliminating error propagation inherent in deflation-based approaches.

The proposed algorithm decomposes the problem into tractable subproblems for scoring vectors, discriminant vectors, and orthogonality enforcement. We establish convergence of DFSOS to stationary points of the augmented Lagrangian under mild conditions on the penalty parameter, providing theoretical grounding for the method.
Extensive numerical experiments on both synthetic Gaussian data and real-world time series from the UCR Time-Series Repository demonstrate that DFSOS achieves classification accuracy comparable to or better than existing deflation-based methods (e.g., APG and ADMM variants of ASDA), particularly in challenging settings with high feature correlation.

While DFSOS requires modestly increased computational time compared to ASDA methods (typically 1-2 seconds additional per trial), it delivers statistically significant improvements in classification accuracy across most experimental settings. The method produces slightly denser discriminant vectors than ASDA, reflecting a favorable trade-off between sparsity and discriminative power. Notably, DFSOS achieves classification performance matching state-of-the-art support vector machines on linearly separable data while maintaining the interpretability advantages of sparse linear methods.

The observed increase in computation used by DFSOS compared to the existing ASDA methods suggests an important avenue for potential research: the development of faster algorithms for solution of~\eqref{eq: DFSOS final splitting}. The past few years have seen the development many alternatives to the splitting method of Lai and Osher~\cite{lai2014splitting} for orthogonality constrained optimization; see the recent manuscripts~\cite{absil2008optimization,boumal2023introduction,sato2021riemannian} for comprehensive treatments of optimization on manifolds.
Specialization of these methods to the deflation-free sparse optimal scoring problem~\eqref{eq: DFSOS final splitting} could lead to improved computational efficiency. We plan a rigorous empirical comparison of specialization of a variety of state of the art algorithms for manifold optimization to deflation-free sparse optimal scoring.
Moreover, further study of the convergence and consistency properties of the proposed DFSOS algorithm is needed; for example, analysis of the rate of convergence of DFSOS would greatly improve our understanding of the algorithm.

The consistent improvements observed in our experiments, particularly for highly correlated data, suggest that deflation-free methods offer a more principled and effective approach to sparse discriminant analysis. By directly enforcing global orthogonality constraints rather than sequential deflation, DFSOS provides a coherent framework for feature selection and dimension reduction in high-dimensional classification problems. This demonstrates that the additional computational structure required to handle orthogonality constraints globally yields tangible benefits in classification performance and numerical stability, making deflation-free approaches a valuable addition to the sparse discriminant analysis toolbox.

% %%%%%%%%%%%%%%%%%%%%%%%%%%%%%%%%%%%%%%%%%%%%%%%%%%%%%%%
\subsection*{Acknowledgments}
% %%%%%%%%%%%%%%%%%%%%%%%%%%%%%%%%%%%%%%%%%%%%%%%%%%%%%%%

The authors acknowledge the use of the IRIDIS High Performance Computing Facility, and associated support services at the University of Southampton, in the completion of this work.
B.~Ames was supported in part by the National Science Foundation Grants \#20212554 and \#2108645, as well as the University of Alabama Research Grants RG14678 and RG14838.

%%%%%%%%%%%%%%%%%%%%%%%%%%%%%%%%%%%%%%%%%%%%%%%%%%%%%%%
% Bibliography
%%%%%%%%%%%%%%%%%%%%%%%%%%%%%%%%%%%%%%%%%%%%%%%%%%%%%%%

\bibliography{dfsos_bib}

\end{document}